	\def\BibTeX{{\rm B\kern-.05em{\sc i\kern-.025em b}\kern-.08em
			T\kern-.1667em\lower.7ex\hbox{E}\kern-.125emX}}
	\newtheorem{assumption}{Assumption}
	\newtheorem{lemma}{Lemma}
	\newtheorem{theorem}{Theorem}
	\newtheorem{fact}{Fact}
	\def\bx{\bm{x}}
	\def\by{\bm{y}}
	\def\bw{\bm{w}}
	\def\bg{\bm{g}}
	\def\bG{\bm{G}}
	\def\bxi{\bm{\xi}}
	\def\E{\mathbb{E}}
	\def\tauit{\tau_i(t)}
	\def\etabar{\bar{\eta}}
	\title{Achieving Linear Speedup in Asynchronous Federated Learning with Heterogeneous Clients}
	\author{\IEEEauthorblockN{Xiaolu Wang},
		\IEEEauthorblockN{Zijian Li,~\IEEEmembership{Graduate Student Member, IEEE}},
		\IEEEauthorblockN{Shi Jin,~\IEEEmembership{Fellow, IEEE}},
		and \IEEEauthorblockN{Jun Zhang,~\IEEEmembership{Fellow, IEEE}}
				\thanks{X. Wang, Z. Li, and J. Zhang are with the Department of Electronic and Computer Engineering, The Hong Kong University of Science and Technology, Hong Kong SAR, China (E-mail: eexlwang@ust.hk, zijian.li@connect.ust.hk, eejzhang@ust.hk). 
					S. Jin is with the National Mobile Communications Research Laboratory, Southeast University, Nanjing, China (E-mail: jinshi@seu.edu.cn).
				(Corresponding author: Jun Zhang.)}
	}
\begin{document}
		
		\IEEEtitleabstractindextext{%
			\begin{abstract}
				Federated learning (FL) is an emerging distributed training paradigm that aims to learn a common global model without exchanging or transferring the data that are stored locally at different clients.
				The Federated Averaging (FedAvg)-based algorithms have gained substantial popularity in FL to reduce the communication overhead, where each client conducts multiple localized iterations before communicating with a central server. 
				In this paper, we focus on FL where the clients have diverse computation and/or communication capabilities.
				Under this circumstance, FedAvg can be less efficient since it requires all clients that participate in the global aggregation in a round to initiate iterations from the \textit{latest} global model, and thus the synchronization among fast clients and \textit{straggler clients} can severely slow down the overall training process.
				To address this issue, we propose an efficient asynchronous federated learning (AFL) framework called \textit{Delayed Federated Averaging (DeFedAvg)}. In DeFedAvg, the clients are allowed to perform local training with different stale global models at their own paces.
				Theoretical analyses demonstrate that DeFedAvg achieves asymptotic convergence rates that are on par with the results of FedAvg for solving nonconvex problems. More importantly, DeFedAvg is the first AFL algorithm that provably achieves the desirable \textit{linear speedup} property, which indicates its high scalability.
				Additionally, we carry out extensive numerical experiments using real datasets to validate the efficiency and scalability of our approach when training deep neural networks.
			\end{abstract}
			
			\begin{IEEEkeywords}
				Asynchronous federated learning, edge machine learning, distributed optimization, system heterogeneity, linear speedup.
			\end{IEEEkeywords}
		}
		\maketitle

		% Computer Society journal (but not conference!) papers do something unusual
		% with the very first section heading (almost always called "Introduction").
		% They place it ABOVE the main text! IEEEtran.cls does not automatically do
		% this for you, but you can achieve this effect with the provided
		% \IEEEraisesectionheading{} command. Note the need to keep any \label that
		% is to refer to the section immediately after \section in the above as
		% \IEEEraisesectionheading puts \section within a raised box.

		% The very first letter is a 2 line initial drop letter followed
		% by the rest of the first word in caps (small caps for compsoc).
		% 
		% form to use if the first word consists of a single letter:
		% \IEEEPARstart{A}{demo} file is ....
		% 
		% form to use if you need the single drop letter followed by
		% normal text (unknown if ever used by the IEEE):
		% \IEEEPARstart{A}{}demo file is ....
		% 
		% Some journals put the first two words in caps:
		% \IEEEPARstart{T}{his demo} file is ....
		% 
		% Here we have the typical use of a "T" for an initial drop letter
		% and "HIS" in caps to complete the first word.
		\IEEEraisesectionheading{\section{Introduction}\label{sec:introduction}}
		\IEEEPARstart{F}{ederated} learning (FL) is a distributed machine learning paradigm that enables training models across a network of clients (such as wireless sensors, mobile terminals, and edge devices) while keeping the data localized, under the orchestration of a central server \cite{kairouz2021advances}. 
		FL is particularly useful in mobile edge computing and Internet of Things (IoT) environments where the devices generate data locally. For example, FL enables real-time analytics and decision-making at the edge, empowering intelligent traffic management, environmental monitoring, and resource optimization in the context of smart cities \cite{pandya2023federated}. Besides, in industrial IoT, FL facilitates collaborative learning among sensors and devices, enabling anomaly detection, task scheduling, etc. \cite{abreha2022federated}. 
		
		In classical algorithms for distributed machine learning, e.g., parallel stochastic gradient descent (SGD), multiple training examples are processed simultaneously across different clients \cite{bertsekas2015parallel}. However, it may induce substantial communication overhead, as every local gradient needs to be sent to the server.
		Considering that federated networks may encompass an extensive array of clients, the communication within such networks can be significantly slower than local computation by several orders of magnitude, due to constrained resources like bandwidth, energy, and power \cite{li2020federated}.
		
		To address the pivotal communication bottleneck in FL, \textit{FedAvg} \cite{mcmahan2017communication}, which is also called \textit{local SGD} \cite{stich2019local}, performs multiple iterations locally at the clients before communicating with the server. 
		Compared with parallel SGD, the frequency of communication in FedAvg can be reduced by allowing \textit{local training} in each client. 
		An important feature of FedAvg is that it is able to achieve \textit{linear speedup} with respect to (wrt) the number of participating clients in each communication round \cite{yang2021achieving}, meaning that the communication overhead required to achieve a certain level of optimization accuracy decreases linearly with the increase in computational resources.
		Linear speedup is a highly desirable property in distributed learning, since it signifies the scalability of the optimization process as more clients are added.
		
		Typically, FL is faced with the challenge of \textit{system heterogeneity}, which refers to the variation in hardware, software, and network conditions across the participating clients \cite{zhong2022multi}. 
		In spite of the favorable practical and theoretical properties of FedAvg, a major challenge concerns the time needed to synchronize a communication round.
		In the presence of system heterogeneity, there can be large variations in the computation and communication speeds of different clients \cite{assran2020advances,li2020federated}. As a consequence, the time needed for a round is conditioned to the slowest client, while the faster clients stay idle once they send their local updates to the server. 
		This phenomenon, known as the \textit{straggler effect}, can significantly slow down the overall training process. 
		Motivated by this, a multitude of studies have focused on developing asynchronous federated learning (AFL) algorithms, where the clients are allowed to initiate local training at different times, thereby enhancing the system performance.
		Nevertheless, the introduction of asynchrony adds complexity to the learning dynamics, and existing AFL algorithms have not rigorously attained the desirable linear speedup property. This limitation can potentially restrict the scalability of these AFL algorithms.
		Therefore, it is imperative to address the following question: 
		{\it
			Is it possible to develop an AFL method that effectively mitigates the straggler effect while maintaining the linear speedup property observed in synchronous FL?
		}
		
		\subsection{Our Contributions}
		In this paper, we provide an affirmative answer to the aforementioned question. 
		Specifically, we propose an AFL framework,  called \textit{Delayed Federated Averaging (DeFedAvg)}, for efficient FL under system heterogeneity.
		To fit the FL scenarios with non-independent and identically distributed (non-IID) and independent and identically distributed (IID) data distributions across clients, we consider two types of DeFedAvg, i.e., DeFedAvg-nIID and DeFedAvg-IID. 
		To outline, DeFedAvg possesses the following main characteristics: 
		\begin{itemize}[leftmargin=*]
			\item  \textit{Partial Participation.} Only a subset of clients participate in the global iteration in each round.
			In DeFedAvg-nIID, the server sample $n$ clients uniformly random with replacement; while in DeFedAvg-IID, the server collects and aggregates these local model updates whenever $n$ of them become available.
			\item \textit{Delayed Local Training.} The clients keep receiving the latest global models  from the server and store them into their \textit{receive buffers} with overwriting permitted. They then perform local training, regardless of their participation in the global iteration. 
			The server accepts delayed updates without waiting for the participating clients to train using the latest global model.
			\item \textit{Asynchronous Clients.} Once a client completes its local training, the delayed update is either stored in the \textit{send buffer} until it is selected by the server (in DeFedAvg-nIID) or directly uploaded to the server (in DeFedAvg-IID). 
			The client then initiates new local training using the global model stored in its receive buffer, independent of the computation/communication states of other clients. 
		\end{itemize} 
		In a nutshell, DeFedAvg enhances the training efficiency compared to traditional FedAvg by reducing the server's waiting time and the clients' idle time.
		
		On the theoretical side, we provide the convergence analyses of DeFedAvg for solving smooth and nonconvex objective functions, including deep neural networks that are of particular interest.
		Our analyses reveal that DeFedAvg-nIID (respectively, DeFedAvg-IID) converges at a rate of  $\mathcal{O}({1}/{\sqrt{nT}} + {1}/{KT})$ (respectively, $\mathcal{O}({1}/{\sqrt{nKT}} + {1}/{KT})$) for sufficiently large $T$, 
		where $n$ is the number of participating clients in each round, $K$ is the number of local updates in each round, and $T$ is the total number of communication rounds. These imply the \textit{linear speedup} property wrt $n$.
		Our results demonstrate that incorporating delayed updates in local training effectively mitigates the straggler effect without sacrificing the asymptotic convergence rates.
		To our knowledge, we are the first to achieve the linear speedup guarantees in AFL.
		Furthermore, our theoretical findings for DeFedAvg can be reduced to the latest results for classic FedAvg and asynchronous SGD under different specific setups, adding to the versatility and applicability of our technical contributions.
		
		On the empirical side, we validate the effectiveness of DeFedAvg through extensive experiments involving the training of neural networks on real-world datasets. We compare its performance with that of the latest distributed learning algorithms, serving as a benchmark for evaluation. These experiments provide empirical evidence for the benefits and advantages of DeFedAvg in practical scenarios.

		\subsection{Related Works}
		\subsubsection{Linear Speedup Analyses of FedAvg Algorithms}
		Numerous variants of FedAvg have emerged in the literature since the groundbreaking study by McMahan et al. \cite{mcmahan2017communication}. In these algorithms, the main feature lies in the local training strategies, which aim at mitigating the communication costs involved in FL. 
		In the non-IID setting, the convergence rates of FedAvg have been established in \cite{haddadpour2019local} for problems satisfying the Polyak-{\L}ojasiewicz condition, and in \cite{wang2021cooperative,yang2021achieving} for general nonconvex problems.
		In the IID setting, the convergence of FedAvg algorithms has been studied in \cite{stich2019local,stich2020error,khaled2020tighter} for (strongly) convex problems, and in \cite{zhou2018convergence,yu2019parallel,stich2020error} for nonconvex problems. 
		More complicated variants of FedAvg include \cite{yu2019linear}, which employs momentum techniques to accelerate convergence, and \cite{karimireddy2020scaffold}, which incorporates control variates for variance reduction.
		The recent works by \cite{gu2021fast,wang2023linear} adopts a strategy that aggregates (possibly stale) stochastic gradients from \textit{all} clients in each round. 
		However, these approaches requires the server to wait for active clients to complete local computation using the \textit{current} global model.  
		While these algorithms achieve comparable convergence rates with linear speedup wrt the number of participating clients, they are all synchronous and rely on (a subset of) clients uploading up-to-date intermediate training results in each communication round. This makes them vulnerable to stragglers and essentially different from our DeFedAvg.
		
		\subsubsection{Asynchronous SGD Algorithms}
		There are various asynchronous variants of parallel SGD where clients operate at their own speeds without the need for synchronization, resulting in the server aggregating stale stochastic gradients. Under the IID setting, the linear speedup convergence of asynchronous SGD has been established in \cite{agarwal2011distributed} for convex objectives, in \cite{lian2015asynchronous,dutta2018slow} for nonconvex objectives, and in \cite{arjevani2020tight} for convex quadratic objectives.
		In the non-IID setting, \cite{gao2021provable} achieves linear speedup by assuming identical staleness of the aggregated gradients. On the other hand, \cite{koloskova2022sharper} provides the convergence analysis of asynchronous SGD for nonconvex objectives, while the linear speedup is not attained. 
		It is worth noting that, unlike DeFedAvg (as well as FedAvg), asynchronous SGD methods involve clients solely responsible for gradient computation without performing local training. As a result, asynchronous SGD methods may require more frequent communication with the server compared to DeFedAvg.
		
		\subsubsection{AFL Algorithms}\label{sec:AFL}
		AFL algorithms share similarities with asynchronous SGD, while AFL generally involves local training on the clients.
		The recent work called FedBuff \cite{nguyen2022federated} has demonstrated the state-of-the-art empirical performance in AFL under non-IID data. It adopts a similar strategy to our DeFedAvg, where the server's iteration occurs only when a specific subset of clients' delayed updates arrive. However, unlike the uniform client sampling in DeFedAvg-nIID, FedBuff caches local updates from \textit{any} $n$ active clients in each round.
		Based on the FedBuff framework, another AFL algorithm that caches all clients' historical updates was proposed in \cite{wang2023tackling}. While the convergence rate of this algorithm seems to achieve linear speedup, its theoretical analysis is intrinsically based on highly demanding assumptions. We will delve into a further discussion on this matter in Section \ref{sec:theory}.
		In several other works on AFL, such as \cite{xie2019asynchronous, chen2020asynchronous, fraboni2023general}, the server immediately updates the global model upon receiving a local model from a client; while in the most recently proposed AFL algorithms, called QuAFL \cite{zakerinia2022quafl} and FAVANO \cite{leconte2023favas}, a different asynchrony model is adopted, where the server can interrupt the clients to request their intermediate local updates.
		Besides, there is another line of research on AFL that focuses on the semi-decentralized edge systems \cite{ma2021fedsa,sun2023semi}.
		Although the existing papers on AFL generally provide convergence analyses on nonconvex problems, the linear speedup property has still not been rigorously attained under standard assumptions.
		
		\section{Delayed Federated Averaging}\label{sec:algo}
		The goal of FL is typically cast as minimizing the following global objective function:
		\begin{align}
			\min_{\bw \in \mathbb{R}^d} F(\bw) \coloneqq \frac{1}{N} \sum_{i=1}^{N} F_i(\bw),
			\label{eq:prob}
		\end{align}
		where $F_i(\bw) \coloneqq \E_{\bxi_i \sim \mathcal{D}_i} [f(\bw;\bxi_i)]$ is the local objective function and $N$ is the number of clients.
		Here, ${\cal D}_i$ represents the data distribution accessible by client $i$ supported on sample space $\Xi_i$ and $f( \cdot ; \bxi_i ): \mathbb{R}^d \rightarrow \mathbb{R}$ is client $i$'s loss function that is continuously differentiable for any given data sample $\bxi_i \in \Xi_i$.
		In this paper, we focus on the situations in which the $N$ clients are heterogeneous and can communicate with the central server without peer-to-peer communication.
		
		FedAvg has garnered great interest in the field of FL lately, and there exist a batch of distributed algorithms  in the literature that can be seen as variants of FedAvg, e.g., \cite{mangasarian1995parallel,zinkevich2010parallelized,stich2019local,stich2020error,karimireddy2020scaffold,yang2021achieving,li2020federatedoptimization,wang2020tackling}. 
		The general principle of FedAvg algorithms lies in performing iterations for multiple steps with the local data at the clients, and then aggregating the model updates at the central server.
		Here, let us recap a generic version of FedAvg that was introduced in \cite[Section 3]{karimireddy2020scaffold}. 
		Specifically, in communication round $t$ ($t \geq 0$), the server samples a client set $\mathcal{I}_t$ with $\mathcal{I}_t = n$ uniformly with replacement out of the $N$ clients, and then the $n$ participating clients start with $\bw_i^{t,0} = \bw^t$ and parallelly execute $K$-step SGD during round $t$ according to the following update rule:
		\begin{align}
			\bw_i^{t,k+1} &= \bw_i^{t,k} - \etabar \nabla f ( \bw_i^{t,k}; \bxi_i^{t,k} )
			\label{eq:localsgd1}
		\end{align}
		for $k = 0,1,\dots,K-1$. In the midst, $\bxi_i^{t,k}$ are independently sampled according to distribution $\mathcal{D}_i$ and $\etabar>0$ is the local learning rate.
		Once the $n$ clients finish their local training and upload their last iterates $\{\bw_i^{t,K}: i \in \mathcal{I}_t\}$, the server proceeds with the global iteration using the following update rule:
		\begin{align}
			\bw^{t+1} = \bw^t - \frac{\eta}{n} \sum_{i \in \mathcal{I}_t} (\bw_i^{t,K} - \bw^t),
			\label{eq:localsgd2}
		\end{align}
		where $\eta>0$ is the global learning rate. 
		Note that the FedAvg procedures \eqref{eq:localsgd1}--\eqref{eq:localsgd2} with $\eta=1$ reduce to the vanilla FedAvg algorithm proposed by the seminal work \cite{mcmahan2017communication}.
		
		Equipped with the description of this (synchronous) FedAvg, we present the algorithmic details of DeFedAvg for solving Problem \eqref{eq:prob} under both the non-IID ($\mathcal{D}_i \neq \mathcal{D}_j$ for $i \neq j$) and IID ($\mathcal{D}_i = \mathcal{D}_j$ for all $i,j \in [N]$) settings.
		
		\subsection{DeFedAvg-nIID}\label{sec:alg-hetero}
		We first introduce DeFedAvg-nIID for solving Problem \eqref{eq:prob} with non-IID local data distributions $\mathcal{D}_1,\dots,\mathcal{D}_N$, which is common in most FL applications. 
		We describe the pseudo-code of DeFedAvg-nIID in Algorithm \ref{algo2}, in which the clients and the server work in a parallel manner. 
		
		In the initial round $t=0$ of DeFedAvg-nIID, the server initializes a global model $\bw^0$ and broadcasts it to the \textit{receive buffers} of all the clients. Each client fetches the global model $\bw^0$ from their receive buffer and perform parallel local training starting from $\bw^0$.
		Within a general round $t$ ($t \geq 0$), we elaborate Algorithm \ref{algo1} in the following.
		
		\subsubsection{Clients' Procedures}
		Once the receive buffer of a client is non-empty, it can retrieve the global model $\bw$ (ignoring the superscript for now) from the receive buffer and commence local training using its local data. 
		At the same time, the receive buffer may be updated with the server's latest global model.
		The completed local model updates (which will be specified later) are then placed into the \textit{send buffer}. As a result, each client perform asynchronous local training at its own pace without the needs to be selected by the server and synchronized with other clients.
		
		Suppose that client $i$ is selected by the server at the beginning of round $t$. The latest global model at the receive buffer is $\bw^t$, while it is possible that the client's local training has been initiated from a stale version of the global model, denoted by $\bw^{\tauit}$. 
		Here, $\tauit$ is the round index of the global model that client $i$ has used in local training in round $t$, and thus we have $0 \leq \tauit \leq t$.
		Specifically, the initial local iterate is set as $\bw_i^{\tauit,0} \coloneqq \bw^{\tauit}$, then the local sequence $\bw_i^{\tauit,1},\dots,\bw_i^{\tauit,K}$ is generated through the following $K$-step SGD iterations ($K \geq 1$):
		\begin{align}
			\bw_i^{\tauit,k+1} &= \bw_i^{\tauit,k} - \etabar \nabla f ( \bw_i^{\tauit,k}; \bxi_i^{t,k} )
			\label{eq:local-iter'}
		\end{align}
		for $k = 0,1,\dots,K-1$, where $\etabar>0$ is the local learning rate and $\bxi_i^{t,k}$'s are independently sampled from $\mathcal{D}_i$.
		Then, client $i$ place its local model update, defined as
		\begin{align}
			\bm{\Delta}_i^t \coloneqq \bw_i^{\tauit,0} - \bw_i^{\tauit,K},
			\label{eq:delta}
		\end{align}
		into the send buffer before uploading to the server.
		
		\begin{algorithm}[t]
			\caption{DeFedAvg-nIID}
			\label{algo2} 
			\vspace{1mm}
			{\sf Client $i$ executes:}
			\begin{algorithmic}[1]
				\STATE \textbf{Input:} $K \in \mathbb{Z}_+$, $\etabar > 0$
				\IF{the receive buffer is nonempty}
				\STATE Retrieve $\bw$ from the receive buffer and set $\bw_i \leftarrow \bw$
				\FOR{$k=0,1,\dots,K-1$}
				\STATE Sample $\bxi_i \sim \mathcal{D}_i$
				\STATE Set $\bw_i \leftarrow \bw_i - \etabar \nabla f (\bw_i; \bxi_i)$.
				\ENDFOR
				\STATE Set $\bm{\Delta}_i = \bw - \bw_i$ and update the send buffer with $\bm{\Delta}_i$
				\ENDIF
				\IF{client $i$ is selected by the server}
				\WHILE{the send buffer is nonempty}
				\STATE Send $\bm{\Delta}_i$ to the server 
				\ENDWHILE
				\ENDIF
			\end{algorithmic}
			{\sf Server executes:}
			\begin{algorithmic}[1] 
				\STATE \textbf{Input:} $\bw^0 \in \mathbb{R}^d$, $n,T \in \mathbb{Z}_+$, $\eta > 0$
				\STATE Broadcast $\bw^0$ all the clients
				\FOR{$t=0,1,\dots,T-1$}
				\STATE Sample $n$ clients out of $[N]$ uniformly with replacement, and form a set $\mathcal{I}_t$
				\STATE Collect the local updates $\bm{\Delta}_i$ from all clients $i \in \mathcal{I}_t$ 
				\STATE Set $\bm{\Delta}_i^t \leftarrow \bm{\Delta}_i$ for $i \in \mathcal{I}_t$
				\STATE Aggregation: $\bm{\Delta}^t \leftarrow \frac{1}{n} \sum_{i \in \mathcal{I}_t} \bm{\Delta}_i^t$
				\STATE Global iteration: $\bw^{t+1} \leftarrow \bw^t - \eta \bm{\Delta}^t$
				\STATE Broadcast $\bw^{t+1}$ to all the $N$ clients
				\ENDFOR
			\end{algorithmic}
		\end{algorithm}
		
		\subsubsection{Server's Procedures}
		When the clients have diverse communication/computation speeds, the deterministic client participation scheme employed in many FL algorithms \cite{li2019convergence,nguyen2022federated} tends to favor fast clients, as the fast clients may dominate the global aggregation compared to the slow ones throughout the training process. This can result in a biased exploitation of the local information in $F_i$'s, potentially impeding the convergence of these algorithms.
		Therefore, we adopt a random client selection scheme in DeFedAvg-nIID, where the server samples clients uniformly with replacement $n$ independent times in each round.
		The selected clients upload their local updates if their send buffers are nonempty, while the remaining clients continue their ongoing work. 
		We use $\mathcal{I}_t$ to denote the random set of the selected clients in round $t$. Since each client can be selected more than once, $\mathcal{I}_t$ is a multiset that may contain repeated elements and $|\mathcal{I}_t| \leq n$.
		Once the clients in $\mathcal{I}_t$ have uploaded their local updates $\bm{\Delta}_i^t$'s, the server proceeds to create a new global model $\bw^{t+1}$ as follows:
		\begin{align}
			\bw^{t+1} = \bw^t - \eta \bm{g}_{\textnormal{nIID}}^t,
			\label{eq:global-iter'}
		\end{align}
		where $\eta>0$ is the global learning rate, $\bm{\Delta}_i^t$ is client $i$'s local update that will be specified later, and
		\begin{align}
			\bm{g}_{\textnormal{nIID}}^t \coloneqq \frac{1}{|\mathcal{I}_t|} \sum_{i \in \mathcal{I}_t} \bm{\Delta}_i^t.
			\label{eq:g-niid}
		\end{align}
		Then, $\bw^{t+1}$ is broadcast to all the $N$ clients' \textit{receive buffers}. 
		As a result of formulas \eqref{eq:local-iter'}--\eqref{eq:g-niid}, the global iterations of the DeFedAvg-nIID algorithm can be expressed compactly as:
		\begin{align}
			\bw^{t+1} = \bw^t - \frac{\eta \etabar}{n} \sum_{i \in \mathcal{I}_t} \sum_{k=0}^{K-1} \nabla f ( \bw_i^{\tauit,k}; \bxi_i^{t,k} )
			\label{eq:compact}
		\end{align}
		for $t = 0,1,\dots, T-1$. In this iteration formula, the staleness in the aggregated gradients reflects the asynchrony property, and the uniform randomness in $\mathcal{I}_t$ plays a crucial role in ensuring linear speedup with non-IID data.
		
		\begin{algorithm}[t]
			\caption{DeFedAvg-IID}
			\label{algo1}
			\vspace{1mm} 
			{\sf Client $i$ executes:}
			\begin{algorithmic}[1] 
				\STATE \textbf{Input:} $K \in \mathbb{Z}_+$, $\etabar > 0$
				\IF{the receive buffer is nonempty}
				\STATE Retrieve $\bw$ from the receive buffer and set $\bw_i \leftarrow \bw$
				\FOR{$k=0,1,\dots,K-1$}
				\STATE Sample $\bxi_i \sim \mathcal{D}$
				\STATE  Set $\bw_i \leftarrow \bw_i - \etabar \nabla f (\bw_i; \bxi_i)$.
				\ENDFOR
				\STATE Send $\bm{\Delta}_i = \bw - \bw_i$ to the server
				\ENDIF
			\end{algorithmic}
			{\sf Server executes:}
			\begin{algorithmic}[1]
				\STATE \textbf{Input:} $\bw^0 \in \mathbb{R}^d$, $n,T \in \mathbb{Z}_+$, $\eta > 0$
				\STATE Broadcast $\bw$ to all the clients
				\FOR{$t=0,1,\dots,T-1$} 
				\STATE Collect $n$ local model updates $\bm{\Delta}_i$'s from a set $I_t$ of clients that first complete local training
				\STATE Set $\bm{\Delta}_i^t \leftarrow \bm{\Delta}_i$ for $i \in I_t$
				\STATE Aggregation: $\bm{\Delta}^t \leftarrow \frac{1}{n} \sum_{i \in {I}_t} \bm{\Delta}_i^t$
				\STATE Global iteration: $\bw^{t+1} \leftarrow \bw^t - \eta \bm{\Delta}^t$
				\STATE Broadcast $\bw^{t+1}$ to all the $N$ clients
				\ENDFOR
			\end{algorithmic}
		\end{algorithm}
		
		\subsection{DeFedAvg-IID}\label{sec:alg-iid}
		In this subsection, we consider the IID setting where each client has a local dataset that is representative of the overall population and thus the local loss functions are the same ($F_i = F_j$ for all $i,j \in [N]$).
		In edge computing environments, clients such as sensors or cameras may be deployed in a controlled environment or have similar characteristics. For instance, in a smart city application where traffic monitoring cameras contribute to FL, the data collected from these cameras might be IID if they are installed in similar locations and conditions. 
		Besides, in certain IoT applications, such as smart home clients or wearable devices produced by the same manufacturer, the data collected from these clients may exhibit similar patterns and characteristics. If the clients are part of a homogeneous group, the data distributions may be relatively consistent, assuming similar usage patterns.
		
		While DeFedAvg-nIID can be applied to such a scenario, its uniform client sampling scheme is conservative because it may overlook faster clients that complete local training but are not selected by the server.
		Therefore, we develop DeFedAvg-IID to further enhance the system efficiency.
		Formally, we assume that the local data distribution $\mathcal{D}_i \coloneqq \mathcal{D}$ for all $i\in[N]$ with $\mathcal{D}$ supported on some sample space $\Xi$. Then, Problem \eqref{eq:prob} can be simplified as follows:
		\begin{align}
			\min_{\bw \in \mathbb{R}^d} F(\bw) = \E_{\bxi\sim\mathcal{D}} [f(\bw;\bxi)].
			\label{eq:prob-iid}
		\end{align}
		The pseudo-code of DeFedAvg-IID is described in Algorithm \ref{algo1}.
		The warm-up phase of DeFedAvg-IID remains the same as described in Section \ref{sec:alg-hetero}, and we focus on the key distinctions in terms of the clients' and the server's procedures. 
		
		\subsubsection{Clients' Procedures}
		The clients' local iteration formula of DeFedAvg-IID is the same as that of DeFedAvg-nIID given by \eqref{eq:local-iter'}, while the different thing is that the clients do not need to have send buffers. Once client $i$'s local training is completed, the local model update $\bm{\Delta}_i^t$ shall be directly sent to the server without being selected.
		Subsequently, if the receive buffer is nonempty, the client proceeds with new local training by fetching the latest global model from the receive buffer. 
		
		\subsubsection{Server's Procedures} 
		In DeFedAvg-IID, the server does not actively sample clients randomly, while cache \textit{arbitrary} $n$ clients' local model updates that first arrive. 
		On the one hand, the dominance of fast clients during the training process is not an issue, as the data samples from fast and slow clients are equally representative of the overall population. 
		On the other hand, such an arbitrary client participation scheme fully exploits the computation/communication capability of fast clients, which potentially reduces the server's waiting time in DeFedAvg-nIID.
		Formally speaking, we let $I_t$ be the set of participating clients in round $t$ with $|I_t| = n$. We remark that $I_t$ does not contain randomness, in contrast to the random set $\mathcal{I}_t$ (denoted in calligraphic) that was defined earlier. Similar to \eqref{eq:global-iter'}, the server creates a new global model $\bw^{t+1}$ as follows:
		\begin{align}
			\bw^{t+1} = \bw^t - \eta \bm{g}_{\textnormal{IID}}^t,
			\nonumber
		\end{align}
		where $\eta>0$ is the global learning rate, $\bm{\Delta}_i^t$ is client $i$'s local model update that is same as \eqref{eq:delta}, and 
		$\bm{g}_{\textnormal{IID}}^t \coloneqq \frac{1}{|I_t|} \sum_{i \in I_t} \bm{\Delta}_i^t$.
		Then, $\bw^{t+1}$ is broadcast to all the $N$ clients' \textit{receive buffers}. 
		The global iterations of DeFedAvg-IID can be compactly expressed as:
		\begin{align*}
			\bw^{t+1} = \bw^t - \frac{\eta \etabar}{n} \sum_{i \in I_t} \sum_{k=0}^{K-1} f ( \bw_i^{\tauit,k}; \bxi_i^{t,k} )
		\end{align*}
		for $t = 0,1,\dots, T-1$, which is same as \eqref{eq:compact} expect that the stale gradients are aggregated over non-random sets $I_t$.
		
		\begin{figure}[t]
			\centering
			\begin{subfigure}{0.48\textwidth}
				\centering
				\includegraphics[width=\linewidth]{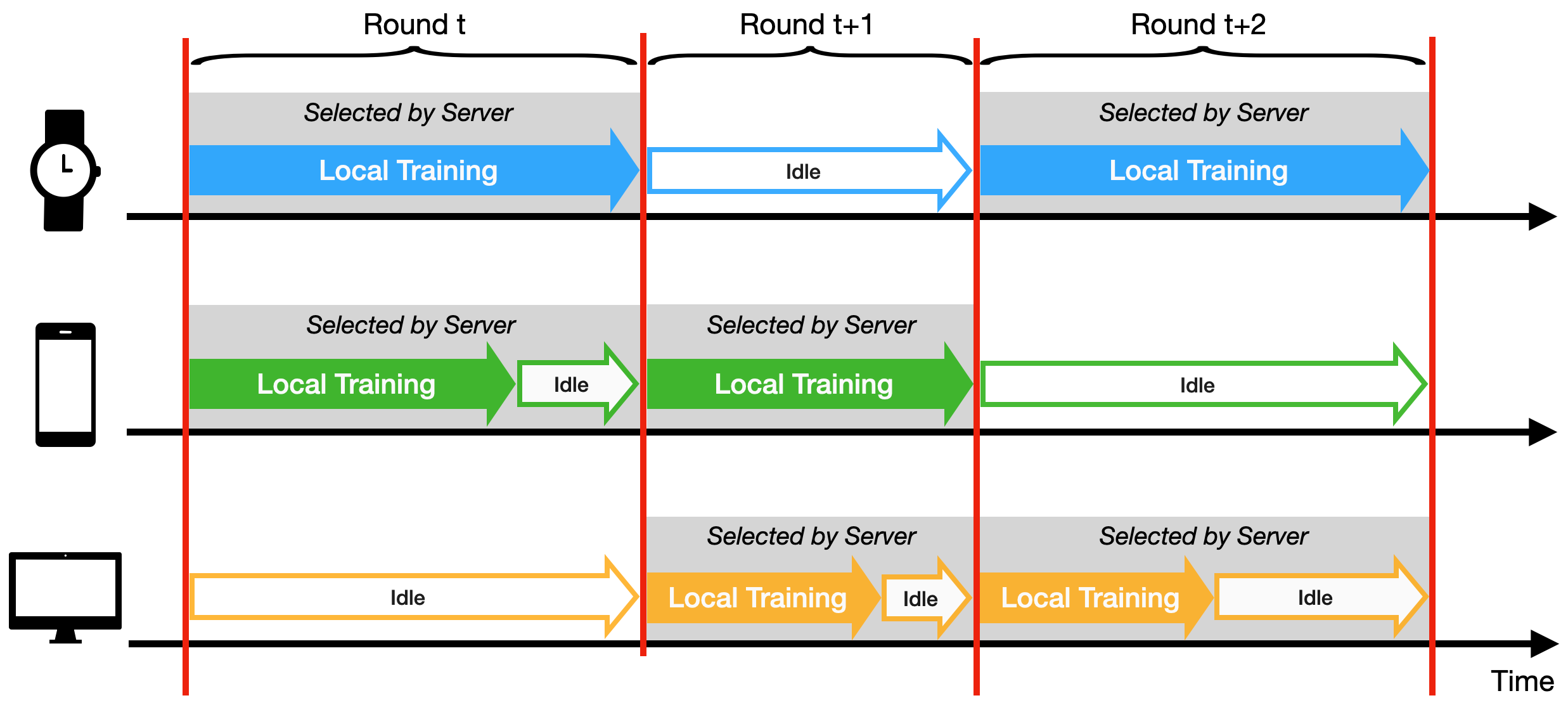}
				\caption{Clients' local training in FedAvg.}
				\vspace{3mm}
				\label{fig:fedavg}
			\end{subfigure}
			
			\begin{subfigure}{0.48\textwidth}
				\centering
				\includegraphics[width=\linewidth]{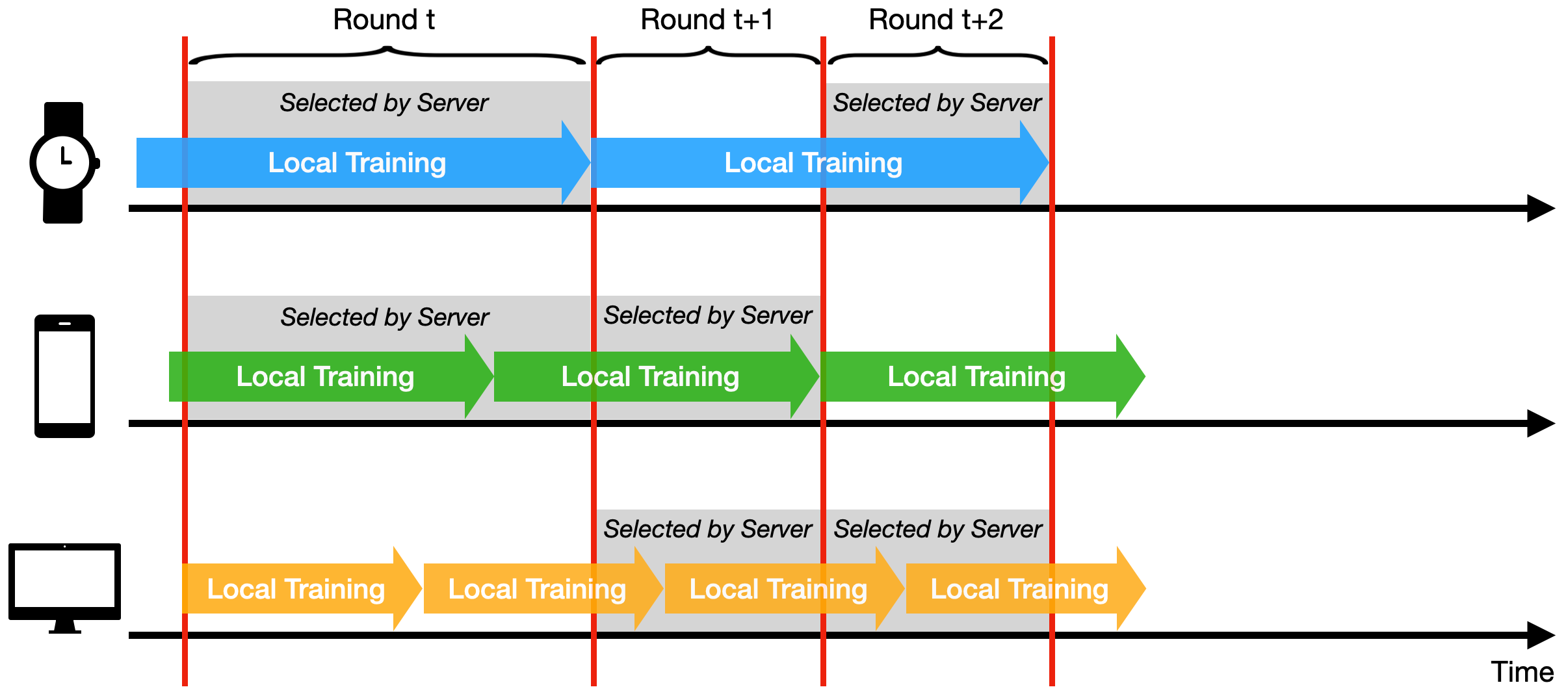}
				\caption{Clients' local training in DeFedAvg-nIID.}
				\vspace{3mm}
			\end{subfigure}
			
			\begin{subfigure}{0.48\textwidth}
				\centering
				\includegraphics[width=\linewidth]{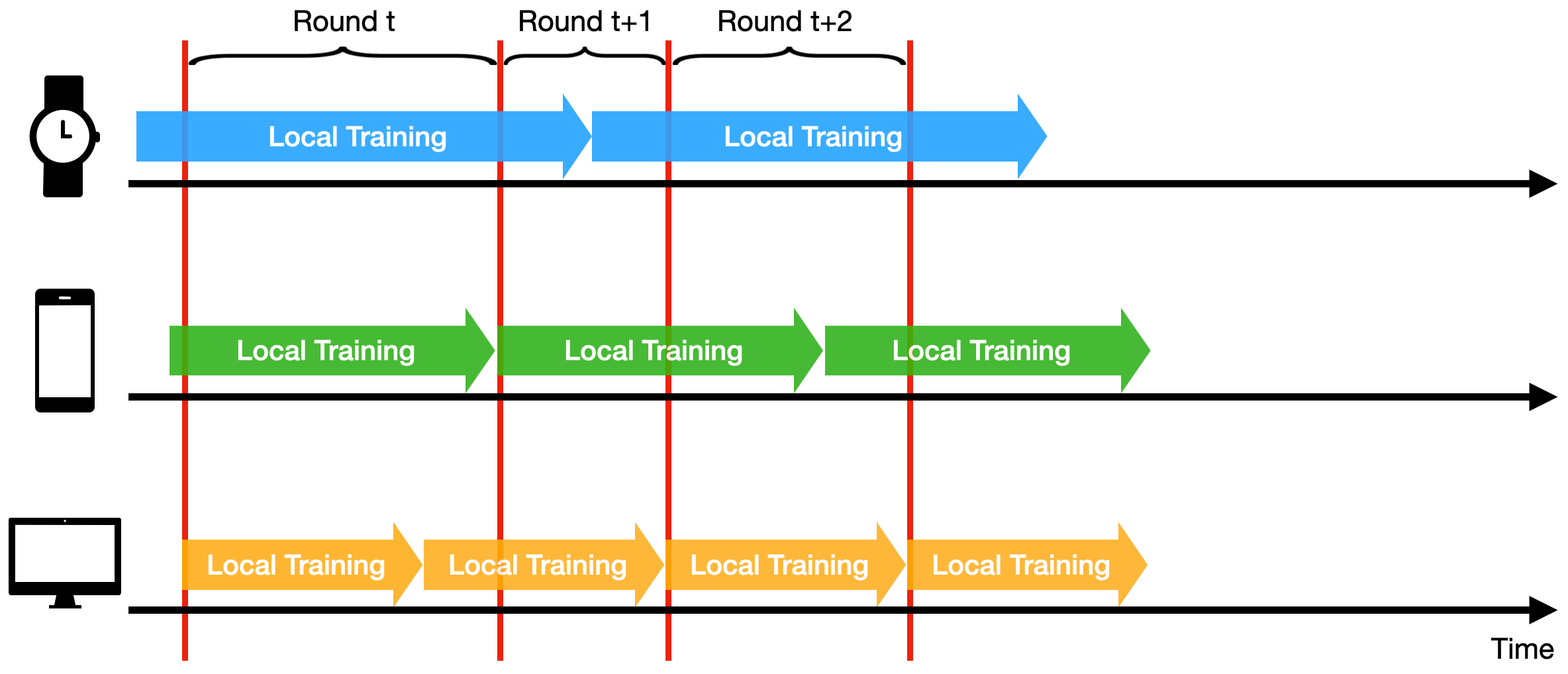}
				\caption{Clients' local training in DeFedAvg-IID.}
				\vspace{3mm}
			\end{subfigure}
			\caption{Comparison of clients' local training protocols of (synchronous) FedAvg, DeFedAvg-nIID, and DeFedAvg-IID.}
			\label{fig:protocols}
		\end{figure}
		
		\subsection{Practical Issues of DeFedAvg.}
		Alike its synchronous FL counterparts \cite{mcmahan2017communication,yang2021achieving}, DeFedAvg also adopts partial clinet participation schemes, which subsume the ``full'' client participation scheme as a special case by setting $n = N$.
		In spite of the partial participation, DeFedAvg requires the server to broadcast the global model to all clients in each round. This is because all the clients need to use it for local training, regardless of whether they are selected or not. In many FL applications, the central server (e.g., a base station) has sufficient power supply and downlink bandwidth for broadcasting to all the clients (e.g., smartphones, autonomous vehicles). The clients typically have limited power supply and uplink bandwidth, while fortunately only unicast communication to the server is required in DeFedAvg.
		
		When implementing DeFedAvg-nIID, each client is able to continuously execute local training using the the most recent global model in the receive buffer, so that the potential delay $t-\tauit$ can be minimal when the client is selected in a certain round $t$. 
		Using this strategy, a client may overwrites its send buffer if it generates multiple local model updates before being selected by the server. However, under the circumstances where the clients are energy-limited, we can let the send buffer be overwritten only when the client is sampled by the server. Indeed, both implementations enjoy the same theoretical property (see Theorems \ref{thm:hetero} and \ref{thm:iid} in Section \ref{sec:theory}), while may differ in practical efficiency.
		
		We present an intuitive comparison of clients' local training protocols of FedAvg, DeFedAvg-nIID, and DeFedAvg-IID, as shown in Fig. \ref{fig:protocols}. We assume that there are a total of $N=3$ clients, and in each round, only $n=2$ clients participate in the global update, and we ignore the communication time for the sake of simplicity.	
		We can observe that the DeFedAvg algorithms effectively mitigate the straggler effect that is present in FedAvg by reducing the idle time during the training process. 
		Furthermore, DeFedAvg-IID goes a step further by eliminating the cost associated with random client selection that exists in DeFedAvg-nIID. By using the `first-complete-first-upload' pattern for client participation, DeFedAvg-IID achieves even faster training compared to DeFedAvg-nIID, while it is only suitable for the IID setting.
		
		\section{Theoretical Analyses}\label{sec:theory}
		In this section, we establish the convergence rates and linear speedup of DeFedAvg-nIID and DeFedAvg-IID. We first introduce the following standard assumptions before presenting our theoretical results.
		
		\begin{assumption}[$L$-smoothness]\label{as:smooth}
			There exists $L>0$ such that for all $i \in [N]$ and $\bw,\bw'\in\mathbb{R}^d$,
			\begin{align}
				\| \nabla F_i(\bw) - \nabla F_i(\bw') \|_2 \leq L \| \bw - \bw' \|_2.
				\nonumber
			\end{align} 
		\end{assumption}
		
		\begin{assumption}[unbiased stochastic gradient]\label{as:unbias}
			Suppose that $\bxi_i \in \Xi_i$ is a data sample, $\bw \in \mathbb{R}^d$ is a local/global iterate generated by DeFedAvg, and $\mathcal{F}$ is a sigma algebra. 
			If $\bw$ is $\mathcal{F}$-measurable and $\bxi_i$ is independent of $\bw$, then for all $i \in [N]$,
			\begin{align}
				\E [\nabla f (\bw; \bxi_i) \mid \mathcal{F}] = \nabla F_i (\bw).
				\label{eq:unbiased}
			\end{align}
		\end{assumption}
		
		\textit{Subtlety in Applying Assumption \ref{as:unbias}:}
		Note that if $\bw$ is random, then $\E_{\bxi_i \sim \mathcal{D}_i} [\nabla f (\bw; \bxi_i)] = \E [\nabla f (\bw; \bxi_i) | \bw] \neq \nabla F_i (\bw)$ as the iterate $\bw$ can be a function of (thus depend on) $\bxi_i$ in general.
		To ensure the unbiasedness property \eqref{eq:unbiased}, we need the conditions that $\mathcal{F}$ contains all information in $\bw$ and $\bxi_i$ is independent of $\bw$. 
		Due to the presence of delay in DeFedAvg, it is crucial to deal with the total expectation of the inner product term $\E \left[\left\langle \nabla F(\bw^t), \nabla f ( \bw_i^{\tauit,k}; \bxi_i^{t,k} ) \right\rangle\right]$ in the theoretical analysis.
		It is worth emphasizing that we index the data sample $\bxi_i^{t,k}$ by $t$ to highlight that all previous models $\bw^0,\bw^1,\dots,\bw^t$ do not depend on it. 
		Thus, applying the law of total expectation and Assumption \ref{as:unbias} yields
		\begin{align*}
			&~\E \left[\left\langle \nabla F(\bw^t), \nabla f ( \bw_i^{\tauit,k}; \bxi_i^{t,k} ) \right\rangle\right]
			\nonumber
			\\
			=&~ \E \left[\left\langle \nabla F(\bw^t), \E \left[ \nabla f ( \bw_i^{\tauit,k}; \bxi_i^{t,k} ) ~\middle|~ \bw^t,\bw_i^{\tauit,k} \right] \right\rangle\right]
			\nonumber
			\\
			=&~ \E \left[\left\langle \nabla F(\bw^t), \nabla F_i ( \bw_i^{\tauit,k} ) \right\rangle\right],
		\end{align*}
		which plays an important role in establishing the convergence rates and linear speedup (see the supplementary materials for further details).
		By contrast, some existing AFL algorithms \cite{avdiukhin2021federated,zhang2023no,wang2023tackling} involve inner product terms of the form $\E \left[\left\langle \nabla F(\bw^t), \nabla f ( \bw_i^{\tauit,k}; \bxi_i^{s,k} ) \right\rangle\right]$, where $s < t$. 
		Since $\bxi_i^{s,k}$ was sampled in an earlier round, the current global model $\bw^t$ contains information about $\bxi_i^{s,k}$. Hence,
		\begin{align*}
			&~\E \left[\left\langle \nabla F(\bw^t), \nabla f ( \bw_i^{\tauit,k}; \bxi_i^{s,k} ) \right\rangle\right]
			\nonumber
			\\
			=&~ \E \left[\left\langle \nabla F(\bw^t), \E \left[ \nabla f ( \bw_i^{\tauit,k}; \bxi_i^{s,k} ) ~\middle|~ \bw^t,\bw_i^{\tauit,k} \right] \right\rangle\right]
			\nonumber
			\\
			\neq&~ \E \left[\left\langle \nabla F(\bw^t), \nabla F_i ( \bw_i^{\tauit,k} ) \right\rangle\right].
		\end{align*}
		This subtle yet fatal discrepancy might be overlooked by previous works, which poses a significant challenge to conduct rigorous convergence analyses of these algorithms under the standard unbiasedness Assumption \ref{as:unbias}.
		
		Furthermore, we impose the following upper bounds on the conditional variance of stochastic gradients and maximum delay. 
		\begin{assumption}[bounded variance]\label{as:bounded-var}
			Suppose that $\bxi_i \in \Xi_i$ is a data sample, $\bw \in \mathbb{R}^d$ is a local/global iterate generated by DeFedAvg, and $\mathcal{F}$ is a sigma algebra. 
			If $\bw$ is $\mathcal{F}$-measurable and $\bxi_i$ is independent of $\bw$, then there exists $\sigma > 0$ such that for all $i \in [N]$ and $k = 0, 1, \dots, K-1$,
			\begin{align*}
				\E \left[\| \nabla f (\bw_i; \bxi_i) - \nabla F_i (\bw_i)  \|_2^2 \mid \mathcal{F} \right] \leq \sigma^2.
			\end{align*} 
		\end{assumption}
		
		\begin{assumption}[bounded delay]\label{as:bounded-delay}
			There exists an integer ${\lambda} \geq 1$ such that for all $i \in [N]$ and $t \geq 0$, 
			\[
			\tauit \geq (t - {\lambda})_+,
			\]
			where $(x)_+ \coloneqq \max\{x, 0\}$ for $x \in \mathbb{R}$.
		\end{assumption}
		Assumption \ref{as:bounded-delay} is common in distributed learning using stale gradients, e.g., \cite{lian2015asynchronous,nguyen2022federated}, which indicates that the clients can be arbitrarily delayed as long as they participate in the global iterations at least once in the last ${\lambda}$ rounds.
		For Algorithm \ref{algo1}, the value of $\lambda$ is a system attribute that is determined by the computation/communication time of the clients and the server. 
		For Algorithm \ref{algo2}, due to the randomness in client sampling, there is a chance (albeit tiny) that a client is not selected throughout the training process and thus Assumption \ref{as:bounded-delay} is violated. 
		Indeed, according to \cite[Theorem 5.2]{gu2021fast}, it holds with probability at least $1-\delta$ that $\lambda \leq \mathcal{O} \left( \frac{1}{p} \left( 1+\log\frac{NT}{\delta} \right) \right)$, where $p = 1 - \left(\frac{N-1}{N}\right)^n$ is the probability of each client being sampled in a single round.
		In other words, the maximum delay ${\lambda}$ in DeFedAvg-nIID exhibits mild logarithmic dependence on the total number of communication rounds $T$ with high probability. 
		
		\subsection{Linear Speedup of DeFedAvg-nIID}\label{sec:thm-hetero}
		Since DeFedAvg-nIID applies to Problem \eqref{eq:prob} with different local objective functions, we impose the following upper bound on the heterogeneity of $F_i$'s:
		
		\begin{assumption}[bounded heterogeneity]\label{as:bounded-hetero}
			Suppose that $\bw \in \mathbb{R}^d$ is a local/global iterate generated by Algorithm \ref{algo2}. Then, there exists $\nu \geq 0 $ such that for all $i \in [N]$, 
			\begin{align}
				\E \| \nabla F_i(\bw) - \nabla F(\bw) \|_2^2 \leq \nu^2.
				\nonumber
			\end{align}
		\end{assumption}
		Moreover, we assume the uniform upper bound on the expected gradient norm squares:
		\begin{assumption}[bounded gradient]\label{as:bounded-grad}
			Suppose that $\bw \in \mathbb{R}^d$ is a local/global iterate generated by Algorithm \ref{algo2}. Then, there exists $G \geq 0 $ such that for all $i \in [N]$,
			\begin{align}
				\E \| \nabla F_i(\bw) \|_2^2 \leq G^2.
				\nonumber
			\end{align}
		\end{assumption}
		Now, we are ready to present the following convergence rate of DeFedAvg-nIID: 
		\begin{theorem}\label{thm:hetero}
			Let $F^* \in \mathbb{R}$ be the optimal function value of Problem \eqref{eq:prob}, $\bw^0 \in \mathbb{R}^d$ be the initial global model of Algorithm \ref{algo2}, and $\{\bw^t\}_{t \geq 1}$ be the sequence of global models generated by Algorithm \ref{algo2}. 
			Let 
			$
			\eta = \sqrt{4 n K (F(\bw^0)- F^*)}
			~\text{and}~
			\etabar = {1}/{\sqrt{(4 \sigma^2 L + 2 L K G^2)T}K}.
			$
			Suppose that Assumptions \ref{as:smooth}--\ref{as:bounded-grad} hold and  
			\begin{align}
				\etabar \hspace{-1mm}\leq\hspace{-1mm} \frac{1}{8 L K}
				\hspace{0.2mm}\text{and}\hspace{1mm}
				\eta \etabar \hspace{-1mm}\leq\hspace{-1mm}
				\min \hspace{-1mm}\left\{\hspace{-1mm} \frac{1}{4 L K \lambda},\hspace{-0.5mm} \frac{2 \sigma^2 + G^2 K}{8 \sigma^2 \hspace{-0.5mm} L K \lambda \hspace{-1mm}+\hspace{-1mm} 4 L G^2 K^2 {\lambda}^2} \hspace{-1mm}\right\}\hspace{-1mm}.
				\label{eq:lr'}
			\end{align}
			Then, it holds for $T \geq 1$ that
			\begin{align}
				\frac{1}{T} \hspace{-1mm}\sum_{t=0}^{T-1} \hspace{-0.5mm}\E \| \nabla \hspace{-0.5mm} F(\bw^t) \|_2^2\hspace{-0.5mm}
				\leq& \sqrt{\hspace{-0.5mm}\frac{256 (F(\bw^0)\hspace{-0.5mm}-\hspace{-0.5mm}F^*)(16 \sigma^2 L \hspace{-0.8mm}+\hspace{-0.8mm} 8 L G^2 K)}{n K T}}
				\nonumber
				\\
				&+ \frac{(\sigma^2+8K\nu^2) 8 L^2}{(4 \sigma^2 L + 2 L K G^2) K T}.
				\label{eq:thm2}
			\end{align} 
		\end{theorem}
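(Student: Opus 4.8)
\emph{Proof strategy.} The plan is to carry out the standard descent-lemma analysis used for FedAvg-type algorithms, while carefully accounting for the stale local models. I would start from $L$-smoothness (Assumption~\ref{as:smooth}) applied to consecutive global iterates and substitute the compact update~\eqref{eq:compact}, i.e.\ $F(\bw^{t+1}) \le F(\bw^t) - \eta\big\langle\nabla F(\bw^t),\bm{g}_{\textnormal{nIID}}^t\big\rangle + \tfrac{L\eta^2}{2}\big\|\bm{g}_{\textnormal{nIID}}^t\big\|_2^2$. Taking total expectation, the decisive observation is exactly the one flagged in the ``subtlety'' discussion preceding the theorem: since $\bxi_i^{t,k}$ is drawn afresh in round $t$ while $\bw^t$ and $\bw_i^{\tauit,k}$ are measurable with respect to the history up to round $t$, the law of total expectation together with Assumption~\ref{as:unbias} gives $\E\big\langle\nabla F(\bw^t),\nabla f(\bw_i^{\tauit,k};\bxi_i^{t,k})\big\rangle = \E\big\langle\nabla F(\bw^t),\nabla F_i(\bw_i^{\tauit,k})\big\rangle$; moreover, because the clients are sampled uniformly with replacement, $\mathcal{I}_t$ is independent of that history, so averaging over $\mathcal{I}_t$ replaces $\tfrac1n\sum_{i\in\mathcal{I}_t}(\cdot)$ by the population average $\tfrac1N\sum_{i=1}^N(\cdot)$. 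Writing $\nabla F_i(\bw_i^{\tauit,k}) = \nabla F_i(\bw^t) + \big(\nabla F_i(\bw_i^{\tauit,k})-\nabla F_i(\bw^t)\big)$, using $\tfrac1N\sum_i\nabla F_i(\bw^t) = \nabla F(\bw^t)$, and bounding the residual cross term with Young's inequality, I would arrive at a per-round inequality whose right-hand side carries a progress term of order $-\eta\etabar K\,\E\|\nabla F(\bw^t)\|_2^2$, a drift term of order $\eta\etabar L^2\cdot\tfrac1N\sum_{i,k}\E\|\bw_i^{\tauit,k}-\bw^t\|_2^2$, and a second-moment term of order $L\eta^2\,\E\|\bm{g}_{\textnormal{nIID}}^t\|_2^2$.

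\emph{Bounding the auxiliary terms.} The second-moment term $\E\|\bm{g}_{\textnormal{nIID}}^t\|_2^2$ I would split into its conditional mean and fluctuation; the fluctuation --- over the random multiset $\mathcal{I}_t$ and the independent samples $\bxi_i^{t,k}$ --- produces the $1/n$-scaled variance, controlled by Assumptions~\ref{as:bounded-var}, \ref{as:bounded-hetero} and~\ref{as:bounded-grad}, while the mean is absorbed into $\E\|\nabla F(\bw^t)\|_2^2$ plus the same drift. For the drift I would use $\E\|\bw_i^{\tauit,k}-\bw^t\|_2^2 \le 2\E\|\bw_i^{\tauit,k}-\bw^{\tauit}\|_2^2 + 2\E\|\bw^{\tauit}-\bw^t\|_2^2$: the first term is the within-round drift of at most $K$ local steps started from $\bw^{\tauit}$, bounded by unrolling the recursion~\eqref{eq:local-iter'} and invoking Assumptions~\ref{as:bounded-var} and~\ref{as:bounded-grad} (giving an $\mathcal{O}(\etabar^2 K(\sigma^2+KG^2))$ contribution after summing over $k$), and the second term is the staleness gap, which by the bounded-delay Assumption~\ref{as:bounded-delay} spans at most $\lambda$ global rounds, so Cauchy--Schwarz gives $\E\|\bw^{\tauit}-\bw^t\|_2^2 \le \lambda\sum_{s=(t-\lambda)_+}^{t-1}\E\|\bw^{s+1}-\bw^s\|_2^2$, with each $\E\|\bw^{s+1}-\bw^s\|_2^2 = \eta^2\E\|\bm{g}_{\textnormal{nIID}}^s\|_2^2$ bounded exactly as before.

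\emph{Telescoping and absorption --- the main obstacle.} Summing the per-round inequality over $t=0,\dots,T-1$, the staleness contribution, after the double sum, becomes a term of order $\lambda^2\eta^2\etabar^2 L^2\sum_s\E\|\bm{g}_{\textnormal{nIID}}^s\|_2^2$ (each round $s$ is reused by up to $\lambda$ later rounds), which itself expands into a piece proportional to $\sum_s\E\|\nabla F(\bw^s)\|_2^2$. The crux is to pick the step sizes small enough --- precisely the two inequalities in~\eqref{eq:lr'}, which are exactly the ones carrying the factors $\lambda$ and $\lambda^2$ --- so that this fed-back gradient term, together with the analogous one produced by the within-round drift, is at most, say, half of the $\Theta(\eta\etabar K)\sum_t\E\|\nabla F(\bw^t)\|_2^2$ progress term and can be moved to the left-hand side. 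Carefully chasing all the constants through this absorption is the heart of the argument, and it is what distinguishes the delayed analysis from the synchronous FedAvg one. After absorption and using $F(\bw^T)\ge F^*$, rearranging produces, up to absolute constants, a bound of the form $\tfrac{F(\bw^0)-F^*}{\eta\etabar KT} + \eta\etabar L\cdot\tfrac{\sigma^2+K\nu^2}{n} + \etabar^2 L^2 K(\sigma^2+KG^2)$ on $\tfrac1T\sum_{t=0}^{T-1}\E\|\nabla F(\bw^t)\|_2^2$.

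\emph{Choosing the learning rates.} Finally, I would substitute the prescribed $\eta=\sqrt{4nK(F(\bw^0)-F^*)}$ and $\etabar=1/\big(\sqrt{(4\sigma^2 L+2LKG^2)T}\,K\big)$, so that $\eta\etabar=\sqrt{4n(F(\bw^0)-F^*)/\big(K(4\sigma^2 L+2LKG^2)T\big)}$ balances the first two terms; the resulting simplification yields the $\sqrt{256(F(\bw^0)-F^*)(16\sigma^2 L+8LG^2 K)/(nKT)}$ leading term and the $\mathcal{O}(1/(KT))$ residual appearing in~\eqref{eq:thm2}. Since $\etabar=\Theta(1/\sqrt T)$ while $\eta$ is fixed, $\eta\etabar\to0$ as $T$ grows, so the smallness conditions~\eqref{eq:lr'} are automatically satisfied for all sufficiently large $T$; in that regime the bound reads $\mathcal{O}(1/\sqrt{nT}+1/(KT))$, which displays the claimed linear speedup in $n$.
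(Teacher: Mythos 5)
Your proposal is correct and follows essentially the same route as the paper's proof: descent lemma, the conditioning argument for unbiasedness under delay, uniform sampling to replace the partial sum by the population average, a drift decomposition into within-round local drift plus a staleness gap controlled by the bounded delay, absorption of the fed-back gradient terms via the step-size conditions \eqref{eq:lr'}, and finally substitution of the prescribed learning rates. The only minor difference is that the paper separates the martingale noise from the deterministic gradients in the staleness bound (giving a $\lambda$ rather than $\lambda^2$ factor on the $\sigma^2$ part), whereas your direct Cauchy--Schwarz on $\E\|\bw^{\tau_i(t)}-\bw^t\|_2^2$ would yield $\lambda^2$ on both, which only marginally tightens the step-size requirement and does not change the final bound.
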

		
		The proof of Theorem \ref{thm:hetero} is provided in the supplementary materials. 
		On the one hand, the global learning rate $\eta$ given by Theorem \ref{thm:hetero} is positively correlated with $F(\bw^0)-F^*$.
		On the other hand, the local learning rate $\bar{\eta}$ is negatively correlated with the gradient's variance bound $\sigma^2$, meaning that the noisier the local gradient is, the smaller the local learning rate should be to avoid ``client drift'' \cite{karimireddy2020scaffold}.
		
		As a consequence of condition \eqref{eq:lr'}, we need
		$T \geq \Omega(n K \lambda^4)$.
		Therefore, the bound \eqref{eq:thm2} indicates that the average squared gradient norm converges to zero at a rate of $\mathcal{O}({1}/{\sqrt{nT}} + 1/KT)$ for sufficiently large $T$. 
		In other words, to ensure that $\frac{1}{T} \sum_{t=0}^{T-1} \E | \nabla F(\bw^t) |_2^2 \leq \epsilon$, the total number of communication rounds required scales as $\mathcal{O}(1/n\epsilon^2 + 1/K\epsilon)$. This scaling is inversely proportional to the number of participating clients $n$ for sufficiently small $\epsilon$, indicating linear speedup with respect to $n$. In essence, as the number of participating clients increases, the required number of communication rounds decreases, leading to improved convergence rates and overall efficiency in FL.
		
		When we set $\lambda = 1$, indicating no delay in the updates, Theorem \ref{thm:hetero} aligns with the existing result for synchronous FedAvg for FL with non-IID data.
		Specifically, it has been shown in \cite{yang2021achieving} that the FedAvg converges at a rate of $\mathcal{O}({\sqrt{K}}/{\sqrt{nT}} + 1/T)$ when $T = \Omega(n K)$ for solving Problem \eqref{eq:prob}. 
		This observation highlights that the delay in local training does not hinder the asymptotic convergence rate when compared to synchronous FedAvg.
		
		Consider the special case when $K=1$, then DeFedAvg-nIID reduces to an asynchronous SGD algorithm, and Theorem \ref{thm:hetero} implies a $\mathcal{O}({1}/{\sqrt{nT}} + 1/T)$ convergence rate for $T=\Omega(n\lambda^4)$. 
		We notice that \cite{koloskova2022sharper} has established the best-known convergence rate of $\mathcal{O}(1/\sqrt{T} + 1/T)$ for asynchronous SGD under the same setting without the bounded gradient assumption, while the linear speedup property remains unclear.
		Our result, as far as we know, provides the first linear speedup guarantee of asynchronous SGD with non-IID data, which can be of independent interest.
		
		DeFedAvg-nIID is most analogous to the FedBuff AFL algorithm \cite{nguyen2022federated}, which also adopts asynchronous local training and partial participation schemes. 
		Differently, FedBuff caches local updates from any $n$ active clients in each round. Such a deterministic client participation scheme favors fast clients and thus may impede convergence under the heterogeneous setting.
		This issue is also evidenced by the theoretical analysis in \cite[Section D]{nguyen2022federated}, which has to make a rather demanding assumption that all clients participate with equal probability so as to guarantee the convergence rate of $\mathcal{O}(1/\sqrt{KT} +K/(n^2 T) + \lambda^2/T)$, without achieving asymptotic linear speedup wrt $n$. 
		
		\subsection{Linear Speedup of DeFedAvg-IID}\label{sec:thm-homo}
		Now, we present the following convergence rate of DeFedAvg-IID:
		\begin{theorem}\label{thm:iid}
			Let $F^* \in \mathbb{R}$ be the optimal function value of Problem \eqref{eq:prob-iid}, $\bw^0 \in \mathbb{R}^d$ be the initial global model of Algorithm \ref{algo1}, and $\{\bw^t\}_{t \geq 1}$ be the sequence of global models generated by Algorithm \ref{algo1} for solving Problem \eqref{eq:prob-iid}.
			Let $\eta = \sqrt{n K (F(\bw^0)-F^*) / 2}$ and $\etabar = 1/\sqrt{\sigma^2 LT}K$.
			Suppose that Assumptions \ref{as:smooth}--\ref{as:bounded-delay} hold and 
			\begin{align}
				\etabar \leq \frac{1}{4\sqrt{3}LK}
				~\text{and}~
				\eta \etabar \leq 
				\frac{1}{4LK{\lambda}}.
				\label{eq:lr}
			\end{align}
			Then, it holds for $T \geq 1$ that
			\begin{align}
				\frac{1}{T} \sum_{t=0}^{T-1} \E \| \nabla F(\bw^t) \|_2^2
				&\leq \sqrt{\frac{128(F(\bw^0)-F^*)}{n K T}} + \frac{8L}{K T}.
				\label{eq:thm1}
			\end{align} 
		\end{theorem}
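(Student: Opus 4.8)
The plan is to run the standard nonconvex descent analysis on the potential $F(\bw^t)$, while tracking the two effects peculiar to DeFedAvg: the gradient staleness induced by the delays $\tauit$, and the variance reduction created by the $n$ mutually independent, \emph{fresh} mini-batches aggregated in each round (which is the source of the linear speedup). Write the global step compactly as $\bw^{t+1}=\bw^t-\eta\etabar\,\bg^t$ with $\bg^t\coloneqq\frac{1}{n}\sum_{i\in I_t}\sum_{k=0}^{K-1}\nabla f(\bw_i^{\tauit,k};\bxi_i^{t,k})$, and recall $F_i=F$ in the IID setting. First I would apply the $L$-smoothness descent lemma (Assumption \ref{as:smooth}), take total expectation, and evaluate the inner-product term $-\eta\etabar\,\E\langle\nabla F(\bw^t),\bg^t\rangle$. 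Here the ``subtlety'' discussed after Assumption \ref{as:unbias} is essential: since $\bxi_i^{t,k}$ is drawn in round $t$ and is independent of both $\bw^t$ and $\bw_i^{\tauit,k}$, iterated conditioning gives $\E\langle\nabla F(\bw^t),\nabla f(\bw_i^{\tauit,k};\bxi_i^{t,k})\rangle=\E\langle\nabla F(\bw^t),\nabla F(\bw_i^{\tauit,k})\rangle$. Applying $\langle a,b\rangle=\tfrac12(\|a\|_2^2+\|b\|_2^2-\|a-b\|_2^2)$ then splits this into a genuine descent term $-\tfrac{\eta\etabar K}{2}\E\|\nabla F(\bw^t)\|_2^2$, a nonpositive term $-\tfrac{\eta\etabar}{2n}\sum_{i,k}\E\|\nabla F(\bw_i^{\tauit,k})\|_2^2$ kept as an absorption reservoir, and an error term $\tfrac{L^2\eta\etabar}{2n}\sum_{i,k}\E\|\bw^t-\bw_i^{\tauit,k}\|_2^2$.

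Next I would control $\|\bw^t-\bw_i^{\tauit,k}\|_2^2\le 2\|\bw^t-\bw^{\tauit}\|_2^2+2\|\bw^{\tauit}-\bw_i^{\tauit,k}\|_2^2$ term by term. For the \emph{staleness gap}, writing $\bw^t-\bw^{\tauit}=-\eta\etabar\sum_{s=\tauit}^{t-1}\bg^s$ and using $t-\tauit\le\lambda$ (Assumption \ref{as:bounded-delay}) with Jensen gives $\E\|\bw^t-\bw^{\tauit}\|_2^2\le\eta^2\etabar^2\lambda\sum_{s=(t-\lambda)_+}^{t-1}\E\|\bg^s\|_2^2$, so that after summing over $t$ each $\E\|\bg^s\|_2^2$ is charged at most $\lambda$ times. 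For the \emph{local drift} $\|\bw^{\tauit}-\bw_i^{\tauit,k}\|_2^2$, I would run the usual local-SGD recursion in $k$: expand one step, kill the cross term by Assumption \ref{as:unbias}, bound the noise by $\sigma^2$ (Assumption \ref{as:bounded-var}), use $\|\nabla F(\bw_i^{\tauit,k})\|_2^2\le 2\|\nabla F(\bw^{\tauit})\|_2^2+2L^2\|\bw^{\tauit}-\bw_i^{\tauit,k}\|_2^2$, and absorb the self-referential term using $\etabar\le\tfrac{1}{4\sqrt3 LK}$; unrolling over $k\le K$ yields $\E\|\bw^{\tauit}-\bw_i^{\tauit,k}\|_2^2=\mathcal O\!\big(K^2\etabar^2\,\E\|\nabla F(\bw^{\tauit})\|_2^2+K\etabar^2\sigma^2\big)$. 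In parallel I would bound $\E\|\bg^t\|_2^2$ by splitting each stochastic gradient into its mean $\nabla F(\bw_i^{\tauit,k})$ plus an independent mean-zero noise; the $nK$ noise terms in round $t$ are mutually independent, so their contribution to $\E\|\bg^t\|_2^2$ is at most $\tfrac{K\sigma^2}{n}$ --- this $1/n$ is exactly the linear-speedup gain --- while the mean part is at most $\tfrac{K}{n}\sum_{i\in I_t}\sum_k\E\|\nabla F(\bw_i^{\tauit,k})\|_2^2$, which smoothness reduces further to $\E\|\nabla F(\bw^t)\|_2^2$ plus the drift terms already estimated.

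Finally I would assemble: sum the descent inequality over $t=0,\dots,T-1$, telescope the left side to $F(\bw^0)-F^*\ge F(\bw^0)-F(\bw^T)$, and substitute the drift and $\E\|\bg^t\|_2^2$ bounds to obtain, schematically,
\[
\frac{\eta\etabar K}{2}\sum_{t=0}^{T-1}\E\|\nabla F(\bw^t)\|_2^2 \;\le\; F(\bw^0)-F^* \;+\; c_1\,\eta\etabar\big(\eta\etabar LK\lambda+\etabar LK\big)\sum_{t=0}^{T-1}\E\|\nabla F(\bw^t)\|_2^2 \;+\; c_2\,\frac{L\eta^2\etabar^2 K\sigma^2 T}{n}
\]
for absolute constants $c_1,c_2$. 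The conditions $\etabar\le\tfrac{1}{4\sqrt3 LK}$ and $\eta\etabar\le\tfrac{1}{4LK\lambda}$ make the bracketed coefficient on the right at most $\tfrac14$, so that gradient sum (together with the nonpositive reservoir, which pays for the $\E\|\nabla F(\bw_i^{\tauit,k})\|_2^2$ leftovers) moves to the left, leaving a clean $\tfrac{\eta\etabar K}{4}\sum_t\E\|\nabla F(\bw^t)\|_2^2$. Dividing by $\tfrac{\eta\etabar KT}{4}$ gives $\tfrac1T\sum_t\E\|\nabla F(\bw^t)\|_2^2\le\tfrac{\mathcal O(F(\bw^0)-F^*)}{\eta\etabar KT}+\tfrac{\mathcal O(L\eta\etabar\sigma^2)}{n}+\mathcal O(L/KT)$, where the last term traces back to the $K\etabar^2\sigma^2$ part of the local drift; plugging in the prescribed $\eta$ and $\etabar$ and simplifying produces \eqref{eq:thm1}.

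The main obstacle is the bookkeeping in the last step: the staleness gap expresses $\bw^t-\bw^{\tauit}$ through earlier increments $\bg^s$, each of which itself contains a staleness gap and a local drift, so the error terms are coupled across rounds and a naive bound would blow up geometrically in $t$. Assumption \ref{as:bounded-delay} breaks the coupling --- each $\E\|\bg^s\|_2^2$ re-enters the aggregate only $\mathcal O(\lambda)$ times --- after which the smallness of $\eta\etabar\lambda$ and $\etabar K$ turns the recursion into a strict contraction, so every cross term is absorbed rather than accumulated. The second delicate point, which is precisely what distinguishes this result from earlier AFL analyses, is justifying the $1/n$ variance reduction rigorously: it hinges on the $n$ clients aggregated in round $t$ consuming \emph{fresh}, independent samples $\bxi_i^{t,k}$ rather than stale ones, so that Assumption \ref{as:unbias} applies verbatim and the noise terms are genuinely independent and mean-zero conditioned on the past.
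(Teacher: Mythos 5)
Your proposal is correct and follows essentially the same route as the paper's proof: the descent lemma, the fresh-sample unbiasedness argument for the inner product, the polarization identity producing a descent term, a nonpositive reservoir, and an $L^2\|\bw^t-\bw_i^{\tauit,k}\|_2^2$ error term, the split into local drift and staleness gap (the latter charged at most $\lambda$ times via Assumption \ref{as:bounded-delay}), the $1/n$ variance reduction from the $nK$ independent mean-zero noise terms, and absorption under the stated learning-rate conditions before substituting $\eta$ and $\etabar$. The only cosmetic difference is that you keep the reservoir as the average of the individual $\E\|\nabla F(\bw_i^{\tauit,k})\|_2^2$ (per-term polarization plus Cauchy--Schwarz on the mean part of $\E\|\bg^t\|_2^2$), whereas the paper's Lemmas \ref{lem:inner-prod} and \ref{lem:grad-var} keep the squared norm of the average $\frac{1}{n}\sum_{i,k}\bG_i^{t,k}$; both lead to the same absorption conditions and the same final bound.
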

		
		The proof of Theorem \ref{thm:iid} is also deferred to the supplementary materials.
		Note that condition \eqref{eq:lr} implies a lower bound on the required number of global rounds
		$T \geq \Omega(n K \lambda^2)$, which exhibits a relaxed dependence on $\lambda$ compared to the lower bound implied by Theorem \ref{thm:hetero}.
		Inequality \eqref{eq:thm1} suggests that the average gradient norm converges to zero at a rate of $\mathcal{O}({1}/{\sqrt{nKT}} + 1/KT)$ as $T$ is sufficiently large. In other words, the total number of communication rounds required scales as $\mathcal{O}(1/nK\epsilon^2+1/K\epsilon)$ so that $\frac{1}{T} \sum_{t=0}^{T-1} \E \| \nabla F(\bw^t) \|_2^2 \leq
		\epsilon$, which also indicates linear speedup wrt to $n$.
		
		When $\lambda = 1$, indicating no delay in the updates, Theorem \ref{thm:iid} implies a rate of $\mathcal{O}(1/\sqrt{nKT} + n/T)$ provided that $T = \Omega(n K)$, which aligns with the convergence rate of synchronous FedAvg presented in \cite{stich2020error}. 
		
		When $K=1$, indicating no local training, the convergence bound \eqref{eq:thm1} reduces to the result for AsySG (an asynchronous SGD algorithm for IID data) \cite{lian2015asynchronous} up to a high-order term. This result demonstrates that the asynchronous SGD converges at a rate of $\mathcal{O}({1}/{\sqrt{nT}})$ when $T=\Omega(n {\lambda}^2)$.
		Besides, it is worth noting that condition \eqref{eq:thm1} implies that $K = \mathcal{O}(T/n{\lambda}^2)$.
		Consequently, if the number of local steps $K$ is not excessively large, the communication complexity decreases linearly as $K$ increases. 
		This highlights the advantage of local training in DeFedAvg-IID, as it asymptotically reduces the number of communication rounds by a factor of $1/K$ while maintaining the same sample complexity as asynchronous SGD.
		
		Furthermore, we notice that there is a similar asynchronous FedAvg algorithm mentioned in \cite[Section 5]{stich2019local} for solving Problem \eqref{eq:prob-iid} with strongly convex objectives. Different from our DeFedAvg-IID, the algorithm in \cite{stich2019local} has varying local steps $K$ across clients, the local learning rates diminish to 0, and there is no global learning rate. 
		The algorithm in \cite{stich2019local} attains $\mathcal{O}(1/nT)$ convergence rate, while this result is limited to strongly convex objectives under the bounded gradient assumption.
		In contrast, Theorem \ref{thm:iid} applies to general nonconvex problems without requiring bounded gradients. This sheds light on the distributed training of nonconvex deep neural networks, showcasing the applicability and effectiveness of DeFedAvg-IID in the more challenging setting.
		
		\begin{figure*}
			\centering
			\includegraphics[width=\textwidth]{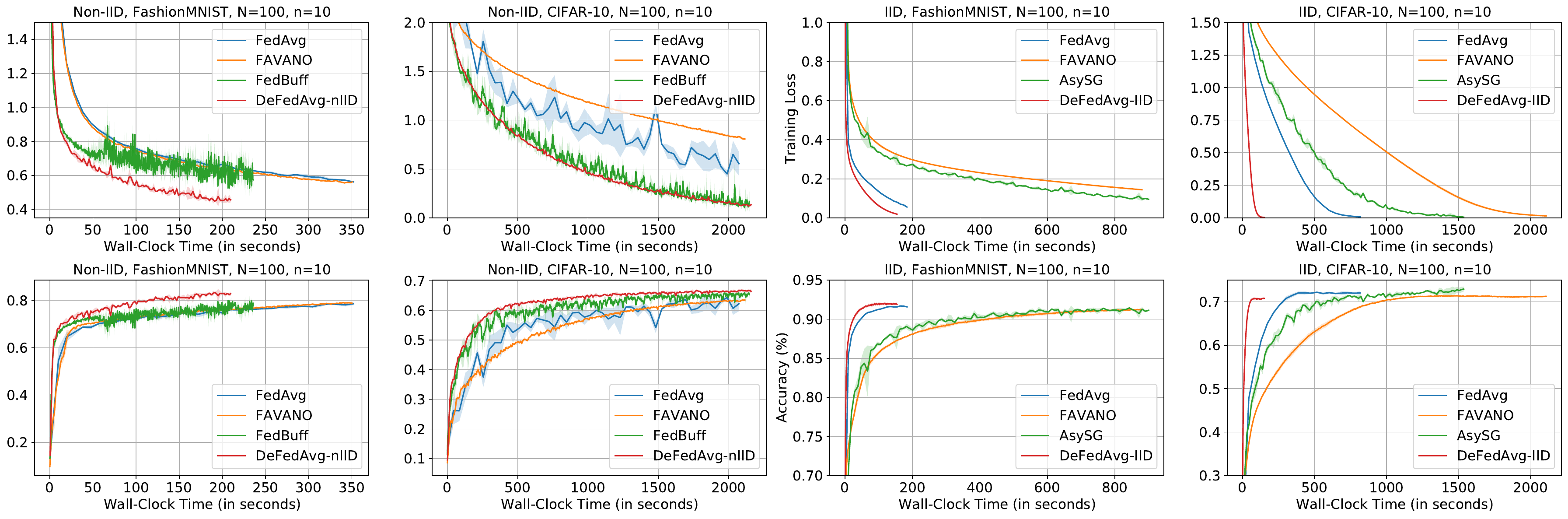}
			\vspace{-5mm}
			\caption{Convergence over wall-clock time of DeFedAvg and other algorithms with $n=10$.}
			\vspace{-2mm}
			\label{fig:acc_loss0.1}
		\end{figure*}
		
		\begin{figure*}[ht]
			\centering
			\includegraphics[width=\textwidth]{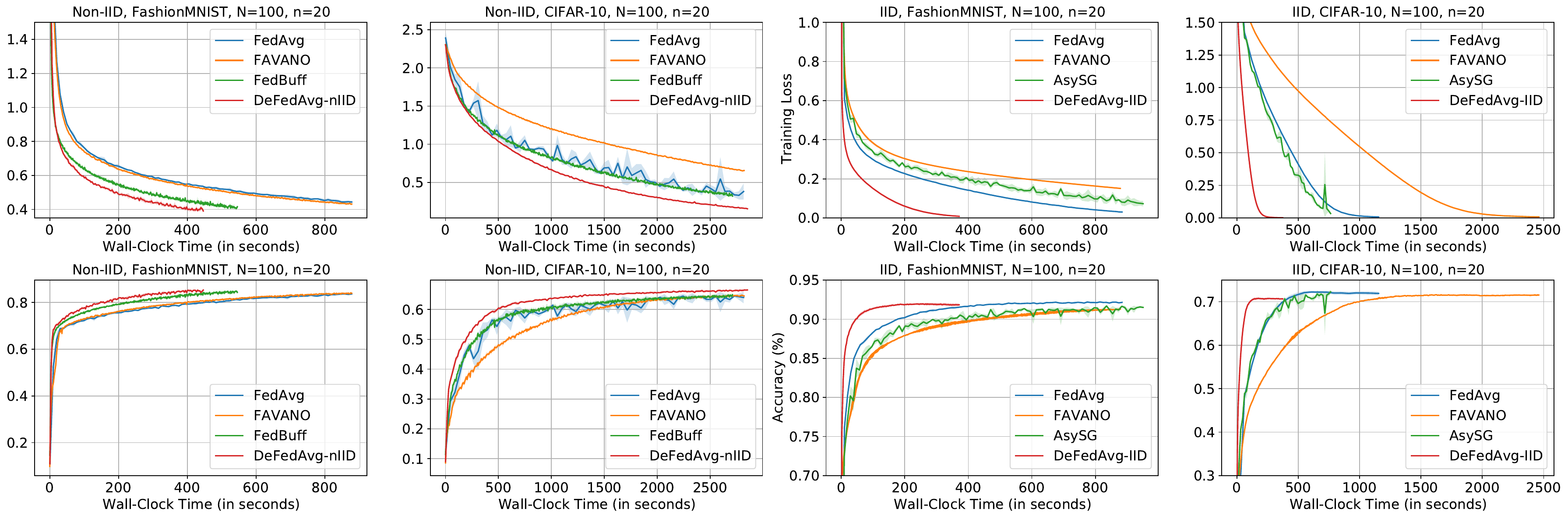}
			\vspace{-5mm}
			\caption{Convergence over wall-clock time of DeFedAvg and other algorithms with $n=20$.
			}
			\vspace{-2mm}
			\label{fig:convergence}
		\end{figure*}
		
		\begin{figure*}[ht]
			\centering
			\includegraphics[width=\textwidth]{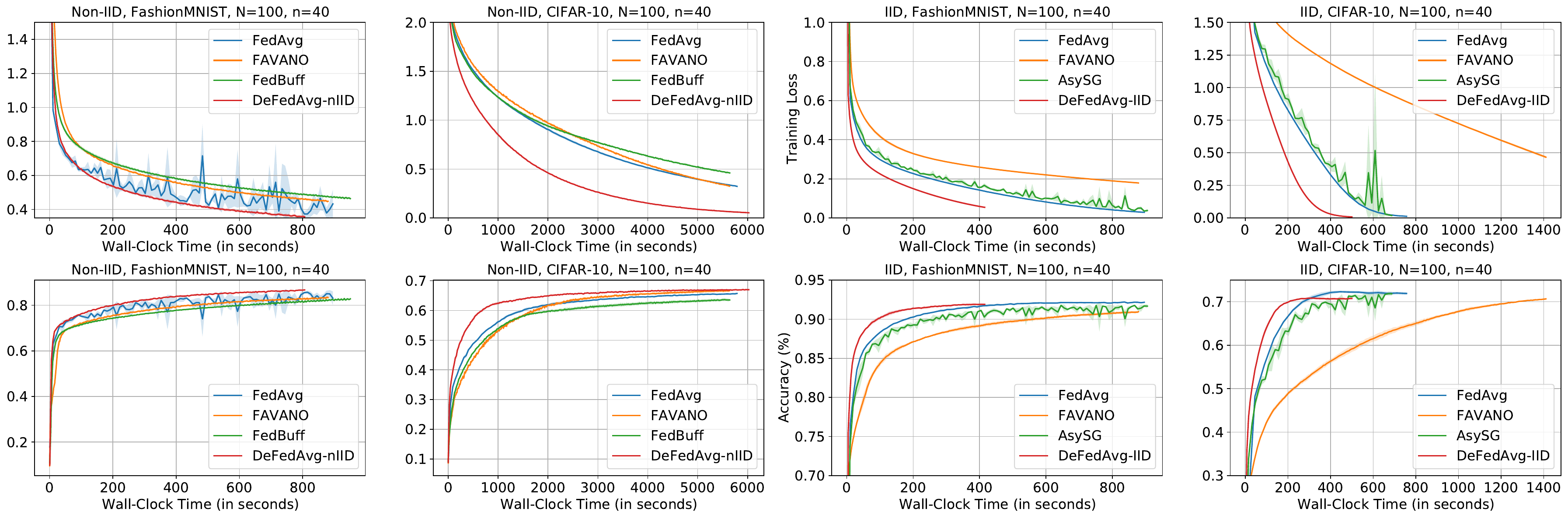}
			\vspace{-5mm}
			\caption{Convergence over wall-clock time of DeFedAvg and other algorithms with  $n=40$.}
			\label{fig:acc_loss0.4}
		\end{figure*}
		
		\begin{figure*}[ht]
			\centering
			\includegraphics[width=\textwidth]{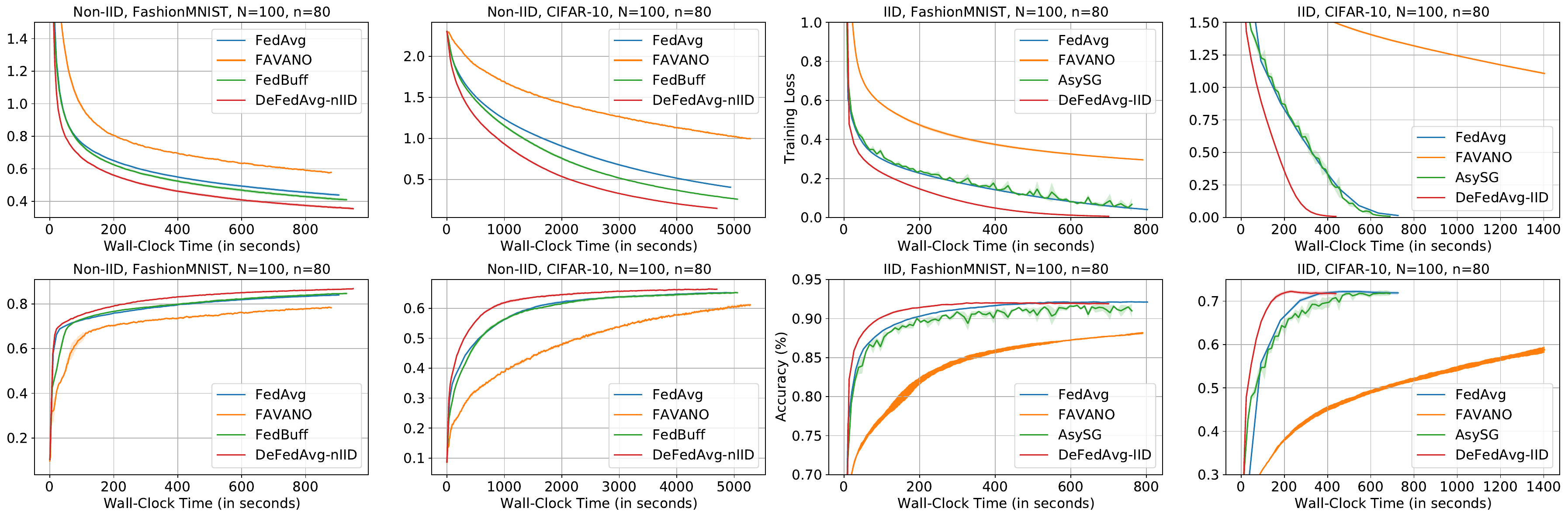}
			\vspace{-5mm}
			\caption{Convergence over wall-clock time of DeFedAvg and other algorithms with $n=80$.}
			\vspace{-2mm}
			\label{fig:acc_loss0.8}
		\end{figure*}
		
		\section{Experiments}		
		\subsection{Experimental Setup}\label{experiments}
		To validate the efficacy of our proposed method, we apply DeFedAvg to train deep neural networks with two convolutional layers and two fully connected layers under the FL scenario. 
		Specifically, we simulate the collaborative training process of $N=100$ clients with diverse processing capabilities, and evaluate the algorithm performance on image classification tasks using the FashionMNIST \cite{fashionmnist} and CIFAR-10 \cite{cifar10} datasets.
		
		Under the non-IID setting, we implement the (synchronous) FedAvg \eqref{eq:localsgd1}--\eqref{eq:localsgd2}, FedBuff \cite{nguyen2022federated}, and the recent AFL algorithm FAVANO \cite{leconte2023favas} for comparisons with DeFedAvg-nIID. 
		To introduce statistical heterogeneity in the non-IID setting, we randomly assign two classes of data to each client. This distribution of data among clients helps create a scenario where the training data across clients differ, thereby increasing the challenge of achieving convergence in the FL process.
		
		Under the IID setting, we implement the FedAvg given by \eqref{eq:localsgd1}--\eqref{eq:localsgd2}, FAVANO \cite{leconte2023favas}, and AsySG \cite{lian2015asynchronous} for comparisons with DeFedAvg-IID.
		Each client is equipped with training data containing all classes. This ensures that the data distribution across clients is statistically homogeneous.
		
		\subsubsection{Neural Network Architecture}
		We adopt a convolutional neural network, which contains two $5 \times 5$ convolutional layers followed by $2 \times 2$ max pooling layers and two fully connected layers with the latent dimensions being $1600$ and $512$, respectively.
		
		\subsubsection{Evaluation Metrics}
		The direct evaluation metric for assessing convergence performance is the number of communication rounds required to reach a target optimization accuracy. However, it is important to note that in scenarios where the computation or communication speeds of different clients vary significantly, the actual runtime of a global round can differ considerably across different algorithms.
		In light of this, it is more reasonable to evaluate the practical performance using wall-clock time, which encompasses both computation time and communication time. 
		This approach provides a more comprehensive assessment of the overall efficiency of the algorithms in real-world scenarios.
		
		\subsubsection{System Model}
		To measure the wall-clock time, we take into account both  the computational time for local training at clients and the communication time for downloading the global model and uploading the local model updates.
		
		Following the approach described in \cite{sun2023semi}, we assume that the average time for local training can be represented as $T_{\text{comp}}=N_{\text{MAC}}/C_{\text{MAC}}$, where $N_{\text{MAC}}$ denotes the number of floating-point operations required for one-iteration training and $C_{\text{MAC}}$ denotes the computation speed of the fastest client, set to be 10GFLOPS. 
		For our experiments, the cost of updating the local models for one iteration using a mini-batch of ten samples is estimated to be 17.0MFLOPs for FashionMNIST and 31.4MFLOPs for CIFAR-10, respectively. 
		Based on these values, we can calculate the computation time required for local training.
		To simulate the hardware heterogeneity, we assume that the computation speeds of clients follow the uniform distribution over interval $[1,5]$, where the slowest client is five times slower than the fastest one.
		This variation in computation speeds allows us to capture the realistic heterogeneity in real-world FL systems.
		
		Furthermore, we assume that the clients transmit the global model and local updates using a  5G network with a transmission bandwidth of 400Mbps for both the downlink and uplink. The model sizes for communication between clients and the server are estimated to be 2.2MB for FashionMNIST and 3.53MB for CIFAR-10, respectively. Based on these values, we can compute the communication time between the server and clients.
		It is worth noting that although we set the same communication time for all clients, the varying computation capabilities of the clients are sufficient to reflect the impact of system heterogeneity. The computational heterogeneity among clients has a significant influence on the overall performance, even when assuming the same communication time for all clients.
		
		\subsubsection{Hyperparameter Selection}
		For all the experiments, the value of the global learning rate $\eta$ (respectively, the local learning rate $\bar{\eta}$) is selected from the set $\{ 0.1, 1.0 \}$ (respectively, $\{ 0.001, 0.005, 0.01, 0.05, 0.1 \}$).
		We report the numerical results based on the learning rates that yield the best convergence performance (note that the AsySG algorithm does not have a local learning rate).
		In all of the experiments, we utilize a total of $N=100$ clients to simulate the FL scenarios.
		Unless otherwise specified, each client performs local iterations with $K=50$ using the mini-batch SGD with a fixed batch size of $10$ in each step.
		We repeat all the experiments with three different random seeds and report the average results.
		
		\begin{table*}[h]
			\centering
			\resizebox{\textwidth}{!}{
				\renewcommand\arraystretch{1.5}
				\begin{tabular}{cccccccccccc}
					\hline \hline
					\multirow{2}{*}{}                &         \multirow{2}{*}{Algorithms}           &    \multirow{2}{*}{Accuracy}      & \multicolumn{4}{c}{FashionMNIST} &  \multirow{2}{*}{Accuracy}  & \multicolumn{4}{c}{CIFAR-10} \\ \cmidrule(lr){4-7} \cmidrule(lr){9-12}
					&         &  & $n=10$   & $n=20$   & $n=40$   & $n=80$ &  & $n=10$  & $n=20$  & $n=40$  & $n=80$ \\ \hline
					\multirow{4}{*}{iid}     & FedAvg   &    \multirow{4}{*}{ $90\%$}   &    51.89      &   179.1     &   184.9     &   179.83     &   \multirow{4}{*}{ $70\%$}    &  302.12     &   411.64    &    282.51   &   287.81   \\
					& FAVANO              &      &  424.25  &   448.89     &    556.25    &  $>$880      &       &   944.84    &    994.19   &   1269.1    &  $>$1400    \\
					& AsySG              &      &  335.16  &   297.57     &    231.22    &    215.38    &       &    501.60   &   369.08    &    400.92   &   345.79   \\
					& DeFedAvg-IID         &  &    \bf{26.39}    &   \bf{49.26}     &    \bf{94.49}    &    \bf{99.72}    &      &  \bf{54.41}     &  \bf{108.52}    &    \bf{207.45}    &  \bf{154.26}    \\ \hline 
					\multirow{4}{*}{non-iid} & FedAvg &  \multirow{4}{*}{ $80\%$}   &  437.57    &   447.81     &    211.67    &    443.56    &  \multirow{4}{*}{ $60\%$}   &    934.11   &  983.46     &    1541.2   & 1495.0     \\
					& FAVANO            &    &   417.21   &     382.01  &    441.85    &    $>$880    &      &   1332.6    &   1374.9   &  1713.3     &  4751.8    \\
					& FedBuff            &   &   198.54    &    218.26    &    604.93    &    422.88    &      &   394.16    &    789.63   &   2008.8    &   1463.5    \\
					& DeFedAvg-nIID       &  &    \bf{103.25}    &   \bf{144.49}     &   \bf{196.98}     &    \bf{236.30}    &       &   \bf{331.60}    &   \bf{497.51}    &   \bf{684.31}    &  \bf{786.85}  \\ \hline \hline
			\end{tabular}}
			\caption{Wall-clock time (in seconds) to achieve the target accuracies with $N=100$ and different values of $n$. 
			}
			\vspace{-2mm}
			\label{tab:main_experiments}
		\end{table*}
		
		\subsection{Numerical Results}
		The experiment results and their associated discussions are provided in the following.
		
		\subsubsection{Comparisons of Convergence Performance}
		Fig. \ref{fig:acc_loss0.1} depicts the convergence of DeFedAvg and its counterparts over wall-clock time for $n=10$. It is observed that DeFedAvg consistently demonstrates the most rapid decrease in training losses and increase in test accuracies.
		In the non-IID case, FedAvg faces similar challenges due to the presence of straggler clients. 
		On the other hand, while FedBuff allows for asynchronous training, it appears to be more prone to weak generalization. This could be because its training process heavily relies on fast clients, while the datasets at slower clients make relatively smaller contributions.
		By contrast, DeFedAvg-nIID enables uniform client sampling, which helps mitigate the problem by ensuring a balanced contribution from all clients. Additionally, when combined with delayed local training, DeFedAvg-nIID achieves faster convergence compared to other competitors.
		In the IID case, FedAvg is dragged down by the straggler clients due to the synchronization in each round, while AsySG necessitates frequent communication between clients and the server in the absence of local training.
		In comparison, the performance of DeFedAvg-IID is enhanced by leveraging both asynchronous updates and local iterations.
		
		In addition to Fig. \ref{fig:acc_loss0.1}, we provide numerical results of the compared algorithms for $n=20,40$, and $80$, as shown in Figures \ref{fig:convergence}, \ref{fig:acc_loss0.4}, and \ref{fig:acc_loss0.8}, respectively.
		The results demonstrate that our proposed DeFedAvg algorithms are scalable as the number of participating clients $n$ increases. 
		In all the experiments, DeFedAvg exhibits superior performance in terms of training losses and test accuracies.
		
		We also compare the wall-clock time of the considered algorithms to reach the target test accuracies for different numbers of participating clients $n$.
		As reported in Table \ref{tab:main_experiments}, DeFedAvg uses substantially less time than other algorithms in all cases.
		For instance, in the IID setting with $n=10$ on CIFAR-10, the time used for DeFedAvg-IID to reach $70\%$ accuracy is almost 1/5 of that for FedAvg, demonstrating the effectiveness of asynchronous updates in DeFedAvg-IID. In the non-IID settings, FedBuff spends less time than the FedAvg when $n$ is small, while the advantage diminishes as $n$ increases. 
		
		\subsubsection{Impact of $n$ in DeFedAvg}
		We explore the impact of the number of participating clients $n$ on the convergence behavior of our proposed DeFedAvg algorithms. 
		In principle, increasing $n$ enhances the system parallelism (leading to less communication rounds) while reduces the client asynchronism (incurring less waiting time). On the contrary, decreasing $n$ reduces the system parallelism while enhances the client asynchronism.
		The results, as depicted in Fig. \ref{fig:client_num}, demonstrate that increasing the number of participating clients expedites the convergence rates of DeFedAvg, which supports the convergence results presented in Theorems \ref{thm:hetero} and \ref{thm:iid}.
		The reason behind this phenomenon is that a larger value of $n$ allows for a more accurate estimation of the full gradient during global updates, thereby facilitating faster convergence over the communication rounds.
		Conversely, when $n$ is small, there is greater variance in the aggregated stochastic gradients, resulting in slower convergence rates and more oscillations in the performance curves.  
		
		\subsubsection{Impact of $K$ in DeFedAvg} 
		We conduct performance testing of DeFedAvg with varying values of local steps $K$. Specifically, we select $K=20, 50, 100$, and the convergence results over communication rounds are presented in Fig. \ref{fig:local_step}.
		As the value of $K$ increases, we observe that DeFedAvg-nIID demonstrates faster convergence rates. This behavior can potentially be attributed to the non-dominant term in the convergence bound \eqref{eq:thm2}, which has an order of $\mathcal{O}(1/KT)$ and is inversely proportional to $K$.
		Likewise, DeFedAvg-IID displays a similar pattern when $K$ increases, thereby supporting the discussion in Sections \ref{sec:thm-homo} regarding the impact of local steps on convergence.
		Moreover, we notice a decrease in the test accuracies on the CIFAR-10 dataset when $K=100$. This observation suggests that increasing the number of local steps may not always result in improved generalization performance.
		
		\begin{figure*}
			\centering
			\includegraphics[width=\textwidth]{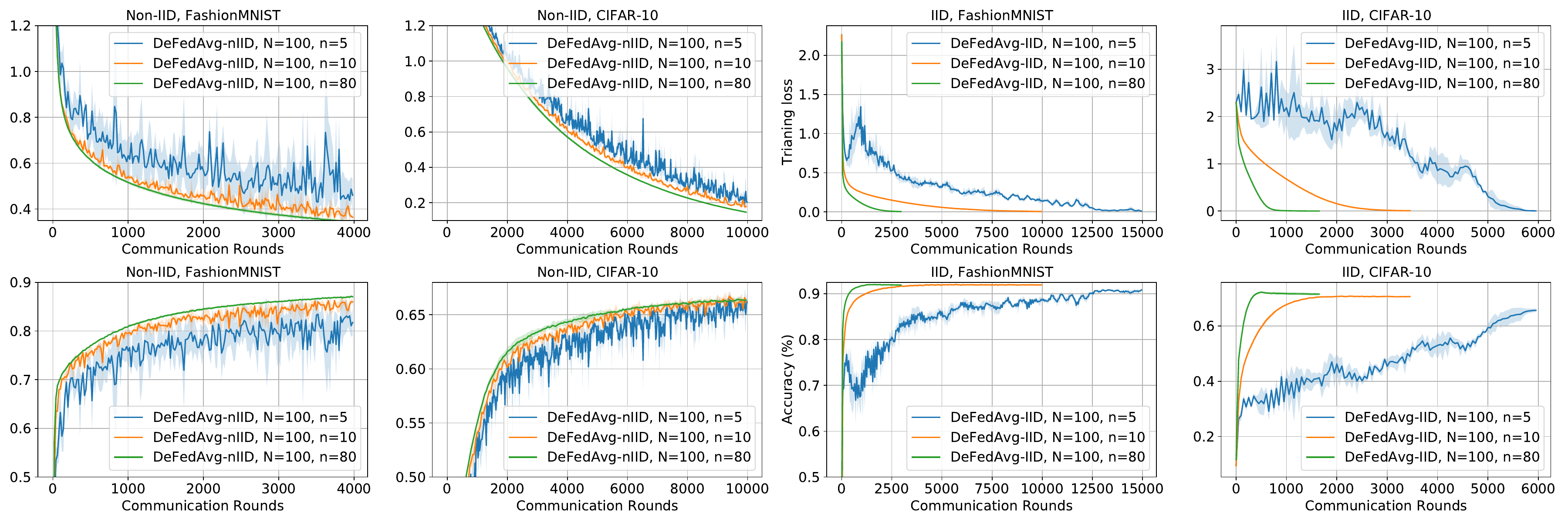}
			\vspace{-5mm}
			\caption{Convergence over communication rounds of DeFedAvg with different values of $n$.}
			\vspace{-2mm}
			\label{fig:client_num}
		\end{figure*}
		
		\begin{figure*}[ht]
			\centering
			\includegraphics[width=\textwidth]{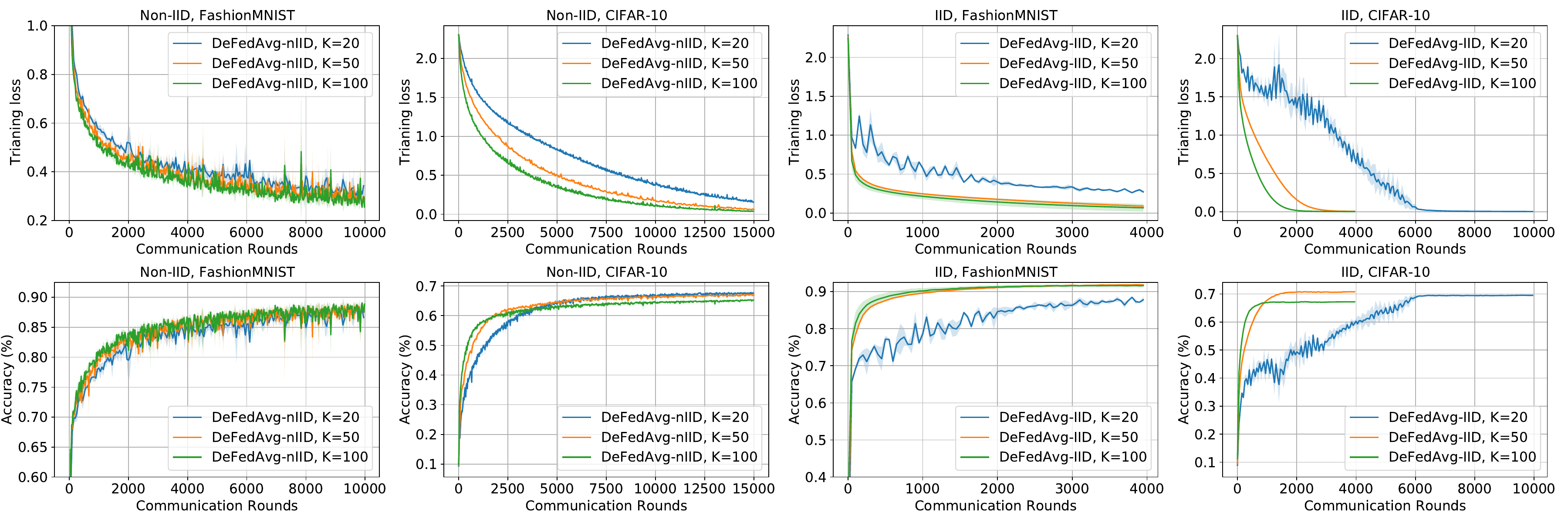}
			\vspace{-5mm}
			\caption{Convergence performance of DeFedAvg with different values of local steps $K$ ($N=100$ and $n=10$).}
			\vspace{-2mm}
			\label{fig:local_step}
		\end{figure*}
		
		\section{Conclusion}
		This paper introduces the DeFedAvg framework to address the straggler effect in FedAvg under system heterogeneity, by enabling asynchronous local training while utilizing receive/send buffers across clients. 
		Theoretical analyses demonstrate that DeFedAvg convergences at asymptotic rates comparable to its synchronous counterparts, and first provably achieves the linear speedup property in AFL, indicating its scalability potential. 
		The algorithmic and theoretical frameworks presented in this paper hold promise for application and extension to other FL problems. For example, they are likely applicable in addressing challenges in client-edge-cloud hierarchical FL \cite{liu2022hierarchical}, client dropout problems \cite{sun2023mimic,shao2022dres}, semi-decentralized AFL \cite{sun2023semi}, coded FL \cite{sun2023stochastic}, federated knowledge distillation \cite{shao2024selective}, over-the-air FL \cite{sun2023channel}, etc. By leveraging the insights from DeFedAvg, these methods can benefit from efficient asynchronous training, potentially improving their performance and scalability.

		\bibliographystyle{IEEEtran}
		\bibliography{DeFedAvg_arXiv}
		
		\vspace{-12mm}
		\begin{IEEEbiography}[{\includegraphics[width=1in,height=1.25in,clip,keepaspectratio]{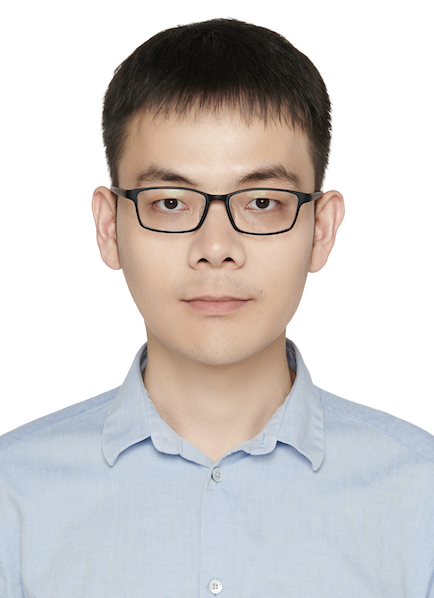}}]{Xiaolu Wang} received the B.Eng. degree in telecommunications engineering and the M.Sc. degree in communication and information systems from Xidian University, Xi’an, China, in 2014 and 2017, respectively, and the Ph.D. degree in systems engineering and engineering management (SEEM) from The Chinese University of Hong Kong (CUHK), Hong Kong, China, in 2022. He was a Research Associate with the Department of SEEM at CUHK. He is currently a Postdoctoral Fellow with the Department of Electronic and Computer Engineering, Hong Kong University of Science and Technology (HKUST), Hong Kong, China. His research interests include optimization, data science, and networked systems.
		\end{IEEEbiography}
		
		\vspace{-12mm}
		\begin{IEEEbiography}[{\includegraphics[width=1in,height=1.25in,clip,keepaspectratio]{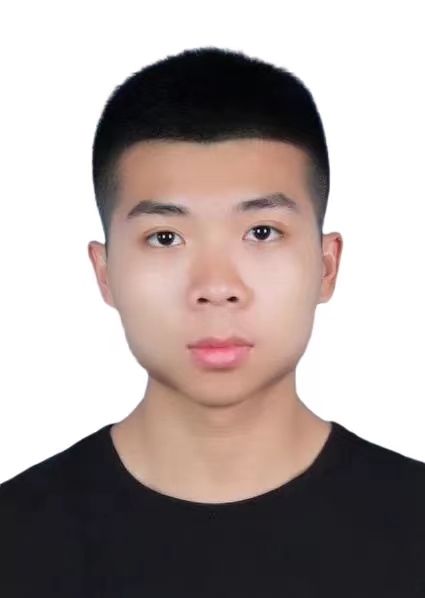}}]{Zijian Li} (Graduate Student Member, IEEE) received the B.Eng. degree in Electrical Engineering and Automation from the South China University of Technology in 2020, and the M.Sc. degree in electronic and information engineering from the Hong Kong Polytechnic University in 2022. He is currently pursuing a Ph.D. degree in the Department of Electronic and Computer Engineering at the Hong Kong University of Science and Technology. His research interest is federated learning.
		\end{IEEEbiography}
		
		\vspace{-12mm}
		\begin{IEEEbiography}[{\includegraphics[width=1in,height=1.25in,clip,keepaspectratio]{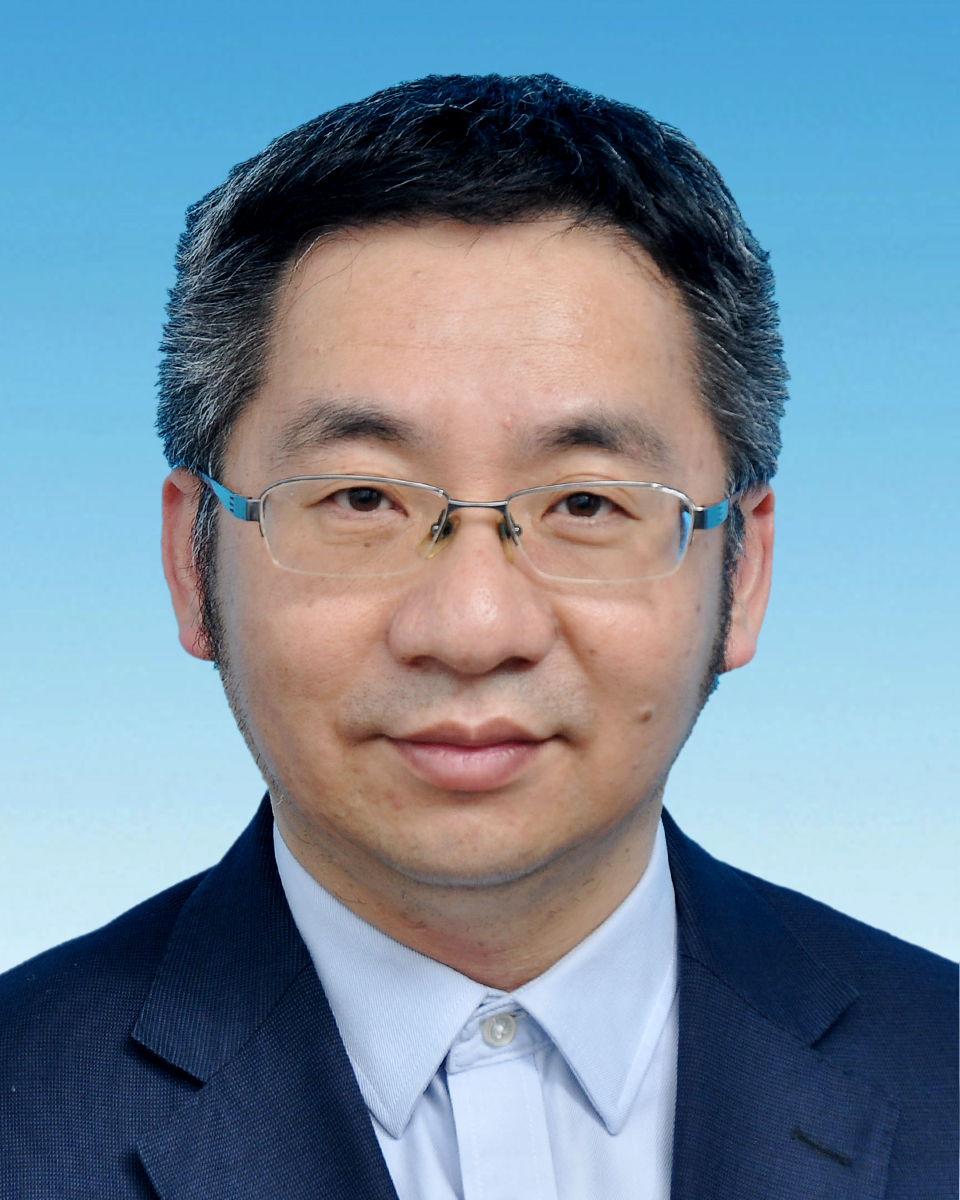}}]{Shi Jin} (Fellow, IEEE) received the B.S. degree in communications engineering from Guilin University of Electronic Technology, Guilin, China, in 1996, the M.S. degree from Nanjing University of Posts and Telecommunications, Nanjing, China, in 2003, and the Ph.D. degree in information and communications engineering from the Southeast University, Nanjing, in 2007. From June 2007 to October 2009, he was a Research Fellow with the Adastral Park Research Campus, University College London, London, U.K. He is currently with the faculty of the National Mobile Communications Research Laboratory, Southeast University. His research interests include wireless communications, random matrix theory, and information theory. He is serving as an Area Editor for the Transactions on Communications and IET Electronics Letters. He was an Associate Editor for the IEEE Transactions on Wireless Communications, and IEEE Communications Letters, and IET Communications. Dr. Jin and his co-authors have been awarded the 2011 IEEE Communications Society Stephen O. Rice Prize Paper Award in the field of communication theory, the IEEE Vehicular Technology Society 2023 Jack Neubauer Memorial Award, a 2022 Best Paper Award and a 2010 Young Author Best Paper Award by the IEEE Signal Processing Society.
		\end{IEEEbiography}
		
		\vspace{-12mm}
		\begin{IEEEbiography}[{\includegraphics[width=1in,height=1.25in,clip,keepaspectratio]{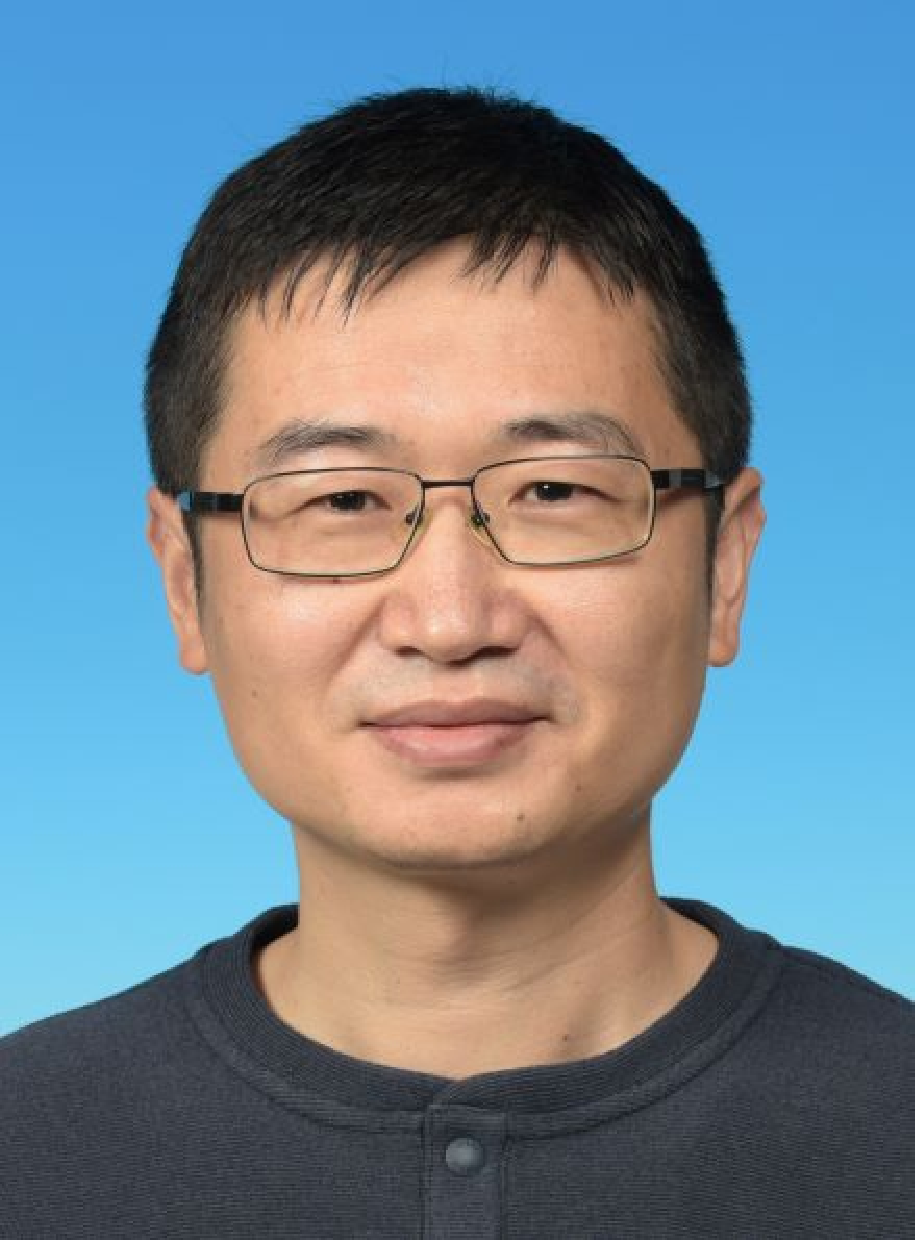}}]{Jun Zhang} (Fellow, IEEE) received the B.Eng. degree in Electronic Engineering from the University of Science and Technology of China in 2004, the M.Phil. degree in Information Engineering from the Chinese University of Hong Kong in 2006, and the Ph.D. degree in Electrical and Computer Engineering from the University of Texas at Austin in 2009. He is an Associate Professor in the Department of Electronic and Computer Engineering at the Hong Kong University of Science and Technology. His research interests include wireless communications and networking, mobile edge computing and edge AI, and cooperative AI. Dr. Zhang co-authored the book Fundamentals of LTE (Prentice-Hall, 2010). He is a co-recipient of several best paper awards, including the 2021 Best Survey Paper Award of the IEEE Communications Society, the 2019 IEEE Communications Society \& Information Theory Society Joint Paper Award, and the 2016 Marconi Prize Paper Award in Wireless Communications. Two papers he co-authored received the Young Author Best Paper Award of the IEEE Signal Processing Society in 2016 and 2018, respectively. He also received the 2016 IEEE ComSoc Asia-Pacific Best Young Researcher Award. He is an Editor of IEEE Transactions on Communications, IEEE Transactions on Machine Learning in Communications and Networking, and was an editor of IEEE Transactions on Wireless Communications (2015-2020). He served as a MAC track co-chair for IEEE Wireless Communications and Networking Conference (WCNC) 2011 and a co-chair for the Wireless Communications Symposium of IEEE International Conference on Communications (ICC) 2021. He is an IEEE Fellow and an IEEE ComSoc Distinguished Lecturer.
		\end{IEEEbiography}
		
		\clearpage
		\appendices
		\section{Proofs of Theorems}
		In this appendix, we provide the proofs of Theorems \ref{thm:hetero} and \ref{thm:iid}. To begin with, we introduce two facts that will be frequently used.
		
		\begin{fact}\label{fact:expectation}
			Let $U$ and $V$ be two random variables, and $\mathcal{F}$ be a sigma algebra. If $U$ is $\mathcal{F}$-measurable, then we have
			\[
			\E[UV]  = \E [U \E [V \vert \mathcal{F}]].
			\]
		\end{fact}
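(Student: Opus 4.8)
\textit{Proof proposal.}
The plan is to invoke two standard properties of conditional expectation: the tower property (law of total expectation) and the ``taking out what is known'' property. First I would write, via the tower property,
\[
\E[UV] = \E\big[\,\E[UV \mid \mathcal{F}]\,\big].
\]
Then, since $U$ is $\mathcal{F}$-measurable, I would pull $U$ outside the inner conditional expectation, obtaining $\E[UV \mid \mathcal{F}] = U\,\E[V \mid \mathcal{F}]$ almost surely. Substituting this back gives $\E[UV] = \E\big[U\,\E[V\mid\mathcal{F}]\big]$, which is the claim. Throughout, I would tacitly assume the integrability needed for these manipulations to be well-defined (e.g.\ $U$, $V$, and $UV$ integrable, or $U,V\geq 0$), which is the case for every instance in which this fact is applied in the sequel, since the random variables there are gradients of smooth functions evaluated at iterates with bounded second moments.

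The only step that is not purely formal is justifying the pull-out identity $\E[UV\mid\mathcal{F}] = U\,\E[V\mid\mathcal{F}]$; I would establish it by the usual measure-theoretic bootstrap. For $U = \mathbf{1}_A$ with $A \in \mathcal{F}$, the identity follows directly from the defining property of conditional expectation, namely that for any $B \in \mathcal{F}$ one has $\E[\mathbf{1}_B \E[\mathbf{1}_A V \mid \mathcal{F}]] = \E[\mathbf{1}_B \mathbf{1}_A V] = \E[\mathbf{1}_{A\cap B} V] = \E[\mathbf{1}_{A\cap B}\E[V\mid\mathcal{F}]] = \E[\mathbf{1}_B \mathbf{1}_A \E[V\mid\mathcal{F}]]$, using $A\cap B \in \mathcal{F}$. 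By linearity the identity extends to $\mathcal{F}$-measurable simple functions $U$, then to nonnegative $\mathcal{F}$-measurable $U$ by monotone convergence, and finally to general $\mathcal{F}$-measurable $U$ by the decomposition $U = U_+ - U_-$.

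I expect no genuine obstacle here: this is a textbook identity and the argument is short. The one point to be careful about is that the law of total expectation and the pull-out property both implicitly require the relevant products to be integrable, so the statement should be read as holding whenever $\E[UV]$ and $\E[U\,\E[V\mid\mathcal{F}]]$ are well-defined; this caveat is harmless for the applications in the convergence analysis, where all moments encountered are finite by Assumptions \ref{as:smooth}--\ref{as:bounded-grad}.
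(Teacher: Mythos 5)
Your proposal is correct and follows essentially the same route as the paper: apply the law of total expectation and then pull the $\mathcal{F}$-measurable factor $U$ out of the conditional expectation. The extra detail you provide (the indicator-to-simple-to-general bootstrap for the pull-out identity and the integrability caveat) is sound but goes beyond what the paper records, which simply cites both properties as standard.
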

		\begin{proof}
			Since $U$ is $\mathcal{F}$-measurable, we have 
			\[
			\E[UV \vert \mathcal{F}] = U \E [V \vert \mathcal{F}].
			\]
			Then, it follows from the law of total expectation that 
			\[
			\E[UV]  = \E[\E [UV \vert \mathcal{F}]] = \E [U \E [V \vert \mathcal{F}]],
			\]
			as desired.
		\end{proof}
		
		\begin{fact}\label{fact:sum}
			Suppose that $m\geq2$ is an integer and $\bx_1,\dots,\bx_m$ are vectors in the same inner product space. Then, the following inequality holds:
			\[
			\left\| \sum_{i=1}^{m} \bx_i \right\|_2^2 \leq m \sum_{i=1}^{m} \|\bx_i\|_2^2.
			\]
		\end{fact}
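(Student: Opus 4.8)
The plan is to reduce the claim to elementary facts about inner products. First I would expand the left-hand side using bilinearity:
\[
\left\| \sum_{i=1}^{m} \bx_i \right\|_2^2 = \sum_{i=1}^{m} \sum_{j=1}^{m} \langle \bx_i, \bx_j \rangle .
\]
Next I would bound each summand from above by $\langle \bx_i,\bx_j\rangle \leq \tfrac12\left(\|\bx_i\|_2^2 + \|\bx_j\|_2^2\right)$, which itself is immediate from expanding $\|\bx_i - \bx_j\|_2^2 \geq 0$ (equivalently, apply Cauchy--Schwarz followed by $ab \leq (a^2+b^2)/2$). Substituting this into the double sum and using the symmetry of the roles of $i$ and $j$, the two halves of the symmetrized bound coincide, so the double sum collapses to $\sum_{i=1}^{m}\sum_{j=1}^{m}\|\bx_i\|_2^2 = m\sum_{i=1}^{m}\|\bx_i\|_2^2$, which is precisely the desired inequality.

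An equally short route is to invoke the convexity of the squared-norm map $\bx \mapsto \|\bx\|_2^2$: Jensen's inequality applied to the average $\tfrac1m\sum_{i=1}^m \bx_i$ gives $\bigl\|\tfrac1m\sum_{i=1}^m \bx_i\bigr\|_2^2 \leq \tfrac1m\sum_{i=1}^m \|\bx_i\|_2^2$, and multiplying both sides by $m^2$ yields the stated bound. Since the claim is a routine consequence of either argument, there is no genuine obstacle here; the only point meriting a moment's care is the symmetrization step in the first approach, where one must check that after applying the pairwise bound no cross terms $\langle \bx_i,\bx_j\rangle$ with $i\neq j$ survive.
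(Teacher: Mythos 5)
Both of your arguments are correct and complete: the symmetrization route closes cleanly because after applying $\langle \bx_i,\bx_j\rangle \leq \tfrac12(\|\bx_i\|_2^2+\|\bx_j\|_2^2)$ no inner-product terms remain and the double sum evaluates to exactly $m\sum_{i=1}^m\|\bx_i\|_2^2$, and the Jensen route is equally valid. The paper states this fact without any proof (treating it as the standard Cauchy--Schwarz/Jensen bound), so there is no authorial argument to compare against; either of your derivations fully justifies the claim.
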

		For notational convenience in the sequel, we define $\bG_i^t \coloneqq \bG_i^t$, $\bG_i^{t,k} \coloneqq \nabla F_i ( \bw_i^{\tauit,k} )$, and $\bg_i^{t,k} \coloneqq f ( \bw_i^{\tauit,k}; \bxi_i^{t,k} )$ for $i \in [N]$ and $t,k \in \mathbb{N}$. We use $\E_{X}[\cdot]$ to denote the conditional expectation by taking expectation wrt $X$ while holding other variables constant. 
		
		\subsection{Proof of Theorem \ref{thm:hetero}}\label{sec:proof2}
		
		\subsubsection{Technical Lemmas}
		\begin{lemma}\label{lem:inner-prod-hetero}
			Suppose that Assumptions \ref{as:smooth} and \ref{as:unbias} hold. Then, it hold for all $t \geq 0$ that
			\begin{align*}
				&\E \langle \nabla F(\bw^t), \etabar K \nabla F(\bw^t) - \bm{g}_{\textnormal{nIID}}^t \rangle
				\nonumber
				\\
				\leq& \frac{K}{2} \etabar \E \| \nabla F(\bw^t) \|_2^2 
				+ \frac{L^2}{N} \etabar \sum_{i=1}^{N} \sum_{k=0}^{K-1} \E \| \bw_i^{\tauit,k} - \bw^{\tauit} \|_2^2
				\nonumber
				\\
				&\hspace{-1mm}+\hspace{-1mm} \frac{L^2 K}{N} \etabar \sum_{i=1}^{N} \E \| \bw^{\tauit} \hspace{-0.5mm}-\hspace{-0.5mm} \bw^t \|_2^2
				\hspace{-0.5mm}-\hspace{-0.5mm} \frac{1}{2K} \etabar \E \left\| \frac{1}{N} \hspace{-0.5mm}\sum_{i=1}^{N}\hspace{-0.5mm} \sum_{k=0}^{K-1} \hspace{-0.5mm}\bG_i^{t,k} \right\|_2^2\hspace{-0.5mm}.
			\end{align*}
		\end{lemma}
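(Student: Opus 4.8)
\emph{Proof strategy.} The plan is to strip the two layers of randomness in $\bm{g}_{\textnormal{nIID}}^t$---the uniform client sampling and the fresh stochastic gradients---so that the inner product reduces to a deterministic-looking quantity, and then to bound that quantity via a rescaled polarization identity together with $L$-smoothness. First I would condition on everything except the sampling multiset $\mathcal{I}_t$. Since the server draws $n$ clients uniformly with replacement (so $\mathcal{I}_t$ is a size-$n$ multiset) and these draws are independent of the iterates $\bw^t$, $\bw_i^{\tauit,k}$ and of the send-buffer contents, we have $\E_{\mathcal{I}_t}[\bm{g}_{\textnormal{nIID}}^t] = \frac{1}{N}\sum_{i=1}^{N}\bm{\Delta}_i^t = \frac{\etabar}{N}\sum_{i=1}^{N}\sum_{k=0}^{K-1}\bg_i^{t,k}$, and hence, by Fact~\ref{fact:expectation}, $\E\langle\nabla F(\bw^t),\bm{g}_{\textnormal{nIID}}^t\rangle = \frac{\etabar}{N}\sum_{i=1}^{N}\sum_{k=0}^{K-1}\E\langle\nabla F(\bw^t),\bg_i^{t,k}\rangle$. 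Next I would apply the law of total expectation again, now conditioning on a sigma algebra $\mathcal{F}$ that measures $\bw^t$ and all the $\bw_i^{\tauit,k}$ but not the samples $\bxi_i^{t,k}$; by the time-indexing convention (the local run producing $\bxi_i^{t,k}$ is only aggregated at round $t$, so $\bw^0,\dots,\bw^t$ and $\bw_i^{\tauit,0},\dots,\bw_i^{\tauit,k}$ are independent of $\bxi_i^{t,k}$) together with Assumption~\ref{as:unbias}, each $\bg_i^{t,k}$ appearing in an inner product with $\nabla F(\bw^t)$ may be replaced by $\bG_i^{t,k} = \nabla F_i(\bw_i^{\tauit,k})$. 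Using $\frac{1}{N}\sum_{i=1}^{N}\nabla F_i(\bw^t) = \nabla F(\bw^t)$, the left-hand side of the lemma then equals $\etabar\bigl(K\,\E\|\nabla F(\bw^t)\|_2^2 - \E\langle\nabla F(\bw^t),S^t\rangle\bigr)$, where $S^t := \frac{1}{N}\sum_{i=1}^{N}\sum_{k=0}^{K-1}\bG_i^{t,k}$.

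For the second half I would invoke the polarization identity in rescaled form, $\langle a,S^t\rangle = \langle\sqrt{K}\,a,\tfrac{1}{\sqrt{K}}S^t\rangle = \tfrac{1}{2}\bigl(K\|a\|_2^2 + \tfrac{1}{K}\|S^t\|_2^2 - \tfrac{1}{K}\|Ka-S^t\|_2^2\bigr)$, with $a = \nabla F(\bw^t)$. Substituting gives $K\|a\|_2^2 - \langle a,S^t\rangle = \tfrac{K}{2}\|a\|_2^2 - \tfrac{1}{2K}\|S^t\|_2^2 + \tfrac{1}{2K}\|Ka-S^t\|_2^2$, which already produces the desired leading term $\tfrac{K}{2}\etabar\,\E\|\nabla F(\bw^t)\|_2^2$ and the negative term $-\tfrac{1}{2K}\etabar\,\E\|S^t\|_2^2$; it then remains to control $\tfrac{1}{2K}\etabar\,\E\|Ka-S^t\|_2^2$. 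Since $Ka - S^t = \frac{1}{N}\sum_{i=1}^{N}\sum_{k=0}^{K-1}\bigl(\nabla F_i(\bw^t) - \nabla F_i(\bw_i^{\tauit,k})\bigr)$ is the average of $NK$ vectors, Fact~\ref{fact:sum} bounds its squared norm by $\tfrac{K}{N}\sum_{i=1}^{N}\sum_{k=0}^{K-1}\|\nabla F_i(\bw^t) - \nabla F_i(\bw_i^{\tauit,k})\|_2^2$; applying $L$-smoothness (Assumption~\ref{as:smooth}) and then Fact~\ref{fact:sum} with $m=2$ gives $\|\bw^t - \bw_i^{\tauit,k}\|_2^2 \le 2\|\bw^t - \bw^{\tauit}\|_2^2 + 2\|\bw^{\tauit} - \bw_i^{\tauit,k}\|_2^2$. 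Summing over $k$ picks up an extra factor $K$ on the first piece, reproducing the $\tfrac{L^2 K}{N}\etabar\sum_{i=1}^{N}\E\|\bw^{\tauit} - \bw^t\|_2^2$ term, while the second piece reproduces the $\tfrac{L^2}{N}\etabar\sum_{i=1}^{N}\sum_{k=0}^{K-1}\E\|\bw_i^{\tauit,k} - \bw^{\tauit}\|_2^2$ term; adding the three contributions yields the stated bound.

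The main obstacle---and essentially the only subtle point---is the first reduction: verifying that conditioning eliminates both the sampling noise and the gradient noise without introducing bias. This is exactly the issue highlighted just before the lemma: one needs $\bw^t$ and each $\bw_i^{\tauit,k}$ to be $\mathcal{F}$-measurable while $\bxi_i^{t,k}$ is independent of $\mathcal{F}$, which is precisely what the time-index on $\bxi_i^{t,k}$ guarantees; had this sample instead been drawn in some earlier round, the identity $\E\langle\nabla F(\bw^t),\bg_i^{t,k}\rangle = \E\langle\nabla F(\bw^t),\bG_i^{t,k}\rangle$ would fail and Assumption~\ref{as:unbias} could not be applied. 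Everything downstream---the rescaled polarization identity, the two uses of Fact~\ref{fact:sum}, and the smoothness bound---is routine, the only care being the bookkeeping of the $K$ and $1/N$ factors.
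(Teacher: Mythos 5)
Your proposal is correct and follows essentially the same route as the paper's proof: unbiasedness of the uniform-with-replacement sampling to replace $\mathcal{I}_t$ by the full average, Assumption~\ref{as:unbias} (via Fact~\ref{fact:expectation}) to swap $\bg_i^{t,k}$ for $\bG_i^{t,k}$, the polarization identity $\langle \bx,\by\rangle = (\|\bx\|_2^2+\|\by\|_2^2-\|\bx-\by\|_2^2)/2$ with the same $\sqrt{K}$ rescaling, and Fact~\ref{fact:sum} plus $L$-smoothness to bound the residual term. The only cosmetic difference is that you apply smoothness to $\nabla F_i(\bw^t)-\nabla F_i(\bw_i^{\tauit,k})$ before splitting $\bw^t-\bw_i^{\tauit,k}$ into two pieces, whereas the paper splits the gradient difference first; both yield identical constants.
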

		\begin{proof}
			Using the local iteration formula \eqref{eq:local-iter'}, we have 
			$
			\bm{\Delta}_i^t = \bw_i^{\tauit,0} - \bw_i^{\tauit,K} = \sum_{k=0}^{K-1} \etabar \bg_i^{t,k},
			$
			which implies that
			\begin{align}
				\bm{g}_{\textnormal{nIID}}^t = \frac{1}{|\mathcal{I}_t|} \sum_{i \in \mathcal{I}_t} \bm{\Delta}_i^t = \frac{\etabar}{n} \sum_{i \in \mathcal{I}_t} \sum_{k=0}^{K-1} \bg_i^{t,k}.
				\label{eq:ht}
			\end{align}
			Then, we have 
			\begin{align}
				&\E_{\mathcal{I}_t} [\bm{g}_{\textnormal{nIID}}^t]
				= \E_{\mathcal{I}_t} \left[ \frac{1}{n} \sum_{i\in\mathcal{I}_t} \bm{\Delta}_i^t \right]
				= \E_{i_1^t,\dots,i_n^t} \left[ \frac{1}{n} \sum_{m=1}^{n} \bm{\Delta}_{i_m^t}^t \right]
				\nonumber
				\\
				&\stackrel{(a)}{=} \hspace{-0.5mm}\frac{1}{n} \hspace{-0.5mm} \left( \E_{i_1^t} [\bm{\Delta}_{i_1^t}^t] + \dots + \E_{i_n^t} [\bm{\Delta}_{i_n^t}^t] \right)
				\hspace{-0.5mm}\stackrel{(b)}{=}\hspace{-0.5mm} \E_{i_1^t} [\bm{\Delta}_{i_1^t}^t]
				\hspace{-0.5mm}\stackrel{(c)}{=}\hspace{-0.5mm} \sum_{i=1}^{N} \frac{1}{N} \bm{\Delta}_i^t
				\nonumber
				\\
				&= \frac{1}{N} \sum_{i=1}^{N} \sum_{k=0}^{K-1} \etabar \bg_i^{t,k},
				\label{eq:unbias}
			\end{align}
			where $(a)$ and $(b)$ are because $i_1^t, \dots,i_n^t$ are independent and identically distributed, respectively, $(c)$ is implies by the uniform distribution of $i_1^t$. 
			We remark that the terms $\sum_{k=0}^{K-1} \etabar \nabla f ( \bw_i^{\tauit,k}; \bxi_i^{t,k} )$ in \eqref{eq:unbias} with $i \notin \mathcal{I}_t$ are \textit{virtual}. They are introduced for mathematical convenience, although the calculations associated with these terms may not occur in practical systems.
			Further, it follows from Fact \ref{fact:expectation} that
			\begin{align}
				& \E \langle \nabla F(\bw^t),  \etabar K \nabla F(\bw^t) - \bm{g}_{\textnormal{nIID}}^t \rangle
				\nonumber
				\\
				=&~ \etabar \E \left\langle \nabla F(\bw^t), K \nabla F(\bw^t) - \frac{1}{N} \sum_{i=1}^{N} \sum_{k=0}^{K-1} \bg_i^{t,k} \right\rangle
				\nonumber
				\\
				\stackrel{(a)}{=}& \etabar \E \hspace{-0.5mm}\left\langle\hspace{-1.5mm} \sqrt{K} \nabla F(\bw^t), 
				\frac{1}{\sqrt{K} n} \hspace{-1mm}\sum_{i=1}^{N} \hspace{-1mm} \left( \hspace{-1.2mm}K \nabla F_i(\bw^t) \hspace{-0.8mm}-\hspace{-1.5mm} \sum_{k=0}^{K-1} \hspace{-1mm}\E_{\bxi_i^{t,k}} \hspace{-1mm}\left[\hspace{-0.5mm} \bg_i^{t,k} \right] \hspace{-1mm}\right) \hspace{-1.5mm}\right\rangle\hspace{-1mm},
				\nonumber
				\\
				\stackrel{(b)}{=}& \etabar \E \hspace{-0.5mm}\left\langle\hspace{-0.5mm} \sqrt{K} \nabla F(\bw^t), 
				\frac{1}{\sqrt{K} n} \sum_{i=1}^{N} \hspace{-0.5mm} \left( \hspace{-0.5mm}K \nabla F_i(\bw^t) \hspace{-0.5mm}-\hspace{-0.5mm} \sum_{k=0}^{K-1} \bG_i^{t,k} \hspace{-0.5mm}\right) \hspace{-0.5mm}\right\rangle,
				\nonumber
				\\
				\stackrel{(c)}{=}& \frac{K}{2} \etabar \E \| \nabla F(\bw^t) \|_2^2 
				- \frac{1}{2K} \etabar \E \left\| \frac{1}{N} \sum_{i=1}^{N} \sum_{k=0}^{K-1} \bG_i^{t,k} \right\|_2^2
				\nonumber
				\\
				& + \frac{1}{2K} \etabar \underbrace{\E \left\| \frac{1}{N} \sum_{i=1}^{N} \sum_{k=0}^{K-1} \left( \bG_i^{t,k} - \nabla F_i(\bw^t) \right) \right\|_2^2}_{X^t},
				\label{eq:inner-prod'}
			\end{align}
			where $(a)$ is implied by Fact \ref{fact:expectation}, $(b)$ follows from Assumption \ref{as:unbias}, and $(c)$ uses the identity $\langle \bx, \by \rangle = (\| \bx \|_2^2 + \| \by \|_2^2 -  \| \bx - \by \|_2^2)/2$ for vectors $\bx$ and $\by$.
			To upper bound $X^t$, we have
			\begin{align}
				X^t 
				\stackrel{(a)}{\leq}& 2 \E \left\| \frac{1}{N} \sum_{i=1}^{N} \sum_{k=0}^{K-1} \left( \bG_i^{t,k} - \nabla F_i ( \bw^{\tauit} ) \right) \right\|_2^2
				\nonumber
				\\
				&+ 2 \E \left\| \frac{1}{N} \sum_{i=1}^{N} \sum_{k=0}^{K-1} \left( \nabla F_i ( \bw^{\tauit} ) - \nabla F_i(\bw^t) \right) \right\|_2^2
				\nonumber
				\\
				\stackrel{(b)}{\leq}& \frac{2}{N^2} \sum_{i=1}^{N} \sum_{k=0}^{K-1} N K \E \| \bG_i^{t,k} - \nabla F_i(\bw^{\tauit}) \|_2^2
				\nonumber
				\\
				&+ \frac{2}{N^2} \sum_{i=1}^{N} \sum_{k=0}^{K-1} N K \E \| \nabla F_i ( \bw^{\tauit} ) - \nabla F_i(\bw^t) \|_2^2
				\nonumber
				\\
				\stackrel{(c)}{\leq}& \frac{2 L^2 K}{N} \sum_{i=1}^{N} \sum_{k=0}^{K-1} \E \| \bw_i^{\tauit,k} - \bw^{\tauit} \|_2^2
				\nonumber
				\\
				&+ \frac{2 L^2 K^2}{N} \sum_{i=1}^{N} \E \| \bw^{\tauit} - \bw^t \|_2^2,
				\label{eq:Xt'}
			\end{align}
			where $(a)$ and $(b)$ follow from Fact \ref{fact:sum} with $m = 2$ and $m=n K$, respectively, and $(c)$ is implied by Assumption \ref{as:smooth}. Plugging \eqref{eq:Xt'} back into \eqref{eq:inner-prod'} gives the desired result.
		\end{proof}
		
		\begin{lemma}\label{lem:sumsum'}
			Suppose that Assumptions \ref{as:unbias}, \ref{as:bounded-var}, and \ref{as:bounded-delay} hold, and $\mathcal{I}_t = \{j_1^t,\dots,j_n^t\}$ is a multiset whose elements are independently and uniformly sampled from $[N]$ with replacement in round $t$ ($t \geq 0$) of Algorithm \ref{algo2}. Then, it holds for all $t \geq 0$ that
			\begin{align}
				\E \left\| \sum_{j \in \mathcal{I}_t} \sum_{k=0}^{K-1} ( \bg_j^{t,k} - \bG_j^{t,k} ) \right\|_2^2
				&\leq 2 n K \sigma^2,
				\label{eq:lema-heter} 
			\end{align}
			Besides, it holds for all $i \in [n]$ and $t \geq 1$ that
			\begin{align}
				\E \left\| \sum_{s=\tauit}^{t-1} \sum_{j \in \mathcal{I}_s} \sum_{k=0}^{K-1} ( \bg_j^{s,k} - \bG_j^{s,k} ) \right\|_2^2 
				&\leq 2 n K {\lambda} \sigma^2.
				\label{eq:lemb-heter}
			\end{align}
		\end{lemma}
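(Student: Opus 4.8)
The plan is to view each difference $\bg_j^{t,k}-\bG_j^{t,k}$ as a conditionally zero-mean noise term and to exploit orthogonality, so that the squared norm of a sum collapses into a sum of conditional variances. The factor $2$ in \eqref{eq:lema-heter} will come from the fact that $\mathcal{I}_t$ is sampled \emph{with replacement} (repeated indices share the same noise), and the factor $\lambda$ in \eqref{eq:lemb-heter} from the bounded-delay Assumption~\ref{as:bounded-delay}.

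For \eqref{eq:lema-heter}, I would first set $\bm{z}_i^t \coloneqq \sum_{k=0}^{K-1}(\bg_i^{t,k}-\bG_i^{t,k})$ for every $i\in[N]$ (including the ``virtual'' clients). Since $\bw_i^{\tauit,k}$ is measurable with respect to the sigma algebra generated by the rounds preceding $t$ together with $\bxi_i^{t,0},\dots,\bxi_i^{t,k-1}$, while $\bxi_i^{t,k}$ is fresh, Assumptions~\ref{as:unbias} and~\ref{as:bounded-var} combined with Fact~\ref{fact:expectation} give $\E[\bg_i^{t,k}-\bG_i^{t,k}\mid\cdot]=0$ and annihilate the cross terms in $k$, so $\E\|\bm{z}_i^t\|_2^2\le K\sigma^2$; moreover, conditioned on the past global iterates the $\bm{z}_i^t$'s are independent across $i$ and zero-mean, whence $\E\langle\bm{z}_i^t,\bm{z}_{i'}^t\rangle=0$ for $i\ne i'$ and $\E\|\sum_{i=1}^N\bm{z}_i^t\|_2^2\le NK\sigma^2$. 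Writing the left side of \eqref{eq:lema-heter} as $\E\|\sum_{m=1}^n\bm{z}_{j_m^t}^t\|_2^2$ and using that $j_1^t,\dots,j_n^t$ are i.i.d.\ uniform on $[N]$ and independent of the noise, the diagonal contributes $\sum_{m=1}^n\E\|\bm{z}_{j_m^t}^t\|_2^2=\frac{n}{N}\sum_{i=1}^N\E\|\bm{z}_i^t\|_2^2\le nK\sigma^2$, while each off-diagonal term equals $\frac{1}{N^2}\E\|\sum_{i=1}^N\bm{z}_i^t\|_2^2\le\frac{K\sigma^2}{N}$, so the $n(n-1)$ of them add at most $nK\sigma^2$ (using $n\le N$); the two contributions sum to $2nK\sigma^2$.

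For \eqref{eq:lemb-heter}, put $\bm{W}_s\coloneqq\sum_{j\in\mathcal{I}_s}\sum_{k=0}^{K-1}(\bg_j^{s,k}-\bG_j^{s,k})$, so that \eqref{eq:lema-heter} already yields $\E\|\bm{W}_s\|_2^2\le 2nK\sigma^2$. The remaining task is to show the cross terms $\E\langle\bm{W}_s,\bm{W}_{s'}\rangle$, $s\ne s'$, vanish and that the sum spans at most $\lambda$ rounds. Because every summand of $\bm{W}_s$ is zero-mean conditioned on the iterate it evaluates, and that iterate is determined by the rounds preceding $s$ together with the round-$s$ samples of smaller local index, $\{\bm{W}_s\}_s$ is a martingale-difference sequence for the natural round-level filtration; for $s<s'$, $\bm{W}_s$ is measurable with respect to the sigma algebra on which $\E[\bm{W}_{s'}\mid\cdot]=0$, hence $\E\langle\bm{W}_s,\bm{W}_{s'}\rangle=0$. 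Thus for any \emph{fixed} $r$ one has $\E\|\sum_{s=r}^{t-1}\bm{W}_s\|_2^2=\sum_{s=r}^{t-1}\E\|\bm{W}_s\|_2^2$, and since $\tauit\ge(t-\lambda)_+$ there are at most $\lambda$ terms, which would give $2nK\lambda\sigma^2$.

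I expect the main obstacle to be that the lower limit $\tauit$ of the sum in \eqref{eq:lemb-heter} is itself \emph{random}: the event $\{s\ge\tauit\}$ refers to round $t$ and is in general not measurable with respect to the round-$s'$ information used to annihilate the cross terms, so they cannot simply be discarded. I would resolve this by conditioning on the sigma algebra $\mathcal{S}$ generated by the entire scheduling/delay pattern and all the selection sets $\mathcal{I}_0,\mathcal{I}_1,\dots$, which under the standard model is independent of the SGD sampling noise. Given $\mathcal{S}$, the value $\tauit$ becomes a constant and the martingale-difference property of $\{\bm{W}_s\}$ with respect to the remaining noise is preserved, so $\E[\|\sum_{s=\tauit}^{t-1}\bm{W}_s\|_2^2\mid\mathcal{S}]=\sum_{s=\tauit}^{t-1}\E[\|\bm{W}_s\|_2^2\mid\mathcal{S}]\le\sum_{s=(t-\lambda)_+}^{t-1}\E[\|\bm{W}_s\|_2^2\mid\mathcal{S}]$ by non-negativity of the summands; taking expectations and applying \eqref{eq:lema-heter} to each of the at most $\lambda$ rounds closes the argument.
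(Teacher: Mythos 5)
Your proof is correct and follows essentially the same route as the paper's: decompose $\sum_{j\in\mathcal{I}_t}\bm{\phi}_j^t$ over the i.i.d.\ uniform draws into diagonal and off-diagonal parts, kill the cross terms across local steps, across distinct clients, and across rounds via conditional unbiasedness (Fact~\ref{fact:expectation} plus Assumption~\ref{as:unbias}), and bound the surviving diagonal by $\frac{n(N+n-1)}{N}K\sigma^2\le 2nK\sigma^2$. The only place you go beyond the paper is in explicitly handling the randomness of the lower summation limit $\tau_i(t)$ in \eqref{eq:lemb-heter} by conditioning on the scheduling and selection sigma algebra before invoking the martingale-difference orthogonality; the paper's proof treats that limit as deterministic, so your extra step is a tightening of the same argument rather than a different approach.
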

		\begin{proof}
			For notational convenience, we define 
			$\bm{\phi}_j^t \coloneqq  \sum_{k=0}^{K-1} ( \bg_j^{t,k} - \bG_j^{t,k} )$
			for $t \geq 1$ and $j \in \mathcal{I}_t$.
			Then, we have
			\begin{align} 
				&\E_{\mathcal{I}_t} \left\| \sum_{j \in \mathcal{I}_t} \bm{\phi}_j^t \right\|_2^2
				= \E_{\mathcal{I}_t} \left[ \sum_{i,j \in \mathcal{I}_t} \left\langle \bm{\phi}_i^t, \bm{\phi}_j^t \right\rangle \right]
				\nonumber
				\\
				&= \E_{j^t_1,\dots,j^t_n} \left[ \sum_{u,v\in[n]} \left\langle \bm{\phi}_{j^t_u}^t, \bm{\phi}_{j^t_v}^t \right\rangle \right]
				\nonumber
				\\
				& \stackrel{(a)}{=} \sum_{u=1}^{n} \E_{j^t_u} \| \bm{\phi}_{j^t_u}^t \|_2^2
				+ \sum_{\substack{u,v\in[n], u \neq v}} \E_{j^t_u,j^t_v} [ \langle \bm{\phi}_{j^t_u}^t, \bm{\phi}_{j^t_v}^t \rangle ]
				\nonumber
				\\
				& \stackrel{(b)}{=} \sum_{u=1}^{n} \sum_{j=1}^{N} \frac{1}{N} \| \bm{\phi}_j^t \|_2^2
				+ \sum_{\substack{u,v\in[n], u \neq v}} \sum_{i,j \in [N]} \frac{1}{N^2} \langle \bm{\phi}_{i}^t, \bm{\phi}_{j}^t \rangle 
				\nonumber
				\\
				& = \frac{n}{N} \sum_{j=1}^{N} \| \bm{\phi}_j^t \|_2^2
				+ \frac{n(n-1)}{N^2} \sum_{i,j \in [N]} \langle \bm{\phi}_i^t, \bm{\phi}_j^t \rangle 
				\label{eq:temp}
				\\
				& =\hspace{-0.5mm} \frac{n(N\hspace{-0.5mm}+\hspace{-0.5mm}n\hspace{-0.5mm}-\hspace{-0.5mm}1)}{N^2} \sum_{j=1}^{N} \| \bm{\phi}_j^t \|_2^2
				\hspace{-0.5mm}+\hspace{-0.5mm}\frac{n(n\hspace{-0.5mm}-\hspace{-0.5mm}1)}{N^2} \hspace{-2mm}\sum_{\substack{i,j \in [N], i \neq j}}\hspace{-2mm} \langle \bm{\phi}_i^t, \bm{\phi}_j^t \rangle,
				\label{eq:sumi'}
			\end{align}
			where $(a)$ and $(b)$ hold because $j_1,\dots,j_n$ are independent and uniformly distributed over $[N]$, respectively. 
			For any integers $i, j \in {I}_t$ such that $i \neq j$ and $k,\ell \in [0,K-1]$, we have 
			$
			\E{} \left\langle {} \bg_i^{t,k} {}-{} \bG_i^{t,k}{}, 
			\bg_j^{t,\ell} {}-{} \bG_j^{t,\ell} \right\rangle
			{}={} \E {}\left\langle{} \bg_i^{t,k} {}-{} \bG_i^{t,k}{}, 
			\E_{\bxi_i^{t,\ell}}{} \left[ \bg_j^{t,\ell} {}-{} \bG_j^{t,\ell} \right] \right\rangle
			{}={} 0,
			\nonumber
			$
			where the first equality uses Fact \ref{fact:expectation} and the second equality follows from Assumption \ref{as:unbias}.
			Therefore,
			\begin{align}
				\E \left\langle \bm{\phi}_i^t, \bm{\phi}_j^t \right\rangle
				\hspace{-0.5mm}=\hspace{-3mm} \sum_{0 \leq k,\ell \leq K-1} \hspace{-3mm} \E \hspace{-0.5mm}\left\langle \bg_i^{t,k} \hspace{-0.5mm}-\hspace{-0.5mm} \bG_i^{t,k}, 
				\bg_j^{t,\ell} \hspace{-0.5mm}-\hspace{-0.5mm} \bG_j^{t,\ell} \right\rangle
				= 0,
				\label{eq:crossterm=0}
			\end{align}
			which implies that 
			$$
			\E \left[ \sum_{\substack{i,j \in [N], i \neq j}} \langle \bm{\phi}_i^t, \bm{\phi}_j^t \rangle \right] = \sum_{\substack{i,j \in [N],i \neq j}} \E \left\langle \bm{\phi}_i^t, \bm{\phi}_j^t \right\rangle = 0.
			$$
			It then follows from the law of total expectation and \eqref{eq:sumi'} that
			\begin{align}
				\hspace{-2mm}\E \hspace{-0.5mm}\left\| \sum_{j \in \mathcal{I}_t}\hspace{-1.5mm} \bm{\phi}_j^t \hspace{-0.5mm}\right\|_2^2
				\hspace{-2.5mm}=\hspace{-1mm} \E\hspace{-1mm} \left[\hspace{-0.5mm} \E_{\mathcal{I}_t} \hspace{-1mm}\left\| \sum_{j \in \mathcal{I}_t} \hspace{-1.5mm}\bm{\phi}_j^t \right\|_2^2 \right]
				\hspace{-1.5mm}=\hspace{-1mm} \E \hspace{-1mm}\left[\hspace{-0.5mm} \frac{n(\hspace{-0.5mm}N\hspace{-1mm}+\hspace{-1mm}n\hspace{-1mm}-\hspace{-1mm}1\hspace{-0.5mm})}{N^2} \hspace{-1mm}\sum_{j=1}^{N}\hspace{-0.5mm} \| \bm{\phi}_j^t \|_2^2 \hspace{-0.5mm}\right]\hspace{-1mm}.\hspace{-0.5mm}
				\label{eq:sdd}
			\end{align}
			To proceed, we note that for any integers $k,\ell \in [0,K-1]$ such that $k < \ell$, it follows from Fact \ref{fact:expectation} and Assumption \ref{as:unbias} that 
			$
			\E \left\langle \bg_j^{t,k} - \bG_j^{t,k}, 
			\bg_j^{t,\ell} - \bG_j^{t,\ell} \right\rangle
			=\E \left\langle \bg_j^{t,k} - \bG_j^{t,k}, 
			\E_{\bxi_i^{t,\ell}} \left[ \bg_j^{t,\ell} - \bG_j^{t,\ell} \right]\right\rangle
			= 0.
			$                    
			This implies that
			\begin{align}
				\E \| \bm{\phi}_j^t \|_2^2 \hspace{-0.5mm}=\hspace{-0.5mm} \E\hspace{-0.5mm} \left\| \sum_{k=0}^{K-1} \hspace{-1.5mm}\left(\hspace{-0.5mm} \bg_j^{t,k} \hspace{-1.5mm}-\hspace{-0.5mm} \bG_j^{t,k} \hspace{-0.5mm}\right)\hspace{-0.5mm} \right\|_2^2
				\hspace{-1.5mm}=\hspace{-0.5mm} \sum_{k=0}^{K-1} \hspace{-0.5mm}\E\hspace{-0.5mm} \left\| \bg_j^{t,k} \hspace{-1.5mm}-\hspace{-0.5mm} \bG_j^{t,k} \right\|_2^2\hspace{-0.5mm}.\hspace{-0.5mm}
				\label{eq:sumk'}
			\end{align}
			Substituting \eqref{eq:sumk'} into \eqref{eq:sdd} yields
			\begin{align}
				&\E \left\| \sum_{j \in \mathcal{I}_t} \bm{\phi}_j^t \right\|_2^2
				= \frac{n(N+n-1)}{N^2} \sum_{j=1}^{N} \sum_{k=0}^{K-1} \E \| \bg_j^{t,k} - \bG_j^{t,k} \|_2^2
				\nonumber
				\\
				&\stackrel{(a)}{\leq} \frac{n(N+n-1)}{N^2} N K \sigma^2
				\stackrel{(b)}{\leq} 2 n K \sigma^2,
				\label{eq:lema-heter'}
			\end{align}
			where $(a)$ uses Assumption \ref{as:bounded-var} and $(b)$ holds because $n \leq N$. This completes the proof of \eqref{eq:lema-heter}.
			
			We further note that for any integers $s,r \in [\tauit, t-1]$ such that $s < r$, it follows from Fact \ref{fact:expectation} that
			\begin{align}
				&\E_{\mathcal{I}_r} \left\langle \sum_{j \in \mathcal{I}_s} \bm{\phi}_j^s, 
				\sum_{j \in \mathcal{I}_r} \bm{\phi}_j^r \right\rangle
				= \E_{j_1^r,\dots,j_n^r} \left\langle \sum_{j \in \mathcal{I}_s} \bm{\phi}_j^s, 
				\sum_{u=1}^{n} \bm{\phi}_{j_u^r}^r \right\rangle
				\nonumber
				\\
				&=\left\langle \sum_{j \in \mathcal{I}_s} \bm{\phi}_j^s, 
				\sum_{u=1}^{n}  \E_{j_u^r} [ \bm{\phi}_{j_u^r}^r ] \right\rangle
				= \left\langle \sum_{j \in \mathcal{I}_s} \bm{\phi}_j^s, 
				\sum_{u=1}^{n}  \sum_{j=1}^{N} \frac{1}{N} \bm{\phi}_j^{r} \right\rangle
				\nonumber
				\\
				&= \frac{n}{N} \left\langle \sum_{j \in \mathcal{I}_s} \bm{\phi}_j^s, 
				\sum_{j=1}^{N} \bm{\phi}_j^{r} \right\rangle.
				\label{eq:e}
			\end{align}
			Taking expectation wrt $\bxi^r \coloneqq \{ \bxi_j^{r,k}: j \in [N], k = 0,1,\dots,K-1 \}$ in \eqref{eq:e} gives
			\begin{align}
				&\hspace{-2mm}\E_{\bxi^r} \left\langle \sum_{j \in \mathcal{I}_s} \bm{\phi}_j^s, 
				\sum_{j=1}^{N} \bm{\phi}_j^{r} \right\rangle
				\nonumber
				\\
				&\hspace{-2mm}=\hspace{-0.5mm}\left\langle\hspace{-0.2mm} \sum_{j \in \mathcal{I}_s} \hspace{-1mm}\sum_{k=0}^{K-1}\hspace{-0.5mm} (\bg_j^{s,k} \hspace{-1.5mm}-\hspace{-0.5mm} \bG_j^{s,k}), 
				\sum_{j=1}^{N} \hspace{-0.5mm}\sum_{k=0}^{K-1} \hspace{-1mm}\E_{\bxi_j^{r,k}} \hspace{-1.5mm}\left[ \bg_j^{r,k} \hspace{-1.5mm}-\hspace{-0.5mm} \bG_j^{r,k} \hspace{-0.5mm}\right] \hspace{-1mm}\right\rangle
				\hspace{-1mm}=\hspace{-0.5mm} 0,\hspace{-0.5mm}
				\label{eq:ee}
			\end{align}
			where the second equality uses Fact \ref{fact:expectation} and the last equality uses Assumption \ref{as:unbias}.
			Combining \eqref{eq:e} and \eqref{eq:ee} implies that for any integers $s,r \in [\tauit, t-1]$ such that $s < r$, we have
			\begin{align}
				&\E \left\langle \sum_{j \in \mathcal{I}_s} \bm{\phi}_j^s, 
				\sum_{j \in \mathcal{I}_r} \bm{\phi}_j^r \right\rangle
				= \E \left[ \E_{\mathcal{I}_r} \left\langle \sum_{j \in \mathcal{I}_s} \bm{\phi}_j^s, 
				\sum_{j \in \mathcal{I}_r} \bm{\phi}_j^r \right\rangle \right] 
				\nonumber
				\\
				&\hspace{-1mm}=\hspace{-1mm} \frac{n}{N} \E \hspace{-1mm}\left[\hspace{-0.5mm} \left\langle \sum_{j \in \mathcal{I}_s} \hspace{-1.5mm} \bm{\phi}_j^s, \sum_{j=1}^{N}\hspace{-1mm} \bm{\phi}_j^{r}\hspace{-1mm} \right\rangle\hspace{-0.5mm} \right]
				\hspace{-2mm}=\hspace{-1mm} \frac{n}{N} \E \hspace{-1mm}\left[\hspace{-0.5mm} \E_{\bxi^r}\hspace{-1mm} \left\langle \sum_{j \in \mathcal{I}_s} \hspace{-1mm}\bm{\phi}_j^s, \sum_{j=1}^{N} \hspace{-1mm}\bm{\phi}_j^{r} \hspace{-1mm}\right\rangle \hspace{-0.5mm}\right]
				\hspace{-1mm}=\hspace{-0.5mm} 0.
				\nonumber
			\end{align}
			It follows that
			\begin{align}
				&\E \left\| \sum_{s=\tauit}^{t-1} \sum_{j \in \mathcal{I}_s} \bm{\phi}_j^s \right\|_2^2 
				= \sum_{\tauit \leq s,r \leq t-1} \E \left\langle \sum_{j \in \mathcal{I}_s} \bm{\phi}_j^s, 
				\sum_{j \in \mathcal{I}_r} \bm{\phi}_j^r \right\rangle
				\nonumber
				\\
				&= \sum_{s=\tauit}^{t-1} \E \left\| \sum_{j \in \mathcal{I}_s} \bm{\phi}_j^s \right\|_2^2
				\leq 2 n K {\lambda} \sigma^2,
			\end{align}
			where the inequality follows from \eqref{eq:lema-heter'} and Assumption \ref{as:bounded-delay}.  This completes the proof of \eqref{eq:lemb-heter}.
			%	\fi
		\end{proof}

		\begin{lemma}\label{lem:grad-var'}
			Suppose that Assumptions \ref{as:unbias}, \ref{as:bounded-var}, \ref{as:bounded-delay}, and \ref{as:bounded-grad} hold. Then, it hold for all $t \geq 0$ that
			\begin{align*}
				\E \| \bm{g}_{\textnormal{nIID}}^t \|_2^2
				\hspace{-0.5mm}\leq\hspace{-0.5mm} \frac{4 K \sigma^2 + 2 G^2 K^2}{n} \etabar^2
				\hspace{-0.5mm}+\hspace{-0.5mm} 2 \etabar^2 \E \left\| \frac{1}{N} \sum_{i=1}^{N} \sum_{k=0}^{K-1} \bG_i^{t,k} \right\|_2^2.
				\nonumber
			\end{align*}
		\end{lemma}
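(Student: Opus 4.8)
The plan is to split the aggregated stochastic gradient $\bm{g}_{\textnormal{nIID}}^t$ into a zero-mean stochastic part and a part that is deterministic given the past, and to bound each separately using the results already in hand.

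First, starting from the identity $\bm{g}_{\textnormal{nIID}}^t = \frac{\etabar}{n} \sum_{i \in \mathcal{I}_t} \sum_{k=0}^{K-1} \bg_i^{t,k}$ established in \eqref{eq:ht}, I would write $\bg_i^{t,k} = ( \bg_i^{t,k} - \bG_i^{t,k} ) + \bG_i^{t,k}$ and apply Fact \ref{fact:sum} with $m = 2$ to obtain
\[
\E \| \bm{g}_{\textnormal{nIID}}^t \|_2^2 \leq \frac{2 \etabar^2}{n^2} \E \left\| \sum_{i \in \mathcal{I}_t} \sum_{k=0}^{K-1} ( \bg_i^{t,k} - \bG_i^{t,k} ) \right\|_2^2 + \frac{2 \etabar^2}{n^2} \E \left\| \sum_{i \in \mathcal{I}_t} \sum_{k=0}^{K-1} \bG_i^{t,k} \right\|_2^2 .
\]
The first term is controlled directly by inequality \eqref{eq:lema-heter} of Lemma \ref{lem:sumsum'}, which bounds the inner expectation by $2 n K \sigma^2$, so this contribution is at most $\frac{4 K \sigma^2}{n} \etabar^2$.

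For the second term, I would set $\bm{\psi}_i^t \coloneqq \sum_{k=0}^{K-1} \bG_i^{t,k}$ and condition on a sigma algebra making every $\bm{\psi}_i^t$ measurable while keeping $\mathcal{I}_t$ independent of it. Expanding $\| \sum_{i \in \mathcal{I}_t} \bm{\psi}_i^t \|_2^2$ over the multiset $\mathcal{I}_t = \{ j_1^t, \dots, j_n^t \}$ of i.i.d.\ uniform indices, exactly as in the derivation leading to \eqref{eq:sumi'}, the $n$ diagonal pairs contribute $\frac{n}{N} \sum_{j=1}^{N} \| \bm{\psi}_j^t \|_2^2$ and the off-diagonal pairs contribute $n(n-1) \| \frac{1}{N} \sum_{j=1}^{N} \bm{\psi}_j^t \|_2^2$, so that
\[
\frac{1}{n^2} \E_{\mathcal{I}_t} \left\| \sum_{i \in \mathcal{I}_t} \bm{\psi}_i^t \right\|_2^2 = \frac{1}{n N} \sum_{j=1}^{N} \| \bm{\psi}_j^t \|_2^2 + \frac{n-1}{n} \left\| \frac{1}{N} \sum_{j=1}^{N} \bm{\psi}_j^t \right\|_2^2 .
\]
Taking total expectation, using Fact \ref{fact:sum} and Assumption \ref{as:bounded-grad} to get $\E \| \bm{\psi}_j^t \|_2^2 \leq K \sum_{k=0}^{K-1} \E \| \bG_j^{t,k} \|_2^2 \leq K^2 G^2$, and bounding $\frac{n-1}{n} \leq 1$, yields $\frac{1}{n^2} \E \| \sum_{i \in \mathcal{I}_t} \bm{\psi}_i^t \|_2^2 \leq \frac{G^2 K^2}{n} + \E \| \frac{1}{N} \sum_{i=1}^{N} \sum_{k=0}^{K-1} \bG_i^{t,k} \|_2^2$. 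Multiplying this by $2 \etabar^2$ and adding the bound on the first term produces exactly the asserted inequality.

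The main obstacle, as in the proof of Lemma \ref{lem:sumsum'}, is the bookkeeping for the with-replacement sampling of $\mathcal{I}_t$: one must argue carefully that, conditionally on the past, the expectation over $\mathcal{I}_t$ factorizes across the $n$ draws and that each draw is uniform over $[N]$, so that the cross terms collapse to inner products of the per-client averages $\frac{1}{N} \sum_{j=1}^{N} \bm{\psi}_j^t$. One also has to keep in mind that $\bG_i^{t,k}$ for indices $i \notin \mathcal{I}_t$ are the ``virtual'' quantities introduced earlier; they are well-defined random vectors and cause no difficulty in taking expectations. Apart from this, the argument is a routine combination of the two Facts with Assumptions \ref{as:bounded-var} and \ref{as:bounded-grad}.
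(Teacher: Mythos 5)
Your proposal is correct and follows essentially the same route as the paper: the same split of $\bg_i^{t,k}$ into $(\bg_i^{t,k}-\bG_i^{t,k})+\bG_i^{t,k}$ with Fact~\ref{fact:sum}, the same invocation of \eqref{eq:lema-heter} for the noise term, and the same diagonal/off-diagonal expansion of $\E_{\mathcal{I}_t}\|\sum_{i\in\mathcal{I}_t}\sum_k \bG_i^{t,k}\|_2^2$ (the paper's \eqref{eq:condexp}--\eqref{eq:Esumsum}) followed by Fact~\ref{fact:sum} with $m=K$, Assumption~\ref{as:bounded-grad}, and $\tfrac{n-1}{n}\leq 1$. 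The constants all match, so nothing further is needed.
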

		\begin{proof}
			In view of \eqref{eq:ht}, we have
			\begin{align}
				&\E \| \bm{g}_{\textnormal{nIID}}^t \|_2^2 
				= \E \left\| \frac{1}{n} \sum_{i \in \mathcal{I}_t} \sum_{k=0}^{K-1} \etabar \bg_i^{t,k} \right\|_2^2 
				\nonumber
				\\
				&= \E \left\| \frac{1}{n} \sum_{i \in \mathcal{I}_t} \sum_{k=0}^{K-1} \etabar \left( \bg_i^{t,k} - \bG_i^{t,k} + \bG_i^{t,k} \right) \right\|_2^2
				\nonumber 
				\\
				&\stackrel{(a)}{\leq} \hspace{-1mm} 2 \E \left\| \hspace{-0.5mm}\frac{1}{n}\hspace{-0.5mm} \sum_{i \in \mathcal{I}_t} \hspace{-0.5mm}\sum_{k=0}^{K-1} \hspace{-0.5mm}\etabar \left( \bg_i^{t,k} \hspace{-1.5mm}-\hspace{-0.5mm} \bG_i^{t,k} \right) \right\|_2^2
				\hspace{-2mm}+\hspace{-0.5mm} 2 \E \left\| \hspace{-0.5mm}\frac{1}{n}\hspace{-0.5mm} \sum_{i \in \mathcal{I}_t} \hspace{-0.5mm}\sum_{k=0}^{K-1} \hspace{-0.5mm}\etabar \bG_i^{t,k} \right\|_2^2
				\nonumber
				\\
				&\stackrel{(b)}{\leq} \frac{2\etabar^2}{n^2} 2 n K \sigma^2 
				+ \frac{2 \etabar^2}{n^2} \E \left\| \sum_{i \in \mathcal{I}_t} \sum_{k=0}^{K-1} \bG_i^{t,k} \right\|_2^2.
				\label{eq:Delta-temp'}
			\end{align}
			where the $(a)$ uses Fact \ref{fact:sum} with $m=2$, $(b)$ follows from Lemma \ref{lem:sumsum'}.  
			Following similar arguments for showing \eqref{eq:temp}, we can obtain
			\begin{align}
				&\E_{\mathcal{I}_t} \left\| \sum_{i \in \mathcal{I}_t} \sum_{k=0}^{K-1} \bG_i^{t,k} \right\|_2^2
				\nonumber
				\\ 
				=& \frac{n}{N} \hspace{-0.5mm}\sum_{i=1}^{N}\hspace{-0.5mm} \left\| \sum_{k=0}^{K-1}\hspace{-0.5mm} \bG_i^{t,k} \right\|_2^2
				\hspace{-1mm}+\hspace{-0.5mm} \frac{n(n\hspace{-0.5mm}-\hspace{-1mm}1)}{N^2}\hspace{-2mm} \sum_{i,j \in [N]} \hspace{-1mm} \left\langle \hspace{-0.5mm}\sum_{k=0}^{K-1}\hspace{-0.5mm} \bG_i^{t,k}, \hspace{-0.5mm}\sum_{k=0}^{K-1}\hspace{-0.5mm} \bG_j^{t,k} \hspace{-1mm}\right\rangle 
				\nonumber
				\\
				=& \frac{n}{N} \sum_{i=1}^{N} \left\| \sum_{k=0}^{K-1} \bG_i^{t,k} \right\|_2^2
				+ \frac{n(n-1)}{N^2} \left\| \sum_{i=1}^{N} \sum_{k=0}^{K-1} \bG_i^{t,k} \right\|_2^2.
				\label{eq:condexp} 
			\end{align}
			Taking total expectation for both sides of \eqref{eq:condexp} yields
			\begin{align}
				& \E \left\| \sum_{i \in \mathcal{I}_t} \sum_{k=0}^{K-1} \bG_i^{t,k} \right\|_2^2
				= \E \left[ \E_{\mathcal{I}_t} \left\| \sum_{i \in \mathcal{I}_t} \sum_{k=0}^{K-1} \bG_i^{t,k} \right\|_2^2 \right]
				\nonumber
				\\
				&= \hspace{-0.5mm}\frac{n}{N}\hspace{-0.5mm} \sum_{i=1}^{N} \E \left\| \hspace{-0.5mm}\sum_{k=0}^{K-1}\hspace{-0.5mm} \bG_i^{t,k} \right\|_2^2
				\hspace{-1mm}+\hspace{-0.5mm} \frac{n(n\hspace{-0.5mm}-\hspace{-0.5mm}1)}{N^2} \E \left\| \sum_{i=1}^{N} \hspace{-0.5mm}\sum_{k=0}^{K-1}\hspace{-0.5mm} \bG_i^{t,k} \right\|_2^2.
				\label{eq:Esumsum}
			\end{align}
			Plugging \eqref{eq:Esumsum} back into \eqref{eq:Delta-temp'} gives
			\begin{align}
				&\E \| \bm{g}_{\textnormal{nIID}}^t \|_2^2
				\leq 4 K \etabar^2 \frac{\sigma^2}{n} 
				+ \frac{2\etabar^2}{nN} \sum_{i=1}^{N} \E \left\| \sum_{k=0}^{K-1} \bG_i^{t,k} \right\|_2^2
				\nonumber
				\\
				&\qquad + \frac{2(n-1)\etabar^2}{n N^2} \E \left\| \sum_{i=1}^{N} \sum_{k=0}^{K-1} \bG_i^{t,k} \right\|_2^2
				\nonumber
				\\
				&\leq 4 K \etabar^2 \frac{\sigma^2}{n} 
				+ \frac{2\etabar^2}{nN} \sum_{i=1}^{N} \sum_{k=0}^{K-1} K \E \| \bG_i^{t,k} \|_2^2
				\nonumber
				\\
				&\qquad + \frac{2\etabar^2}{N^2} \E \left\| \sum_{i=1}^{N} \sum_{k=0}^{K-1} \bG_i^{t,k} \right\|_2^2
				\nonumber
				\\
				&\leq 4 K \etabar^2 \frac{\sigma^2}{n} 
				\hspace{-0.5mm}+\hspace{-0.5mm} \frac{2 G^2 K^2 \etabar^2}{n}
				\hspace{-0.5mm}+\hspace{-0.5mm} 2 \etabar^2 \E \left\| \frac{1}{N} \sum_{i=1}^{N} \sum_{k=0}^{K-1} \bG_i^{t,k} \right\|_2^2,
				\label{eq:Delta-temp''}
			\end{align}
			where the second inequality follows from Fact \ref{fact:sum} with $m=K$ and the third inequality holds due to Assumption \ref{as:bounded-grad}.
			This completes the proof. 
		\end{proof}

		\begin{lemma}\label{lem:local-iter-diff‘}
			Suppose that Assumptions \ref{as:smooth}, \ref{as:unbias} \ref{as:bounded-var}, and \ref{as:bounded-hetero} hold. Then, it holds for all $t \geq 0$ that
			\begin{align*}
				&\frac{1}{N} \sum_{i=1}^{N} \sum_{k=0}^{K-1} \E \| \bw_i^{\tauit,k} - \bw^{\tauit} \|_2^2
				\leq  (\sigma^2+8K\nu^2) 2 K^2 \etabar^2 
				\nonumber
				\\
				&+ 16 K^3 \etabar^2  \E \| \nabla F(\bw^t) \|_2^2 
				\hspace{-0.5mm}+\hspace{-0.5mm} \frac{16 L^2 K^3}{N} \etabar^2 \sum_{i=1}^{N} \E \| \bw^{\tauit} \hspace{-0.5mm}-\hspace{-0.5mm} \bw^t \|_2^2.
			\end{align*}
			\begin{proof}
				Using the local iteration formula, we have
				\begin{align}
					&\E \| \bw_i^{\tauit,k} - \bw^{\tauit} \|_2^2
					\nonumber
					\\
					=&\E \| \bw_i^{\tauit,k-1} - \etabar \bg_i^{t,k-1} - \bw^{\tauit} \|_2^2
					\nonumber
					\\
					=&\E \Big\| \bw_i^{\tauit,k-1} - \bw^{\tauit} - \etabar \Big( \bG_i^{t,k-1} - \bG_i^t
					+ \bG_i^t
					\nonumber
					\\
					&\quad  - \nabla F_i(\bw^t) + \nabla F_i(\bw^t) - \nabla F(\bw^t) + \nabla F(\bw^t) \Big) 
					\nonumber
					\\
					&\quad - \etabar \Big( \bg_i^{t,k-1} - \bG_i^{t,k-1} \Big) \Big\|_2^2 
					\nonumber
					\\
					=&\E \Big\| \bw_i^{\tauit,k-1} - \bw^{\tauit} - \etabar \Big( \bG_i^{t,k-1} - \bG_i^t + \bG_i^t 
					\nonumber
					\\
					&\quad - \nabla F_i(\bw^t) + \nabla F_i(\bw^t) - \nabla F(\bw^t) + \nabla F(\bw^t) \Big) \Big\|_2^2
					\nonumber
					\\
					&\quad + \etabar^2 \E \| \bg_i^{t,k-1} - \bG_i^{t,k-1} \|_2^2,
					\label{eq:lem:local-iter-diff‘1} 
				\end{align}
				where the last equality holds because Fact \ref{fact:expectation} and Assumption \ref{as:unbias} imply that the cross term is 0, e.g.,
				\begin{align*}
					& \E \Big\langle \bw_i^{\tauit,k-1} - \bw^{\tauit} - \etabar \left( \bG_i^{t,k-1} - \bG_i^t + \bG_i^t - \nabla F_i(\bw^t) \right.
					\nonumber
					\\
					&\quad \left. + \nabla F_i(\bw^t) - \nabla F(\bw^t) + \nabla F(\bw^t) \right), 
					\bg_i^{t,k-1} - \bG_i^{t,k-1} \Big\rangle
					\nonumber
					\\
					&= \E \Big\langle \bw_i^{\tauit,k-1} \hspace{-0.5mm}-\hspace{-0.5mm} \bw^{\tauit} - \etabar \hspace{-0.5mm}\left(\hspace{-0.5mm} \bG_i^{t,k-1} \hspace{-0.5mm}-\hspace{-0.5mm} \bG_i^t \hspace{-0.5mm}+\hspace{-0.5mm} \bG_i^t \hspace{-0.5mm}-\hspace{-0.5mm} \nabla F_i(\bw^t)  \right.
					\nonumber
					\\
					&\qquad \left. + \nabla F_i(\bw^t) - \nabla F(\bw^t) + \nabla F(\bw^t) \right), 
					\nonumber
					\\
					&\qquad \E \left[\bg_i^{t,k-1} - \bG_i^{t,k-1} ~\middle|~ \bw_i^{\tauit,k-1},\bw^t \right] \Big\rangle
					= 0.
				\end{align*}
				It follows that
				\begin{align}
					&\E \| \bw_i^{\tauit,k} - \bw^{\tauit} \|_2^2
					\nonumber
					\\
					\stackrel{(a)}{\leq}&~ \etabar^2 \sigma^2 + \left( 1 + \frac{1}{2K-1}\right) \E \| \bw_i^{\tauit,k-1} - \bw^{\tauit} \|_2^2
					\nonumber
					\\
					&+ 2K \etabar^2 \E \| \bG_i^{t,k-1} - \bG_i^t + \bG_i^t - \nabla F_i(\bw^t) + \nabla F_i(\bw^t) 
					\nonumber
					\\
					&- \nabla F(\bw^t) + \nabla F(\bw^t) \|_2^2
					\nonumber
					\\
					\stackrel{(b)}{\leq}& \left( 1 + \frac{1}{2K-1}\right) \E \| \bw_i^{\tauit,k-1} - \bw^{\tauit} \|_2^2 
					\nonumber
					\\
					&+ 8 K \etabar^2 \E \| \bG_i^{t,k-1} - \bG_i^t \|_2^2 
					+ 8 K \etabar^2 \E \| \bG_i^t - \nabla F_i(\bw^t) \|_2^2
					\nonumber
					\\
					&+ 8 K \etabar^2 \E \| \nabla F_i(\bw^t) - \nabla F(\bw^t) \|_2^2
					\nonumber
					\\
					&+ 8 K \etabar^2  \E \| \nabla F(\bw^t) \|_2^2 + \etabar^2 \sigma^2
					\nonumber
					\\
					\stackrel{(c)}{\leq}& \left( 1 + \frac{1}{2K-1} + 8 L^2 K \etabar^2\right) \E \| \bw_i^{\tauit,k-1} - \bw^{\tauit} \|_2^2
					\nonumber
					\\
					&+ 8 L^2 K \etabar^2 \E \| \bw^{\tauit} - \bw^t \|_2^2 
					+ 8 K \etabar^2 \nu^2
					\nonumber
					\\
					&+ 8 K \etabar^2  \E \| \nabla F(\bw^t) \|_2^2 
					+ \etabar^2 \sigma^2
					\nonumber
					\\
					\stackrel{(d)}{\leq}& \hspace{-0.5mm}\left(\hspace{-0.5mm} 1 \hspace{-0.5mm}+\hspace{-0.5mm} \frac{1}{K\hspace{-0.5mm}-\hspace{-0.5mm}1}\right) \E \| \bw_i^{\tauit,k-1} \hspace{-0.5mm}-\hspace{-0.5mm} \bw^{\tauit} \|_2^2
					+ (\sigma^2+8K\nu^2) \etabar^2
					\nonumber
					\\
					& + 8 K \etabar^2  \E \| \nabla F(\bw^t) \|_2^2 + 8 L^2 K \etabar^2 \E \| \bw^{\tauit} - \bw^t \|_2^2,
					\label{eq:recur'}
				\end{align}
				where $(a)$ uses Assumption \ref{as:bounded-var} and the fact that $\| \bx + \by \|_2^2 \leq (1+1/\beta) \|\bx\|_2^2 + (1+\beta) \|\by\|_2^2$ for vectors $\bx,\by$ and $\beta=2K-1$, $(b)$ uses Fact \ref{fact:sum} with $n=3$, $(c)$ is implied by Assumptions \ref{as:smooth} and \ref{as:bounded-hetero}, and $(d)$ holds because 
				$
				\etabar \leq \frac{1}{4LK}
				\Rightarrow 1 + \frac{1}{2K-1} + 8 L^2 K \etabar^2 \leq 1 + \frac{1}{K-1}.
				$
				Let $Z_i^{t,k} \coloneqq \E \| \bw_i^{\tauit,k} - \bw^{\tauit} \|_2^2$, $a \coloneqq 1 + \frac{1}{K-1}$, and $b \coloneqq (\sigma^2+8K\nu^2) \etabar^2 + 8 K \etabar^2  \E \| \nabla F(\bw^t) \|_2^2 + 8 L^2 K \etabar^2 \E \| \bw^{\tauit} - \bw^t \|_2^2$, then \eqref{eq:recur'} can be written as 
				\[
				Z_i^{t,k} \leq a Z_i^{t,k-1} + b.
				\]
				Solving this recursion gives
				\begin{align}
					&Z_i^{t,k} 
					\leq a^k Z_i^{t,0} + b \sum_{\ell=0}^{k-1} a^\ell 
					= \frac{a^k - 1}{a - 1} b  
					\nonumber
					\\
					&= (K-1) \left( \left( 1+ \frac{1}{K-1} \right)^{K-1} - 1 \right) b
					\leq 2 (K-1) b,
					\nonumber
				\end{align}
				where the second inequality holds due to the fact that $\left( 1+ \frac{1}{K-1} \right)^{K-1} \leq e \leq 3$ with $e$ is being Euler's number.
				This implies that
				\begin{align}
					&\frac{1}{N} \sum_{i=1}^{N} \sum_{k=0}^{K-1} \E \| \bw_i^{\tauit,k} - \bw^{\tauit} \|_2^2
					\leq \frac{1}{N} \sum_{i=1}^{N} \sum_{k=0}^{K-1} 2 (K-1) b.
					\nonumber
				\end{align}
				Substituting $b$ and simplifying gives the desired result.
			\end{proof}
		\end{lemma}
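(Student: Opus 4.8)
The plan is to set up a one-step affine recursion in the local-step index $k$ for $Z_i^{t,k} \coloneqq \E\|\bw_i^{\tauit,k}-\bw^{\tauit}\|_2^2$, solve it, and then sum over $k$ and average over $i$; the fact that $Z_i^{t,0}=0$ (local training in round $\tauit$ starts at $\bw_i^{\tauit,0}=\bw^{\tauit}$) makes the recursion clean. First I would substitute $\bw_i^{\tauit,k}=\bw_i^{\tauit,k-1}-\etabar\bg_i^{t,k-1}$ and split $\bg_i^{t,k-1}=(\bg_i^{t,k-1}-\bG_i^{t,k-1})+\bG_i^{t,k-1}$. Expanding the square, the cross term between the noise $\bg_i^{t,k-1}-\bG_i^{t,k-1}$ and the deterministic remainder vanishes in expectation: conditioning on the sigma-algebra generated by $\bw^t$ (which also contains $\bw^{\tauit}$ and $\bw_i^{\tauit,k-1}$) and invoking Fact \ref{fact:expectation} together with the unbiasedness Assumption \ref{as:unbias} kills it, which is valid precisely because the sample is indexed by $t$ and hence independent of all conditioning variables. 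What remains is $\etabar^2\E\|\bg_i^{t,k-1}-\bG_i^{t,k-1}\|_2^2\le\etabar^2\sigma^2$ by Assumption \ref{as:bounded-var}, plus a ``drift'' term $\E\|\bw_i^{\tauit,k-1}-\bw^{\tauit}-\etabar\bG_i^{t,k-1}\|_2^2$.

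To control the drift I would write $\bG_i^{t,k-1}=\nabla F_i(\bw_i^{\tauit,k-1})$ and insert telescoping terms,
\begin{align*}
\bG_i^{t,k-1} ={}& \big(\nabla F_i(\bw_i^{\tauit,k-1})-\nabla F_i(\bw^{\tauit})\big)+\big(\nabla F_i(\bw^{\tauit})-\nabla F_i(\bw^t)\big)\\
&+\big(\nabla F_i(\bw^t)-\nabla F(\bw^t)\big)+\nabla F(\bw^t),
\end{align*}
apply Young's inequality $\|\bx+\by\|_2^2\le(1+\tfrac{1}{2K-1})\|\bx\|_2^2+2K\|\by\|_2^2$ with $\bx=\bw_i^{\tauit,k-1}-\bw^{\tauit}$ and $\by=-\etabar\bG_i^{t,k-1}$, then Fact \ref{fact:sum} with $m=4$ on the decomposition, and finally $L$-smoothness (Assumption \ref{as:smooth}) on the first two brackets and bounded heterogeneity (Assumption \ref{as:bounded-hetero}) on the third. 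This yields
\begin{align*}
Z_i^{t,k}\le{}&\Big(1+\tfrac{1}{2K-1}+8L^2K\etabar^2\Big)Z_i^{t,k-1}+8L^2K\etabar^2\,\E\|\bw^{\tauit}-\bw^t\|_2^2\\
&+8K\etabar^2\nu^2+8K\etabar^2\E\|\nabla F(\bw^t)\|_2^2+\etabar^2\sigma^2.
\end{align*}
Under the step-size condition $\etabar\le\tfrac{1}{4LK}$ (implied by the hypothesis $\etabar\le\tfrac{1}{8LK}$ of Theorem \ref{thm:hetero}) the multiplicative factor is at most $a\coloneqq1+\tfrac{1}{K-1}$, so writing $b_i$ for the additive terms (which depend on $i$ only through $\E\|\bw^{\tauit}-\bw^t\|_2^2$), I get $Z_i^{t,k}\le aZ_i^{t,k-1}+b_i$.

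Unrolling with $Z_i^{t,0}=0$ gives $Z_i^{t,k}\le b_i\sum_{\ell=0}^{k-1}a^\ell=(K-1)(a^k-1)b_i\le(K-1)\big((1+\tfrac{1}{K-1})^{K-1}-1\big)b_i\le2(K-1)b_i$, using $a-1=\tfrac{1}{K-1}$ and $(1+\tfrac{1}{K-1})^{K-1}\le e\le3$. Summing over $k=0,\dots,K-1$ and averaging over $i\in[N]$, which replaces $\E\|\bw^{\tauit}-\bw^t\|_2^2$ inside $b_i$ by $\tfrac1N\sum_i\E\|\bw^{\tauit}-\bw^t\|_2^2$, produces the claimed bound with constants $2K^2$, $16K^3$, and $16L^2K^3/N$ on the three terms ($K=1$ is trivial, since then $Z_i^{t,0}=0$ is the only summand).

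I expect the main obstacle to be the cross-term cancellation in the first step: one must be careful that the conditioning sigma-algebra genuinely contains all of $\bw^t$, $\bw^{\tauit}$ and $\bw_i^{\tauit,k-1}$ while remaining independent of $\bxi_i^{t,k-1}$, so that Assumption \ref{as:unbias} legitimately annihilates the cross term and Assumption \ref{as:bounded-var} controls the residual noise — this is the ``Subtlety in Applying Assumption \ref{as:unbias}'' flagged earlier, and is why the per-round sample indexing is essential. The remaining work is routine: a Young's-inequality split tuned so the geometric factor stays $\le1+\tfrac{1}{K-1}$, the smoothness and heterogeneity bounds, and a geometric-series estimate.
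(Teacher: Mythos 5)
Your proposal is correct and follows essentially the same route as the paper: the identical four-term telescoping decomposition of $\nabla F_i(\bw_i^{\tauit,k-1})$, the same Young's inequality with parameter $2K-1$, the same cross-term cancellation via conditioning (and you rightly flag the sample-indexing subtlety that makes it valid), and the same affine recursion solved with $(1+\tfrac{1}{K-1})^{K-1}\le e\le 3$, yielding the same constants. The only differences are cosmetic: you correctly use Fact~\ref{fact:sum} with $m=4$ (the paper's ``$n=3$'' is a typo) and you explicitly note that the step-size condition $\etabar\le\tfrac{1}{4LK}$ must be imported from Theorem~\ref{thm:hetero}'s hypotheses, which the lemma statement itself omits.
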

		
		\begin{lemma}\label{lem:global-iter-diff'}
			Suppose that Assumptions \ref{as:unbias}, \ref{as:bounded-var}, and \ref{as:bounded-delay} hold. Then, it holds for all $t \geq 0$ and $i \in \mathcal{I}_t$ that
			\begin{align}
				&\E \| \bw^t - \bw^{\tauit} \|_2^2
				\leq \frac{4 \sigma^2 K {\lambda} + 2 G^2 K^2 {\lambda}^2}{n} \eta^2 \etabar^2
				\nonumber
				\\
				&\qquad + 2 {\lambda} \eta^2 \etabar^2 \sum_{s=(t-{\lambda})_+}^{t-1} \E \left\| \frac{1}{N} \sum_{j=1}^{N} \sum_{k=0}^{K-1} \bG_j^{s,k} \right\|_2^2.
				\nonumber
			\end{align}
		\end{lemma}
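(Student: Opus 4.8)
The plan is to telescope the server update and then split the accumulated drift $\bw^t - \bw^{\tauit}$ into a stochastic-noise part and a ``signal'' part built from the full gradients $\bG_j^{s,k}$, bounding each piece with tools already available. Since the server performs $\bw^{s+1} = \bw^s - \eta \bm{g}_{\textnormal{nIID}}^s$, summing from $s = \tauit$ to $t-1$ gives $\bw^t - \bw^{\tauit} = -\eta \sum_{s=\tauit}^{t-1} \bm{g}_{\textnormal{nIID}}^s$, hence $\E \| \bw^t - \bw^{\tauit} \|_2^2 = \eta^2 \E \| \sum_{s=\tauit}^{t-1} \bm{g}_{\textnormal{nIID}}^s \|_2^2$. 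By Assumption \ref{as:bounded-delay}, $\tauit \geq (t-\lambda)_+$, so this is a sum of at most $\lambda$ terms with round indices in $[(t-\lambda)_+, t-1]$; the case $\tauit = t$ is trivial.

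Next I would expand each $\bm{g}_{\textnormal{nIID}}^s$ via \eqref{eq:ht} and write $\sum_{s=\tauit}^{t-1} \bm{g}_{\textnormal{nIID}}^s = \frac{\etabar}{n} \sum_{s=\tauit}^{t-1} \sum_{j \in \mathcal{I}_s} \sum_{k=0}^{K-1} (\bg_j^{s,k} - \bG_j^{s,k}) + \frac{\etabar}{n} \sum_{s=\tauit}^{t-1} \sum_{j \in \mathcal{I}_s} \sum_{k=0}^{K-1} \bG_j^{s,k}$, and then apply Fact \ref{fact:sum} with $m = 2$ to bound $\E \| \cdot \|_2^2$ by twice the noise term plus twice the signal term. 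The noise term is exactly what inequality \eqref{eq:lemb-heter} of Lemma \ref{lem:sumsum'} controls, contributing $\frac{2\etabar^2}{n^2} \cdot 2 n K \lambda \sigma^2 = \frac{4 K \lambda \sigma^2 \etabar^2}{n}$. This is the step that keeps the $\sigma^2$ dependence linear in $\lambda$ rather than quadratic: it leverages the cross-round orthogonality of the noise increments (established in the proof of Lemma \ref{lem:sumsum'} by conditioning on the sampled multisets and iterates and then invoking Assumption \ref{as:unbias}), so the variance of the sum equals the sum of per-round variances.

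For the signal term I would first pull out the at-most-$\lambda$ summation over $s$ using Fact \ref{fact:sum}, then for each fixed $s$ reuse identity \eqref{eq:Esumsum} to handle the random multiset $\mathcal{I}_s$, namely $\E \| \sum_{j \in \mathcal{I}_s} \sum_k \bG_j^{s,k} \|_2^2 = \frac{n}{N} \sum_{j=1}^N \E \| \sum_k \bG_j^{s,k} \|_2^2 + \frac{n(n-1)}{N^2} \E \| \sum_{j=1}^N \sum_k \bG_j^{s,k} \|_2^2$. Fact \ref{fact:sum} with $m = K$ and Assumption \ref{as:bounded-grad} give $\E \| \sum_k \bG_j^{s,k} \|_2^2 \leq K^2 G^2$, so the first piece is at most $n K^2 G^2$, while the second is at most $n^2 \E \| \frac{1}{N} \sum_j \sum_k \bG_j^{s,k} \|_2^2$. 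Summing over the at most $\lambda$ rounds and carrying the prefactor $\frac{2 \etabar^2 \lambda}{n^2}$ yields $\frac{2 \lambda^2 K^2 G^2 \etabar^2}{n} + 2 \lambda \etabar^2 \sum_{s = (t-\lambda)_+}^{t-1} \E \| \frac{1}{N} \sum_j \sum_k \bG_j^{s,k} \|_2^2$.

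Adding the noise and signal bounds and multiplying through by $\eta^2$ gives exactly the claimed inequality. I expect the main obstacle to be the bookkeeping around the \emph{random} stopping index $\tauit$: the sums should be padded with zero vectors to a fixed length $\lambda$ before Fact \ref{fact:sum} is invoked, and one must verify that the cross-round noise cancellation and the successive applications of the law of total expectation (over $\mathcal{I}_s$ and over the fresh samples $\bxi_j^{s,k}$) go through uniformly in $\tauit$ --- essentially a reprise of the argument around \eqref{eq:e}--\eqref{eq:ee} in the proof of Lemma \ref{lem:sumsum'}, now carried out with $\bG_j^{s,k}$ in place of $\bg_j^{s,k}$ for the signal term.
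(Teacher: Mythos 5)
Your proposal is correct and follows essentially the same route as the paper's proof: telescope the global update over $s \in [\tauit, t-1]$, split into the noise part (handled by inequality \eqref{eq:lemb-heter} of Lemma \ref{lem:sumsum'}) and the signal part (Fact \ref{fact:sum} over the at most $\lambda$ rounds, then identity \eqref{eq:Esumsum} together with Fact \ref{fact:sum} with $m=K$ and the bounded-gradient assumption), with the same constants emerging. Your observation that Assumption \ref{as:bounded-grad} is actually needed here (despite not being listed in the lemma's hypotheses) is accurate and matches the paper's own use of it.
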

		\begin{proof}
			For each $i \in \mathcal{I}_t$, we consider the following two cases: \textsf{i)} $\tauit = t$. Then, we have $\E \| \bw^t - \bw^{\tauit} \|_2^2 = 0$. \textsf{ii)} $\tauit < t$. In view of \eqref{eq:local-iter'} and \eqref{eq:global-iter'}, the server's global iteration in round $s \in [\tauit,t-1]$ can be written as
			$
			\bw^{s+1} = \bw^s - \frac{\eta}{|\mathcal{I}_s|} \sum_{j \in \mathcal{I}_s} \etabar \sum_{k=0}^{K-1} \bg_j^{s,k}. 
			$
			Then, we have
			\begin{align}
				&\E \| \bw^t - \bw^{\tauit} \|_2^2
				=\E \left\| \sum_{s=\tauit}^{t-1} (\bw^{s+1} - \bw^s) \right\|_2^2
				\nonumber
				\\
				&= \E \left\| \sum_{s=\tauit}^{t-1} \frac{\eta}{n} \sum_{j \in \mathcal{I}_s} \sum_{k=0}^{K-1}  \etabar \bg_j^{s,k}  \right\|_2^2
				\nonumber 
				\\
				&\leq 2 \E \left\| \sum_{s=\tauit}^{t-1} \frac{\eta}{n} \sum_{j \in \mathcal{I}_s} \sum_{k=0}^{K-1}  \etabar \left( \bg_j^{s,k} - \bG_j^{s,k} \right) \right\|_2^2
				\nonumber
				\\
				&\quad + 2 \E \left\| \sum_{s=\tauit}^{t-1} \frac{\eta}{n} \sum_{j \in \mathcal{I}_s} \sum_{k=0}^{K-1}  \etabar \bG_j^{s,k} \right\|_2^2
				\nonumber
				\\
				&\stackrel{(a)}{\leq} \hspace{-1mm}\frac{2\eta^2 \etabar^2 }{n^2} 2 n K {\lambda} \sigma^2
				\hspace{-1mm}+\hspace{-1mm} \frac{2\eta^2\etabar^2}{n^2} |t\hspace{-0.5mm}-\hspace{-0.5mm}\tauit| \hspace{-1.5mm}\sum_{s=\tauit}^{t-1}\hspace{-2mm} \E \left\| \hspace{-0.2mm}\sum_{j \in \mathcal{I}_s}\hspace{-1mm} \sum_{k=0}^{K-1}\hspace{-1mm} \bG_j^{s,k} \right\|_2^2
				\nonumber
				\\
				&\stackrel{(b)}{\leq} \hspace{-1mm}4 K {\lambda} \eta^2 \etabar^2 \frac{\sigma^2}{n}
				\hspace{-1mm}+\hspace{-1mm} \frac{2{\lambda}\eta^2\etabar^2}{n^2} \hspace{-2mm}\sum_{s=(t-{\lambda})_+}^{t-1} \hspace{-2mm}\E \left\| \frac{1}{n}\hspace{-0.5mm} \sum_{j \in \mathcal{I}_s}\hspace{-0.5mm} \sum_{k=0}^{K-1}\hspace{-0.5mm} \bG_j^{s,k} \right\|_2^2\hspace{-0.5mm}, \hspace{-0.5mm}
				\label{eq:delayiter}
			\end{align}
			where $(a)$ employs Lemma \ref{lem:sumsum'} and Fact \ref{fact:sum} with $m=|t-\tauit|$ and $(b)$ is implied by Assumption \ref{as:bounded-delay}. 
			Substituting \eqref{eq:Esumsum} with $t=s$ back into \eqref{eq:delayiter} gives
			\begin{align}
				& \E \| \bw^t - \bw^{\tauit} \|_2^2
				\nonumber
				\\
				\leq& 4 K {\lambda} \eta^2 \etabar^2 \frac{\sigma^2}{n}
				+ \frac{2 {\lambda}}{n N} \eta^2 \etabar^2 \sum_{s=(t-{\lambda})_+}^{t-1} \sum_{j=1}^{N} \E \left\| \sum_{k=0}^{K-1} \bG_j^{s,k} \right\|_2^2
				\nonumber
				\\
				& + \frac{2 n(n-1) {\lambda}}{n^2 N^2} \eta^2 \etabar^2 \sum_{s=(t-{\lambda})_+}^{t-1} \E \left\| \sum_{j=1}^{N} \sum_{k=0}^{K-1} \bG_j^{s,k} \right\|_2^2
				\nonumber
				\\
				\leq& 4 K {\lambda} \eta^2 \etabar^2 \frac{\sigma^2}{n}
				+ \frac{2 {\lambda}}{n N} \eta^2 \etabar^2 \sum_{s=(t-{\lambda})_+}^{t-1} \sum_{j=1}^{N} \sum_{k=0}^{K-1} K \E \| \bG_j^{s,k} \|_2^2
				\nonumber
				\\
				& + \frac{2 n(n-1) {\lambda}}{n^2} \eta^2 \etabar^2 \sum_{s=(t-{\lambda})_+}^{t-1} \E \left\| \frac{1}{N} \sum_{j=1}^{N} \sum_{k=0}^{K-1} \bG_j^{s,k} \right\|_2^2
				\nonumber
				\\
				\leq& \hspace{-0.5mm}\frac{4 \sigma^2 K {\lambda} \hspace{-1mm}+\hspace{-1mm} 2 G^2 K^2 {\lambda}^2}{n} \eta^2 \etabar^2
				\hspace{-1.5mm}+\hspace{-1mm} 2 {\lambda} \eta^2 \etabar^2 \hspace{-4mm}\sum_{s=(t-{\lambda})_+}^{t-1} \hspace{-3mm}\E \left\| \hspace{-0.5mm}\frac{1}{N} \hspace{-1mm}\sum_{j=1}^{N}\hspace{-1mm} \sum_{k=0}^{K-1} \hspace{-1mm}\bG_j^{s,k} \right\|_2^2\hspace{-1mm},\hspace{-1mm}
				\nonumber
			\end{align}
			where the second inequality follows from Fact \ref{fact:sum} with $m=K$ and the third inequality is due to Assumption \ref{as:bounded-grad}.
			Combining cases \textsf{i)} and \textsf{ii)}   completes the proof.
		\end{proof}

		\subsubsection{Putting Ingredients Together}
		\begin{proof}
			Using the descent lemma due to the $L$-smoothness by Assumption \ref{as:smooth} and the update formula \eqref{eq:global-iter'}, we have
			\begin{align}
				&\hspace{-1mm}\E [F(\bw^{t+1})] 
				\nonumber
				\\
				\leq& \E [F(\bw^t)] \hspace{-1mm}+\hspace{-0.5mm} \E \langle \nabla F(\bw^t), \bw^{t+1} \hspace{-1mm}-\hspace{-0.5mm} \bw^t \rangle \hspace{-0.8mm}+\hspace{-0.8mm} \frac{L\eta^2}{2} \E \| \bw^{t+1} \hspace{-1mm}-\hspace{-0.5mm} \bw^t \|_2^2
				\nonumber
				\\ 
				=&\E [F(\bw^t)] - \E \langle \nabla F(\bw^t), \eta \bm{g}_{\textnormal{nIID}}^t \rangle + \frac{L\eta^2}{2} \E \| \bm{g}_{\textnormal{nIID}}^t \|_2^2
				\nonumber
				\\
				=& \E [F(\bw^t)] + \frac{L\eta^2}{2} \E \| \bm{g}_{\textnormal{nIID}}^t \|_2^2
				\nonumber
				\\
				& - \eta \E \langle \nabla F(\bw^t), \bm{g}_{\textnormal{nIID}}^t - K \etabar \nabla F(\bw^t) + K \etabar \nabla F(\bw^t) \rangle 
				\nonumber
				\\
				=& \E [F(\bw^t)] - K \eta \etabar \E \| \nabla F(\bw^t) \|_2^2 
				\nonumber
				\\
				& \hspace{-0.8mm}+\hspace{-0.8mm} \eta \E \langle \nabla F(\bw^t),  \etabar K \nabla F(\bw^t) \hspace{-0.5mm}-\hspace{-0.5mm} \bm{g}_{\textnormal{nIID}}^t \rangle \hspace{-0.8mm}+\hspace{-0.8mm} \frac{L\eta^2}{2} \E \| \bm{g}_{\textnormal{nIID}}^t \|_2^2.
				\label{eq:descentlem}
			\end{align}
			Substituting the last two term in the above inequality by employing Lemmas \ref{lem:inner-prod-hetero} and \ref{lem:grad-var'}, we have
			\begin{align}
				&\E [F(\bw^{t+1})] - \E [F(\bw^t)] 
				\nonumber
				\\
				\leq& \hspace{-1mm}-\hspace{-1mm} \frac{K}{2} \eta \etabar \E \| \nabla F(\bw^t) \|_2^2 
				\hspace{-1mm}+\hspace{-1mm} L^2 \eta \etabar \frac{1}{N}\hspace{-1mm} \sum_{i=1}^{N}\hspace{-1mm} \sum_{k=0}^{K-1} \E \| \bw_i^{\tauit,k} \hspace{-1.5mm}-\hspace{-0.5mm} \bw^{\tauit} \|_2^2
				\nonumber
				\\
				&+ \frac{2 \sigma^2 L K + G^2 L K^2}{n} \eta^2 \etabar^2
				+ \frac{L^2 K}{N} \eta \etabar \sum_{i=1}^{N} \E \| \bw^{\tauit} \hspace{-1mm}-\hspace{-0.5mm} \bw^t \|_2^2
				\nonumber
				\\
				&- \left( \frac{1}{2K} \eta \etabar - L \eta^2 \etabar^2 \right) \E \left\| \frac{1}{N} \sum_{i=1}^{N} \sum_{k=0}^{K-1} \bG_i^{t,k} \right\|_2^2.
				\nonumber
			\end{align}
			Substituting $\frac{1}{N} \sum_{i=1}^{N} \sum_{k=0}^{K-1} \E \| \bw_i^{\tauit,k} - \bw^{\tauit} \|_2^2$ using Lemma \ref{lem:local-iter-diff‘}, we have
			\begin{align}
				&~\E [F(\bw^{t+1})] - \E [F(\bw^t)] 
				\nonumber
				\\
				\leq&- \left( \frac{K}{2} \eta \etabar - 16 L^2 K^3 \eta \etabar^3 \right) \E \| \nabla F(\bw^t) \|_2^2 
				\nonumber
				\\
				&+ \frac{2 \sigma^2 L K + G^2 L K^2}{n} \eta^2 \etabar^2
				+ (\sigma^2+8K\nu^2) 2 L^2 K^2 \eta \etabar^3 
				\nonumber
				\\
				&+ \hspace{-1mm}\left(\hspace{-0.5mm} \frac{L^2 K \hspace{-0.5mm}+\hspace{-0.5mm} 16 L^4 K^3}{N} \eta \etabar \hspace{-0.5mm}+\hspace{-0.5mm} \frac{16 L^4 K^3}{N} \eta \etabar^3 \hspace{-0.5mm}\right) \hspace{-0.5mm}\sum_{i=1}^{N} \E \| \bw^{\tauit} \hspace{-1mm}-\hspace{-0.5mm} \bw^t \|_2^2 
				\nonumber
				\\
				& - \left( \frac{1}{2K} \eta \etabar - L \eta^2 \etabar^2 \right) \E \left\| \frac{1}{N} \sum_{i=1}^{N} \sum_{k=0}^{K-1} \bG_i^{t,k} \right\|_2^2
				\nonumber
				\\
				\leq& - \frac{K}{4} \eta \etabar \E \| \nabla F(\bw^t) \|_2^2 
				+ \frac{2 \sigma^2 L K + G^2 L K^2}{n} \eta^2 \etabar^2
				\nonumber
				\\
				&+ \hspace{-0.5mm}(\sigma^2\hspace{-0.5mm}+\hspace{-0.5mm}8K\nu^2) 2 L^2 K^2 \eta \etabar^3 
				\hspace{-0.5mm}+\hspace{-0.5mm} \frac{2 L^2 K}{N} \eta \etabar \sum_{i=1}^{N} \E \| \bw^{\tauit} \hspace{-0.5mm}-\hspace{-0.5mm} \bw^t \|_2^2 
				\nonumber
				\\
				&- \frac{1}{4K} \eta \etabar \E \left\| \frac{1}{N} \sum_{i=1}^{N} \sum_{k=0}^{K-1} \bG_i^{t,k} \right\|_2^2,
				\label{eq:temp''}
			\end{align}
			where the last inequality holds because of condition \eqref{eq:lr'}:
			\begin{align*}
				\etabar \leq \frac{1}{8LK} &\Longleftrightarrow \frac{K}{2} \eta \etabar - 16 L^2 K^3 \eta \etabar^3 \geq \frac{K}{4} \eta \etabar, 
				\\
				\etabar \leq \frac{1}{4LK} &\Longleftrightarrow L^2 K \eta \etabar + 16 L^4 K^3 \eta \etabar^3 \leq 2 L^2 K \eta \etabar,
				\\
				\eta \etabar \leq \frac{1}{2 L K} &\Longleftrightarrow \frac{1}{2K} \eta \etabar - L \eta^2 \etabar^2 \geq \frac{1}{4K} \eta \etabar.
			\end{align*}
			Substituting $\frac{1}{N} \sum_{i=1}^{N} \E \| \bw^{\tauit} - \bw^t \|_2^2$ in \eqref{eq:temp''} using Lemma \ref{lem:global-iter-diff'}, we obtain
			\begin{align}
				&~\E [F(\bw^{t+1})] - \E [F(\bw^t)] 
				\nonumber
				\\
				\leq& - \frac{K}{4} \eta \etabar \E \| \nabla F(\bw^t) \|_2^2 
				+ \frac{2 \sigma^2 L K + L G^2 K^2}{n} \eta^2 \etabar^2
				\nonumber
				\\
				&+ \hspace{-0.5mm}(\sigma^2\hspace{-1mm}+\hspace{-1mm}8K\nu^2) 2 L^2 K^2 \eta \etabar^3
				\hspace{-1mm}+\hspace{-1mm} \frac{8 \sigma^2 L^2 K^2 {\lambda} \hspace{-1mm}+\hspace{-1mm} 4 L^2 G^2 K^3 {\lambda}^2}{n}  \eta^3 \etabar^3
				\nonumber
				\\
				&- \frac{1}{4K} \eta \etabar \E \left\| \frac{1}{N} \sum_{i=1}^{N} \sum_{k=0}^{K-1} \bG_i^{t,k} \right\|_2^2
				\nonumber
				\\
				&+ 4 L^2 K {\lambda} \eta^3 \etabar^3 \sum_{s=(t-{\lambda})_+}^{t-1} \E \left\| \frac{1}{N} \sum_{i=1}^{N} \sum_{k=0}^{K-1} \bG_i^{s,k} \right\|_2^2 
				\nonumber
				\\
				\leq& - \frac{K}{4} \eta \etabar \E \| \nabla F(\bw^t) \|_2^2 
				+ \frac{4 \sigma^2 L K + 2 L G^2 K^2}{n} \eta^2 \etabar^2
				\nonumber
				\\
				& + \hspace{-0.5mm}(\sigma^2\hspace{-0.5mm}+\hspace{-0.5mm}8K\nu^2) 2 L^2 K^2 \eta \etabar^3 
				\hspace{-0.5mm}-\hspace{-0.5mm} \frac{1}{4K} \eta \etabar \E \left\| \frac{1}{N} \sum_{i=1}^{N} \sum_{k=0}^{K-1} \bG_i^{t,k} \right\|_2^2
				\nonumber
				\\
				& + 4 L^2 K {\lambda} \eta^3 \etabar^3 \sum_{s=(t-{\lambda})_+}^{t-1} \E \left\| \frac{1}{N} \sum_{i=1}^{N} \sum_{k=0}^{K-1} \bG_i^{s,k} \right\|_2^2,
				\label{eq:fval-diff'}
			\end{align}
			where the second inequality holds by using condition \eqref{eq:lr'}:
			\begin{align}
				&~\eta \etabar \leq \frac{2 \sigma^2 + G^2 K}{8 \sigma^2 L K \lambda + 4 G^2 L K^2 \lambda^2}
				\nonumber
				\\
				\Leftrightarrow&~
				\frac{2 \sigma^2 L K \hspace{-1mm}+\hspace{-1mm} L G^2 K^2}{n} \eta^2 \etabar^2
				\hspace{-1mm}+\hspace{-1mm} \frac{8 \sigma^2 L^2 K^2 \lambda \hspace{-1mm}+\hspace{-1mm} 4 L^2 G^2 K^3 {\lambda}^2}{n}  \eta^3 \etabar^3 
				\nonumber
				\\
				&~\leq \frac{4 \sigma^2 L K + 2 L G^2 K^2}{n} \eta^2 \etabar^2.
			\end{align}
			By summing inequality \eqref{eq:fval-diff'} over $t = 0, 1, \dots, T-1$, we obtain
			\begin{align}
				&\E [F(\bw^T)] - F[\bw^0] 
				\nonumber
				\\
				\leq& - \hspace{-1mm}\frac{K}{4} \eta \etabar \sum_{t=0}^{T-1} \E \| \nabla F(\bw^t) \|_2^2
				\hspace{-1mm}+\hspace{-1mm} \frac{4 \sigma^2 L K T \hspace{-1mm}+\hspace{-1mm} 2 L G^2 K^2 T}{n} \eta^2 \etabar^2
				\nonumber
				\\
				& + \hspace{-1mm}(\hspace{-0.5mm}\sigma^2\hspace{-1mm}+\hspace{-1mm}8K\nu^2) 2 L^2 K^2 T \eta \etabar^3 
				\hspace{-1mm}-\hspace{-1mm} \frac{1}{4K} \eta \etabar \hspace{-1mm}\sum_{t=0}^{T-1}\hspace{-1mm} \E \hspace{-0.5mm}\left\|\hspace{-0.5mm} \frac{1}{N} \sum_{i=1}^{N} \hspace{-1mm}\sum_{k=0}^{K-1}\hspace{-1mm} \bG_i^{t,k} \hspace{-0.5mm}\right\|_2^2 
				\nonumber
				\\
				&+ 4 L^2 K {\lambda} \eta^3 \etabar^3 \sum_{t=0}^{T-1} \sum_{s=(t-{\lambda})_+}^{t-1} \underbrace{\E \left\| \frac{1}{N} \sum_{i=1}^{N} \sum_{k=0}^{K-1} \bG_i^{s,k} \right\|_2^2}_{Y^s}
				\nonumber
				\\
				\leq& - \frac{K}{4} \eta \etabar \sum_{t=0}^{T-1} \E \| \nabla F(\bw^t) \|_2^2
				+ \frac{4 \sigma^2 L K T + 2 L G^2 K^2 T}{n} \eta^2 \etabar^2 
				\nonumber
				\\
				&+ (\sigma^2+8K\nu^2) 2 L^2 K^2 T \eta \etabar^3 
				\nonumber
				\\
				&- \left( \frac{1}{4K} \eta \etabar  - 4 L^2 K {\lambda}^2 \eta^3 \etabar^3 \right) \sum_{t=0}^{T-1} \E \left\| \frac{1}{N} \sum_{i=1}^{N} \sum_{k=0}^{K-1} \bG_i^{t,k} \right\|_2^2,
				\nonumber
				\\
				\leq& - \frac{K}{4} \eta \etabar \sum_{t=0}^{T-1} \E \| \nabla F(\bw^t) \|_2^2
				+ \frac{4 \sigma^2 L K T + 2 L G^2 K^2 T}{n} \eta^2 \etabar^2
				\nonumber
				\\
				&+ (\sigma^2+8K\nu^2) 2 L^2 K^2 T \eta \etabar^3,
				\nonumber
			\end{align}
			where the second inequality uses the fact that $\sum_{t=0}^{T-1} \sum_{s=(t-{\lambda})_+}^{t-1} Y^s \leq \sum_{t=0}^{T-2} {\lambda} Y^t \leq {\lambda} \sum_{t=0}^{T-1} Y^t$ and the last inequality holds because of condition \eqref{eq:lr}: 
			$$
			\eta \etabar \leq \frac{1}{4 L K {\lambda}} \Longleftrightarrow \frac{1}{4K} \eta \etabar - 4 L^2 K {\lambda}^2 \eta^3 \etabar^3 \geq 0.
			$$
			Rearranging and using the fact that $F^* \leq \E [F(\bw^T)] $, we have
			\begin{align}
				&\frac{1}{T} \sum_{t=0}^{T-1} \E \| \nabla F(\bw^t) \|_2^2
				\nonumber
				\\
				=& \frac{4(F(\bw^0)\hspace{-1mm}-\hspace{-1mm}F^*)}{KT\eta\etabar}
				\hspace{-1mm}+\hspace{-1mm} \frac{16 \sigma^2 L \hspace{-1mm}+\hspace{-1mm} 8 L G^2 K}{n} \eta \etabar 
				\hspace{-1mm}+\hspace{-1mm} (\sigma^2\hspace{-1mm}+\hspace{-1mm}8K\nu^2) 8 L^2 K \etabar^2
				\nonumber
				\\
				=&\hspace{-0.5mm} \sqrt{\hspace{-0.5mm}\frac{256 (\hspace{-0.5mm}F(\hspace{-0.5mm}\bw^0\hspace{-0.5mm})\hspace{-1mm}-\hspace{-1mm}F^*\hspace{-0.5mm})(\hspace{-0.5mm}16 \sigma^2 L \hspace{-1mm}+\hspace{-1mm} 8 L G^2 K\hspace{-0.5mm})}{n K T}} 
				\hspace{-1mm}+\hspace{-1mm} \frac{(\sigma^2 \hspace{-1mm}+\hspace{-1mm} 8K\nu^2) 8 L^2}{(\hspace{-0.5mm}4 \sigma^2 L \hspace{-1mm}+\hspace{-1mm} 2 L K G^2\hspace{-0.5mm}) K T},
				\label{eq:summing}
			\end{align}
			where the second equality follows by substituting $\eta = \sqrt{4 n K (F(\bw^0)-F^*)}$ and $\etabar = 1/\sqrt{(4 \sigma^2 L + 2 L K G^2)T}K$. 
			This completes the proof of Theorem \ref{thm:hetero}.
		\end{proof}

		\subsection{Proof of Theorem \ref{thm:iid}}\label{sec:proof1}
		\subsubsection{Technical Lemmas}
		\begin{lemma}\label{lem:inner-prod}
			Suppose that Assumptions \ref{as:smooth} and \ref{as:unbias} hold. Then, it hold for all $t \geq 0$ that
			\begin{align*}
				&~\E \langle \nabla F(\bw^t), \etabar K \nabla F(\bw^t) - \bm{g}_{\textnormal{IID}}^t \rangle
				\nonumber
				\\
				\leq&~ \frac{K}{2} \etabar \E \| \nabla F(\bw^t) \|_2^2 
				+ \frac{L^2}{n} \etabar \sum_{i \in {I}_t} \sum_{k=0}^{K-1} \E \| \bw_i^{\tauit,k} - \bw^{\tauit} \|_2^2
				\nonumber
				\\
				&+ \hspace{-0.5mm}\frac{L^2 K}{n} \etabar \sum_{i \in {I}_t} \hspace{-0.5mm}\E \| \bw^{\tauit} \hspace{-0.5mm}-\hspace{-0.5mm} \bw^t \|_2^2 
				\hspace{-0.5mm}-\hspace{-0.5mm} \frac{1}{2K} \etabar \E \left\| \frac{1}{n} \hspace{-0.5mm}\sum_{i \in {I}_t}\hspace{-0.5mm} \sum_{k=0}^{K-1} \hspace{-0.5mm}\bG_i^{t,k} \right\|_2^2\hspace{-0.5mm}.
			\end{align*}
		\end{lemma}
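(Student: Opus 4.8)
The plan is to follow the template of Lemma~\ref{lem:inner-prod-hetero}, but the IID regime brings two simplifications: the participating set $I_t$ is deterministic, so no expectation over client sampling (and no ``virtual'' terms) is needed, and $\nabla F_i \equiv \nabla F$ for every $i$, so in particular $\bG_i^{t,k}=\nabla F(\bw_i^{\tauit,k})$. First I would use the local-iteration identity $\bm{\Delta}_i^t = \sum_{k=0}^{K-1}\etabar\,\bg_i^{t,k}$, so that $\bm{g}_{\textnormal{IID}}^t = \frac{\etabar}{n}\sum_{i\in I_t}\sum_{k=0}^{K-1}\bg_i^{t,k}$ and hence
\begin{align*}
\E\big\langle\nabla F(\bw^t),\,\etabar K\nabla F(\bw^t)-\bm{g}_{\textnormal{IID}}^t\big\rangle
= \etabar\,\E\Big\langle\nabla F(\bw^t),\;K\nabla F(\bw^t)-\tfrac1n\sum_{i\in I_t}\sum_{k=0}^{K-1}\bg_i^{t,k}\Big\rangle .
\end{align*}

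Next I would pass to the conditional expectation. Let $\mathcal{F}$ be the sigma algebra generated by $\bw^t$ and $\{\bw_i^{\tauit,k}\}_{i\in I_t,\,0\le k\le K-1}$. Since $\bxi_i^{t,k}$ is drawn in round $t$, it is independent of $\mathcal{F}$, so Fact~\ref{fact:expectation} (applied coordinatewise) and Assumption~\ref{as:unbias} give $\E[\bg_i^{t,k}\mid\mathcal{F}]=\nabla F_i(\bw_i^{\tauit,k})=\bG_i^{t,k}$, whence the last display equals $\etabar\,\E\big\langle\nabla F(\bw^t),\,K\nabla F(\bw^t)-\tfrac1n\sum_{i\in I_t}\sum_{k=0}^{K-1}\bG_i^{t,k}\big\rangle$. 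Writing $K\nabla F(\bw^t)=\tfrac1n\sum_{i\in I_t}\sum_{k=0}^{K-1}\nabla F(\bw^t)$ and applying the identity $\langle\bx,\by\rangle=\tfrac12(\|\bx\|_2^2+\|\by\|_2^2-\|\bx-\by\|_2^2)$ with $\bx=\sqrt{K}\,\nabla F(\bw^t)$ and $\by=\tfrac{1}{\sqrt{K}\,n}\sum_{i\in I_t}\sum_{k=0}^{K-1}\bG_i^{t,k}$, this becomes
\begin{align*}
\frac{K}{2}\etabar\,\E\|\nabla F(\bw^t)\|_2^2
-\frac{1}{2K}\etabar\,\E\Big\|\tfrac1n\sum_{i\in I_t}\sum_{k=0}^{K-1}\bG_i^{t,k}\Big\|_2^2
+\frac{1}{2K}\etabar\,\underbrace{\E\Big\|\tfrac1n\sum_{i\in I_t}\sum_{k=0}^{K-1}\big(\bG_i^{t,k}-\nabla F(\bw^t)\big)\Big\|_2^2}_{X^t}.
\end{align*}

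It remains to bound $X^t$. I would write $\bG_i^{t,k}-\nabla F(\bw^t)=\big(\nabla F(\bw_i^{\tauit,k})-\nabla F(\bw^{\tauit})\big)+\big(\nabla F(\bw^{\tauit})-\nabla F(\bw^t)\big)$, apply Fact~\ref{fact:sum} with $m=2$ and then with $m=nK$ to move the squared norm inside the double sum, and finish with the $L$-smoothness of $F$ (Assumption~\ref{as:smooth}) together with $\sum_{k=0}^{K-1}\|\bw^{\tauit}-\bw^t\|_2^2=K\|\bw^{\tauit}-\bw^t\|_2^2$; this gives
\begin{align*}
X^t\le \frac{2L^2K}{n}\sum_{i\in I_t}\sum_{k=0}^{K-1}\E\|\bw_i^{\tauit,k}-\bw^{\tauit}\|_2^2
+\frac{2L^2K^2}{n}\sum_{i\in I_t}\E\|\bw^{\tauit}-\bw^t\|_2^2 .
\end{align*}
Substituting this into $\tfrac{1}{2K}\etabar\,X^t$ and combining with the first two terms yields exactly the claimed inequality. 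I do not anticipate a real obstacle; the only point that must be handled carefully is the conditioning step, where Assumption~\ref{as:unbias} is legitimate only because the sample $\bxi_i^{t,k}$ carries the round index $t$ and is therefore independent of both $\bw^t$ and the stale local iterate $\bw_i^{\tauit,k}$ --- this is precisely the ``Subtlety in Applying Assumption~\ref{as:unbias}'' discussed in Section~\ref{sec:theory}, and it is what distinguishes the present analysis from those of prior AFL methods.
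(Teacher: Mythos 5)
Your proof is correct and takes essentially the same route as the paper, whose own proof of this lemma merely sets up $\bm{g}_{\textnormal{IID}}^t = \frac{\etabar}{n}\sum_{i\in I_t}\sum_{k=0}^{K-1}\bg_i^{t,k}$ and then defers to the argument of Lemma~\ref{lem:inner-prod-hetero}; you have filled in those omitted details (deterministic $I_t$, $\nabla F_i\equiv\nabla F$, the inner-product identity, and the two-term decomposition of $X^t$) accurately. The only nitpick is that your single sigma algebra $\mathcal{F}$ contains iterates $\bw_i^{\tauit,k'}$ with $k'>k$ that depend on $\bxi_i^{t,k}$, so $\bxi_i^{t,k}$ is not independent of that $\mathcal{F}$; the unbiasedness step should instead be applied term by term, conditioning each $\bg_i^{t,k}$ only on $\sigma(\bw^t,\bw_i^{\tauit,k})$, exactly as in steps $(a)$--$(b)$ of the paper's proof of Lemma~\ref{lem:inner-prod-hetero}.
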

		\begin{proof}
			Using the local iteration formula of DeFedAvg-IID, we have $\bm{\Delta}_i^t = \bw_i^{\tauit,0} - \bw_i^{\tauit,K} = \sum_{k=0}^{K-1} \etabar \bg_i^{t,k}$, which implies that
			\begin{align}
				\bm{g}_{\textnormal{IID}}^t \coloneqq \frac{1}{|I_t|} \sum_{i \in I_t} \bm{\Delta}_i^t = \frac{\etabar}{n} \sum_{i \in {I}_t} \sum_{k=0}^{K-1} \bg_i^{t,k}.
				\label{eq:gt}
			\end{align}
			Therefore, we have 
			\begin{align}
				& \E \langle \nabla F(\bw^t),  \etabar K \nabla F(\bw^t) - \bm{g}_{\textnormal{IID}}^t \rangle
				\nonumber
				\\
				=& \etabar \E \hspace{-0.5mm}\left\langle\hspace{-1.5mm} \sqrt{K} \nabla F(\bw^t), \hspace{-0.5mm}\frac{1}{\sqrt{K} n}\hspace{-0.5mm} \sum_{i \in {I}_t} \hspace{-1mm} \left( \hspace{-1mm}K \nabla F(\bw^t) \hspace{-0.5mm}-\hspace{-1mm} \sum_{k=0}^{K-1}\hspace{-0.5mm} \bg_i^{t,k} \hspace{-0.5mm}\right)\hspace{-1.5mm} \right\rangle\hspace{-0.5mm}.\hspace{-0.5mm}
				\nonumber
			\end{align}
			The remaining can be proved using similar arguments as the proof of Lemma \ref{lem:inner-prod-hetero}, which is omitted for brevity.
		\end{proof}
		
		\begin{lemma}\label{lem:sumsum}
			Suppose that Assumptions \ref{as:unbias}, \ref{as:bounded-var}, and \ref{as:bounded-delay} hold. Then, it holds for all $t \geq 0$ that
			\begin{align}
				\E \left\| \sum_{j \in {I}_t} \sum_{k=0}^{K-1} \left( \bg_i^{t,k} - \bG_i^{t,k} \right) \right\|_2^2
				&\leq n K \sigma^2,
				\label{eq:lema} 
			\end{align}
			Besides, it holds for all $t \geq 1$ and $i \in {I}_t$ that
			\begin{align}
				\E \left\| \sum_{s=\tauit}^{t-1} \sum_{j \in {I}_s} \sum_{k=0}^{K-1} \left( \bg_j^{s,k} - \bG_j^{s,k} \right) \right\|_2^2 
				&\leq n K {\lambda} \sigma^2.
				\label{eq:lemb}
			\end{align}
		\end{lemma}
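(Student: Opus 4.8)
The plan is to mirror the proof of Lemma~\ref{lem:sumsum'}, exploiting the fact that in DeFedAvg-IID the index set $I_t$ is a set of $n$ \emph{distinct} clients fixed by the clients' completion order---which is governed by system timing and hence independent of the stochastic-gradient noise $\{\bxi_i^{t,k}\}$---rather than a multiset sampled with replacement. Consequently all the $\E_{\mathcal{I}_t}[\cdot]$ averaging steps (and the ensuing $\tfrac{n}{N}$, $\tfrac{n(n-1)}{N^2}$ weights) in the proof of Lemma~\ref{lem:sumsum'} disappear, and the argument collapses to a plain second-moment computation. As before, set $\bm{\phi}_j^t \coloneqq \sum_{k=0}^{K-1}(\bg_j^{t,k} - \bG_j^{t,k})$, so that the left-hand side of \eqref{eq:lema} is $\E\|\sum_{j \in I_t}\bm{\phi}_j^t\|_2^2 = \sum_{i,j \in I_t}\E\langle\bm{\phi}_i^t,\bm{\phi}_j^t\rangle$.

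First I would kill the off-diagonal terms. For distinct $i,j \in I_t$ and any $k,\ell$, conditioning on the sigma-algebra that contains $\bw_i^{\tauit,k}$, $\bw_j^{\tau_j(t),\ell}$ and $\bxi_i^{t,k}$ but leaves $\bxi_j^{t,\ell}$ free, Fact~\ref{fact:expectation} together with the unbiasedness Assumption~\ref{as:unbias} gives $\E\langle\bg_i^{t,k}-\bG_i^{t,k},\ \bg_j^{t,\ell}-\bG_j^{t,\ell}\rangle = 0$; the same reasoning (with $k<\ell$) handles the within-client cross terms, exactly as in \eqref{eq:crossterm=0} and \eqref{eq:sumk'}. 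Hence $\E\|\bm{\phi}_j^t\|_2^2 = \sum_{k=0}^{K-1}\E\|\bg_j^{t,k}-\bG_j^{t,k}\|_2^2 \le K\sigma^2$ by Assumption~\ref{as:bounded-var}, and summing over the $n$ clients of $I_t$ yields \eqref{eq:lema}.

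For \eqref{eq:lemb} I would expand $\E\|\sum_{s=\tauit}^{t-1}\sum_{j\in I_s}\bm{\phi}_j^s\|_2^2 = \sum_{\tauit \le s,r \le t-1}\E\langle\sum_{j\in I_s}\bm{\phi}_j^s,\sum_{j\in I_r}\bm{\phi}_j^r\rangle$. For $s<r$, every sample $\bxi_j^{r,k}$ is drawn in round $r$ and is therefore independent of all round-$s$ quantities as well as of $\bw^r$ and the iterates $\{\bw_j^{\tau_j(r),k}\}$; one more application of Fact~\ref{fact:expectation} and Assumption~\ref{as:unbias} makes each such cross term vanish, in the spirit of \eqref{eq:e}--\eqref{eq:ee}. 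The surviving diagonal gives $\sum_{s=\tauit}^{t-1}\E\|\sum_{j\in I_s}\bm{\phi}_j^s\|_2^2 \le |t-\tauit|\,nK\sigma^2 \le nK\lambda\sigma^2$, where the first inequality is \eqref{eq:lema} and the second is Assumption~\ref{as:bounded-delay}.

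The one point that deserves care---the ``main obstacle,'' such as it is---is the claim that $I_t$ carries no randomness correlated with the gradient noise, since $I_t$ is defined as the first $n$ clients to finish. I would justify this by noting that completion times are determined by the clients' (and server's) computation/communication speeds, which are system attributes independent of the sampled data; formally, the conditioning sigma-algebras used above can be taken to include $I_t$, so none of the cancellations are affected. With that in place the proof is strictly shorter than---and a special case in spirit of---Lemma~\ref{lem:sumsum'}, and the cleaner constant $nK\sigma^2$ (versus $2nK\sigma^2$ in \eqref{eq:lema-heter}) is precisely what one gains by not sampling with replacement.
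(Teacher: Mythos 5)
Your proposal is correct and follows essentially the same route as the paper: treat $I_t$ as non-random (so no replacement-sampling bookkeeping is needed), kill all cross terms—across distinct clients, across local steps, and across rounds—via Fact~\ref{fact:expectation} and Assumption~\ref{as:unbias}, and then apply Assumptions~\ref{as:bounded-var} and~\ref{as:bounded-delay} to the surviving diagonal, yielding exactly the constants $nK\sigma^2$ and $nK\lambda\sigma^2$. Your added justification that the completion-order set $I_t$ is independent of the gradient noise is a point the paper passes over with the bare remark that ``there is no randomness in $I_t$,'' but it does not change the argument.
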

		\begin{proof}
			The proof is similar to that of Lemma \ref{lem:sumsum'}, while there is no randomness in $I_t$. Following analogous arguments in the proof of Lemma \ref{lem:sumsum'}, we can obtain 
			\begin{align}
				&\E \left\| \sum_{j \in {I}_t} \sum_{k=0}^{K-1}  \left( \bg_i^{t,k} - \bG_i^{t,k} \right) \right\|_2^2
				\hspace{-0.5mm}=\hspace{-0.5mm}\sum_{j \in {I}_t} \E \left\| \sum_{k=0}^{K-1} \left( \bg_j^{t,k} - \bG_j^{t,k} \right)  \right\|_2^2
				\nonumber
				\\
				&= \sum_{j \in {I}_t} \sum_{k=0}^{K-1} \E \left\| \bg_j^{t,k} - \bG_j^{t,k} \right\|_2^2 \leq n K \sigma^2,
				\nonumber
			\end{align}
			and thus
			\begin{align}
				&\E \left\| \sum_{s=\tauit}^{t-1} \sum_{j \in {I}_s} \sum_{k=0}^{K-1} \left( \bg_j^{s,k} - \bG_j^{s,k} \right) \right\|_2^2 
				\nonumber
				\\
				&\hspace{-1mm}=\hspace{-2mm}\sum_{s=\tauit}^{t-1}\hspace{-1.5mm} \E \hspace{-0.5mm}\left\| \sum_{j \in {I}_s}\hspace{-1mm} \sum_{k=0}^{K-1}\hspace{-1.5mm} \left(\hspace{-0.5mm} \bg_j^{s,k} \hspace{-1.5mm}-\hspace{-0.5mm} \bG_j^{s,k} \hspace{-0.5mm}\right) \hspace{-0.5mm}\right\|_2^2 
				\hspace{-0.5mm}\leq\hspace{-0.5mm} \sum_{s=(t-{\lambda})_+}^{t-1}\hspace{-3mm} n K \sigma^2
				\hspace{-0.5mm}\leq\hspace{-0.5mm} n K {\lambda} \sigma^2,
				\nonumber
			\end{align}
			which complete the proof.
		\end{proof}
		
		\begin{lemma}\label{lem:grad-var}
			Suppose that Assumptions \ref{as:unbias}, \ref{as:bounded-var}, and \ref{as:bounded-delay} hold. Then, it hold for all $t \geq 0$ that
			\begin{align}
				\E \| \bm{g}_{\textnormal{IID}}^t \|_2^2 \leq 2 K \etabar^2 \frac{\sigma^2}{n} 
				+ 2\etabar^2 \E \left\| \frac{1}{n} \sum_{i \in {I}_t} \sum_{k=0}^{K-1} \bG_i^{t,k} \right\|_2^2.
			\end{align}
			\begin{proof}
				In view of \eqref{eq:gt}, we have
				\begin{align}
					& \E \| \bm{g}_{\textnormal{IID}}^t \|_2^2 
					= \E \left\| \frac{1}{n} \sum_{i \in {I}_t} \sum_{k=0}^{K-1} \etabar \bg_i^{t,k} \right\|_2^2 
					\nonumber
					\\
					&\hspace{-0.5mm}\stackrel{(a)}{\leq} \hspace{-0.5mm} 2 \E\hspace{-0.5mm} \left\| \hspace{-0.5mm}\frac{1}{n} \hspace{-0.5mm}\sum_{i \in {I}_t}\hspace{-0.5mm} \sum_{k=0}^{K-1} \hspace{-0.5mm}\etabar \hspace{-0.5mm}\left(\hspace{-0.5mm} \bg_i^{t,k} \hspace{-0.5mm}-\hspace{-1mm} \bG_i^{t,k} \right) \right\|_2^2
					\hspace{-1mm}+\hspace{-0.5mm} 2 \E \left\| \frac{1}{n} \sum_{i \in {I}_t} \sum_{k=0}^{K-1} \etabar \bG_i^{t,k} \right\|_2^2
					\nonumber
					\\
					&\stackrel{(b)}{\leq} 2 K \etabar^2 \frac{\sigma^2}{n} 
					+ 2\etabar^2 \E \left\| \frac{1}{n} \sum_{i \in {I}_t} \sum_{k=0}^{K-1} \bG_i^{t,k} \right\|_2^2.
					\label{eq:Delta-temp}
				\end{align}
				where $(a)$ uses Fact \ref{fact:sum} with $m=2$, $(b)$ follows from \eqref{eq:lema}.
			\end{proof}
		\end{lemma}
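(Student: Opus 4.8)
The plan is to decompose the aggregated update $\bm{g}_{\textnormal{IID}}^t$ into a stochastic-noise part and an expected-gradient part, bound each using Fact~\ref{fact:sum}, and then invoke the variance estimate already established in Lemma~\ref{lem:sumsum}. First I would recall from \eqref{eq:gt} that $\bm{g}_{\textnormal{IID}}^t = \frac{\etabar}{n} \sum_{i \in I_t} \sum_{k=0}^{K-1} \bg_i^{t,k}$, where $\bg_i^{t,k} = \nabla f(\bw_i^{\tauit,k}; \bxi_i^{t,k})$ is the stochastic gradient and $\bG_i^{t,k} = \nabla F_i(\bw_i^{\tauit,k})$ is its conditional mean. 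Writing $\bg_i^{t,k} = (\bg_i^{t,k} - \bG_i^{t,k}) + \bG_i^{t,k}$ inside the norm and applying Fact~\ref{fact:sum} with $m=2$ splits $\E\|\bm{g}_{\textnormal{IID}}^t\|_2^2$ into $\frac{2\etabar^2}{n^2}\,\E\|\sum_{i \in I_t}\sum_{k=0}^{K-1}(\bg_i^{t,k}-\bG_i^{t,k})\|_2^2$ plus $2\etabar^2\,\E\|\frac{1}{n}\sum_{i \in I_t}\sum_{k=0}^{K-1}\bG_i^{t,k}\|_2^2$; the second summand is already exactly the second term on the right-hand side of the claim.

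For the first summand, I would apply inequality~\eqref{eq:lema} of Lemma~\ref{lem:sumsum}, which gives $\E\|\sum_{i \in I_t}\sum_{k=0}^{K-1}(\bg_i^{t,k}-\bG_i^{t,k})\|_2^2 \le nK\sigma^2$; multiplying by $2\etabar^2/n^2$ produces the term $2K\etabar^2\sigma^2/n$, and adding the two pieces yields the stated bound. It is worth noting that, unlike the non-IID counterpart in Lemma~\ref{lem:grad-var'}, here the index set $I_t$ of the $n$ fastest clients carries no sampling randomness, so there is no need for a conditional-on-$\mathcal{I}_t$ expansion; the whole argument reduces to two uses of Fact~\ref{fact:sum} together with one invocation of Lemma~\ref{lem:sumsum}.

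Accordingly, for this particular lemma I do not anticipate a substantive obstacle: the real work is hidden inside Lemma~\ref{lem:sumsum}, whose proof uses Fact~\ref{fact:expectation} (the tower property) and Assumption~\ref{as:unbias} to kill every cross term — across distinct clients in $I_t$ and across distinct local steps $k \ne \ell$ — crucially relying on the fact that $\bxi_i^{t,k}$ is sampled in round $t$ and is therefore independent of $\bw_i^{\tauit,k}$, so that $\E[\bg_i^{t,k}-\bG_i^{t,k} \mid \mathcal{F}]$ vanishes for the appropriate sigma algebra $\mathcal{F}$; once the cross terms drop, Assumption~\ref{as:bounded-var} gives the per-term bound $\sigma^2$ and hence the $nK\sigma^2$ estimate. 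The only thing to watch at the level of the present lemma is the bookkeeping of the $1/n^2$ versus $1/n$ normalisation when passing from $\sum_{i \in I_t}$ to $\frac{1}{n}\sum_{i \in I_t}$.
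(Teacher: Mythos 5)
Your proposal matches the paper's proof exactly: decompose $\bm{g}_{\textnormal{IID}}^t$ via \eqref{eq:gt} into the noise part $\bg_i^{t,k}-\bG_i^{t,k}$ and the mean part $\bG_i^{t,k}$, apply Fact~\ref{fact:sum} with $m=2$, and bound the noise term with \eqref{eq:lema} of Lemma~\ref{lem:sumsum} to obtain $2K\etabar^2\sigma^2/n$. The normalisation bookkeeping you flag ($2\etabar^2/n^2$ times $nK\sigma^2$) is handled identically in the paper, so there is nothing further to add.
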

		
		\begin{lemma}\label{lem:local-iter-diff}
			Suppose that Assumption \ref{as:smooth}, \ref{as:unbias}, and \ref{as:bounded-var} hold and $\etabar \leq \frac{1}{2\sqrt{2}LK}$. Then, it holds for all $t \geq 0$ that
			\begin{align*}
				&\frac{1}{n} \sum_{i \in {I}_t} \sum_{k=0}^{K-1} \E \| \bw_i^{\tauit,k} - \bw^{\tauit} \|_2^2
				\leq 2 K^2 \etabar^2 \sigma^2 
				\nonumber
				\\
				&+\hspace{-0.5mm} 12 K^3 \etabar^2  \E \| \nabla F(\bw^t) \|_2^2 
				\hspace{-0.5mm}+\hspace{-0.5mm} \frac{12 L^2 K^3}{n} \etabar^2 \sum_{i \in {I}_t} \E \| \bw^{\tauit} \hspace{-0.5mm}-\hspace{-0.5mm} \bw^t \|_2^2.
			\end{align*}
			\begin{proof}
				The proof is similar to that of Lemma \ref{lem:local-iter-diff‘}. We simply need to change \eqref{eq:lem:local-iter-diff‘1} and \eqref{eq:recur'} as 
				\begin{align}
					&\E \| \bw_i^{\tauit,k} - \bw^{\tauit} \|_2^2
					\nonumber
					\\
					=& \E \Big\| \bw_i^{\tauit,k\hspace{-0.5mm}-\hspace{-0.5mm}1} \hspace{-1.5mm}-\hspace{-1mm} \bw^{\tauit} \hspace{-1mm}-\hspace{-1mm} \etabar\Big(\hspace{-1mm} \bG_i^{t,k\hspace{-0.5mm}-\hspace{-0.5mm}1} \hspace{-1mm}-\hspace{-1mm} \nabla\hspace{-0.5mm}F(\bw^{\tauit}) \hspace{-1mm}+\hspace{-1mm} \nabla \hspace{-0.5mm} F(\bw^{\tauit}) 
					\nonumber
					\\ 
					& - \nabla F(\bw^t) + \nabla F(\bw^t) \Big) \Big\|_2^2 
					+ \etabar^2 \E \| \bg_i^{t,k-1} - \bG_i^{t,k-1} \|_2^2
					\nonumber
				\end{align}
				and
				\begin{align}
					&\E \| \bw_i^{\tauit,k} \hspace{-1.5mm}-\hspace{-0.5mm} \bw^{\tauit} \|_2^2
					\hspace{-0.5mm}\leq\hspace{-1mm} \left(\hspace{-1mm} 1 \hspace{-0.5mm}+\hspace{-0.5mm} \frac{1}{K\hspace{-0.5mm}-\hspace{-0.5mm}1}\hspace{-0.5mm}\right) \hspace{-0.5mm}\E \| \bw_i^{\tauit,k-1} \hspace{-1.5mm}-\hspace{-0.5mm} \bw^{\tauit} \|_2^2
					\nonumber
					\\
					&+ 6 L^2 K \etabar^2 \E \| \bw^{\tauit} \hspace{-0.5mm}-\hspace{-1mm} \bw^t \|_2^2
					\hspace{-0.5mm}+\hspace{-0.5mm} 6 K \etabar^2  \E \| \nabla F(\bw^t) \|_2^2 
					\hspace{-0.5mm}+\hspace{-0.5mm} \etabar^2 \sigma^2\hspace{-0.5mm},\hspace{-0.5mm}
					\nonumber
				\end{align}
				respectively. The remaining proof is omitted for brevity.
			\end{proof}
		\end{lemma}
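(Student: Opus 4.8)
The plan is to follow the proof of Lemma~\ref{lem:local-iter-diff‘} almost verbatim, exploiting the IID simplification $F_i\equiv F$ (so $\bG_i^{t,k}=\nabla F(\bw_i^{\tauit,k})$), which removes the heterogeneity term and turns a four-way gradient split into a three-way one. First I would start from the local update $\bw_i^{\tauit,k}=\bw_i^{\tauit,k-1}-\etabar\bg_i^{t,k-1}$, expand the square, and insert the true gradient by writing $\bg_i^{t,k-1}=\bG_i^{t,k-1}+(\bg_i^{t,k-1}-\bG_i^{t,k-1})$. Conditioning on the sigma algebra generated by $\bw^{\tauit}$ and the samples $\bxi_i^{t,0},\dots,\bxi_i^{t,k-2}$ --- with respect to which $\bw_i^{\tauit,k-1}$ is measurable while $\bxi_i^{t,k-1}$ is independent of it --- Fact~\ref{fact:expectation} together with Assumption~\ref{as:unbias} makes the cross term vanish, and Assumption~\ref{as:bounded-var} bounds the residual noise by $\etabar^2\sigma^2$. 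This is the first displayed identity in the lemma's sketch.

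Next, inside the remaining deterministic term I would add and subtract $\nabla F(\bw^{\tauit})$ and $\nabla F(\bw^t)$, so that the gradient part telescopes to $\bG_i^{t,k-1}=(\bG_i^{t,k-1}-\nabla F(\bw^{\tauit}))+(\nabla F(\bw^{\tauit})-\nabla F(\bw^t))+\nabla F(\bw^t)$. Applying $\|\bx+\by\|_2^2\le(1+\tfrac{1}{2K-1})\|\bx\|_2^2+2K\|\by\|_2^2$ with $\bx=\bw_i^{\tauit,k-1}-\bw^{\tauit}$, then Fact~\ref{fact:sum} with $m=3$ on the three summands, and finally $L$-smoothness (Assumption~\ref{as:smooth}) to turn $\|\bG_i^{t,k-1}-\nabla F(\bw^{\tauit})\|_2^2$ and $\|\nabla F(\bw^{\tauit})-\nabla F(\bw^t)\|_2^2$ into $L^2\|\bw_i^{\tauit,k-1}-\bw^{\tauit}\|_2^2$ and $L^2\|\bw^{\tauit}-\bw^t\|_2^2$, gives a one-step recursion. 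The step-size condition $\etabar\le\tfrac{1}{2\sqrt2 LK}$ is used to absorb the $6L^2K\etabar^2$ coefficient so that $1+\tfrac{1}{2K-1}+6L^2K\etabar^2\le1+\tfrac{1}{K-1}$, yielding $Z_i^{t,k}\le a\,Z_i^{t,k-1}+b$ with $Z_i^{t,k}:=\E\|\bw_i^{\tauit,k}-\bw^{\tauit}\|_2^2$, $a:=1+\tfrac{1}{K-1}$, and $b:=\etabar^2\sigma^2+6K\etabar^2\E\|\nabla F(\bw^t)\|_2^2+6L^2K\etabar^2\E\|\bw^{\tauit}-\bw^t\|_2^2$ --- exactly the second displayed inequality in the sketch.

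Since $Z_i^{t,0}=0$, unrolling gives $Z_i^{t,k}\le\frac{a^k-1}{a-1}b=(K-1)\big((1+\tfrac{1}{K-1})^{k}-1\big)b\le2(K-1)b$, using $(1+\tfrac{1}{K-1})^{K-1}\le e\le3$. Summing over $k=0,\dots,K-1$ and averaging over the $n$ clients in $I_t$ then yields $\frac{1}{n}\sum_{i\in I_t}\sum_{k=0}^{K-1}Z_i^{t,k}\le\frac{1}{n}\sum_{i\in I_t}2K(K-1)b$ (here $b$ depends on $i$ only through its last term), and substituting $b$ together with $2K(K-1)\le2K^2$ gives the stated bound with constants $2K^2\sigma^2$, $12K^3$, and $12L^2K^3$. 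I expect the only delicate point to be the very first step --- pinning down the conditioning sigma algebra so that Assumptions~\ref{as:unbias} and \ref{as:bounded-var} legitimately apply to $\bg_i^{t,k-1}-\bG_i^{t,k-1}$ despite the stale initialization $\bw^{\tauit}$; the paper's convention of indexing the fresh sample $\bxi_i^{t,k}$ by the current round $t$ is exactly what makes it independent of $\bw_i^{\tauit,k-1}$, after which everything is the same bookkeeping as in Lemma~\ref{lem:local-iter-diff‘}. It is also worth noting that $I_t$ enters only as a deterministic index set, so no expectation over client participation is taken here.
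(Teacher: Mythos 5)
Your proposal is correct and follows exactly the route the paper sketches for this lemma: the same three-way gradient decomposition (the heterogeneity term drops since $F_i\equiv F$), the same one-step recursion $Z_i^{t,k}\le a\,Z_i^{t,k-1}+b$ with $b=\etabar^2\sigma^2+6K\etabar^2\E\|\nabla F(\bw^t)\|_2^2+6L^2K\etabar^2\E\|\bw^{\tauit}-\bw^t\|_2^2$, and the same unrolling via $(1+\tfrac{1}{K-1})^{K-1}\le 3$, with the final constants $2K^2\sigma^2$, $12K^3$, $12L^2K^3$ recovered after $2K(K-1)\le 2K^2$. The one quibble --- inherited from the paper's own sketch rather than introduced by you --- is that the absorption $1+\tfrac{1}{2K-1}+6L^2K\etabar^2\le 1+\tfrac{1}{K-1}$ under the lemma's stated hypothesis $\etabar\le\tfrac{1}{2\sqrt{2}LK}$ (which only gives $6L^2K\etabar^2\le\tfrac{3}{4K}$) actually fails for $K\ge 5$; it does hold under the stronger condition $\etabar\le\tfrac{1}{4\sqrt{3}LK}$ imposed in Theorem~\ref{thm:iid}, so the standalone hypothesis should be tightened to that (or the contraction factor loosened at the cost of slightly larger constants), but this does not affect the theorem.
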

		
		\begin{lemma}\label{lem:global-iter-diff}
			Suppose that Assumptions \ref{as:unbias}, \ref{as:bounded-var}, and \ref{as:bounded-delay} hold. Then, it holds for all $t \geq 0$ and $i \in {I}_t$ that
			\begin{align}
				&\E \| \bw^t - \bw^{\tauit} \|_2^2 
				\leq 2 K {\lambda} \eta^2 \etabar^2 \frac{\sigma^2}{n}
				\nonumber
				\\
				&+ 2 {\lambda} \eta^2 \etabar^2 \sum_{s=(t-{\lambda})_+}^{t-1} \E \left\| \frac{1}{n} \sum_{j \in {I}_s} \sum_{k=0}^{K-1} \bG_j^{s,k} \right\|_2^2.
				\nonumber
			\end{align}
		\end{lemma}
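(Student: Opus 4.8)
The plan is to run the same two-case argument used in the proof of Lemma~\ref{lem:global-iter-diff'}, noting that the absence of randomness in the index set $I_t$ makes the bookkeeping considerably lighter. Fix $t\geq 0$ and $i\in I_t$. If $\tauit = t$, then $\bw^t - \bw^{\tauit} = \bm{0}$ and the claimed inequality holds trivially; it therefore remains to treat the case $\tauit < t$.

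In that case, combining the local update rule \eqref{eq:local-iter'} with the global update $\bw^{s+1} = \bw^s - \eta\bm{g}_{\textnormal{IID}}^s$ and \eqref{eq:gt} gives, for every $s\in[\tauit,t-1]$,
\[
\bw^{s+1} - \bw^s = -\frac{\eta\etabar}{n}\sum_{j\in I_s}\sum_{k=0}^{K-1}\bg_j^{s,k}.
\]
Telescoping over $s$, inserting $\pm\bG_j^{s,k}$ inside each summand, and applying Fact~\ref{fact:sum} with $m=2$, I would split $\E\|\bw^t - \bw^{\tauit}\|_2^2$ into a noise part $\tfrac{2\eta^2\etabar^2}{n^2}\,\E\bigl\|\sum_{s=\tauit}^{t-1}\sum_{j\in I_s}\sum_{k=0}^{K-1}(\bg_j^{s,k}-\bG_j^{s,k})\bigr\|_2^2$ and a mean-gradient part $\tfrac{2\eta^2\etabar^2}{n^2}\,\E\bigl\|\sum_{s=\tauit}^{t-1}\sum_{j\in I_s}\sum_{k=0}^{K-1}\bG_j^{s,k}\bigr\|_2^2$.

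For the noise part I would invoke \eqref{eq:lemb} of Lemma~\ref{lem:sumsum} directly, which yields the bound $\tfrac{2\eta^2\etabar^2}{n^2}\cdot nK\lambda\sigma^2 = 2K\lambda\eta^2\etabar^2\sigma^2/n$, i.e.\ the first term of the claim. For the mean-gradient part I would apply Fact~\ref{fact:sum} once more, this time with $m = |t-\tauit|$, to pull the sum over $s$ outside the squared norm at the cost of the factor $|t-\tauit|$; since $\tauit\geq(t-\lambda)_+$ by Assumption~\ref{as:bounded-delay}, we have $|t-\tauit|\leq\lambda$ and $[\tauit,t-1]\subseteq[(t-\lambda)_+,t-1]$, which turns this part into $2\lambda\eta^2\etabar^2\sum_{s=(t-\lambda)_+}^{t-1}\E\bigl\|\tfrac{1}{n}\sum_{j\in I_s}\sum_{k=0}^{K-1}\bG_j^{s,k}\bigr\|_2^2$ (using $\tfrac{1}{n^2}\|\sum_{j\in I_s}\cdot\|_2^2 = \|\tfrac1n\sum_{j\in I_s}\cdot\|_2^2$). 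Adding the two parts and combining with the $\tauit=t$ case completes the argument.

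The proof is essentially routine given the upstream lemmas; the one point that distinguishes it from Lemma~\ref{lem:global-iter-diff'} is that there is no need to further expand $\E\|\tfrac{1}{n}\sum_{j\in I_s}\sum_k\bG_j^{s,k}\|_2^2$ via \eqref{eq:Esumsum} or the bounded-gradient Assumption~\ref{as:bounded-grad}: in the IID setting this term is left intact and is absorbed later in the descent recursion of Theorem~\ref{thm:iid}. Consequently the only genuinely delicate ingredient, namely the cancellation of the cross-round inner products $\E\langle\sum_{j\in I_s}\bm{\phi}_j^s,\sum_{j\in I_r}\bm{\phi}_j^r\rangle$ for $s\neq r$ under Assumption~\ref{as:unbias}, has already been handled inside Lemma~\ref{lem:sumsum} and is simply reused here.
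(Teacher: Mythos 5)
Your proposal is correct and follows exactly the route the paper takes: the paper's own proof of this lemma is stated as ``a simple adjustment of \eqref{eq:delayiter}'' using the deterministic update $\bw^{s+1} = \bw^s - \frac{\eta\etabar}{n}\sum_{j\in I_s}\sum_{k=0}^{K-1}\bg_j^{s,k}$ together with Lemma \ref{lem:sumsum}, which is precisely your telescoping, $\pm\bG_j^{s,k}$ split, and application of \eqref{eq:lemb} plus Fact \ref{fact:sum} with $m=|t-\tauit|\leq\lambda$. Your closing remark about not expanding the mean-gradient term via \eqref{eq:Esumsum} or Assumption \ref{as:bounded-grad} is also consistent with the stated form of the lemma.
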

		\begin{proof}
			The proof is a simple adjustment of \eqref{eq:delayiter} by employing  $\bw^{s+1} = \bw^s - \frac{\eta}{|I_s|} \sum_{j \in I_s} \etabar \sum_{k=0}^{K-1} \bg_j^{s,k}$ and Lemma \ref{lem:sumsum}.
			The details are omitted for brevity.
		\end{proof}
		
		\subsubsection{Putting Ingredients Together}
		\begin{proof}
			\begin{align}
				&\E [F(\bw^{t+1})] \hspace{-1mm}-\hspace{-0.5mm} \E [F(\bw^t)] 
				\hspace{-1mm}\leq\hspace{-1mm} - \frac{K}{4} \eta \etabar \E \| \nabla F(\bw^t) \|_2^2 
				\hspace{-1mm}+\hspace{-1mm} 2 L K \eta^2 \etabar^2 \frac{\sigma^2}{n}
				\nonumber
				\\
				&\quad + 2 L^2 K^2 \eta \etabar^3 \sigma^2
				- \frac{1}{4K} \eta \etabar \E \left\| \frac{1}{n} \sum_{i \in {I}_t} \sum_{k=0}^{K-1} \bG_i^{t,k} \right\|_2^2 
				\nonumber
				\\
				&\quad + 4 L^2 K {\lambda} \eta^3 \etabar^3 \sum_{s=(t-{\lambda})_+}^{t-1} \E \left\| \frac{1}{n} \sum_{i \in {I}_s} \sum_{k=0}^{K-1} \bG_i^{s,k} \right\|_2^2,
				\label{eq:fval-diff}
			\end{align}
			Further using similar arguments for proving \eqref{eq:summing} and the learning rate conditions \eqref{eq:lr}, we have
			\begin{align}
				\frac{1}{T} \hspace{-1mm}\sum_{t=0}^{T-1}\hspace{-0.5mm} \E \| \hspace{-0.5mm}\nabla \hspace{-0.5mm}F(\bw^t) \|_2^2
				&\hspace{-1mm}=\hspace{-1mm} \frac{4(\hspace{-0.5mm}F(\hspace{-0.5mm}\bw^0\hspace{-0.5mm})\hspace{-0.5mm}-\hspace{-0.5mm}F^*\hspace{-0.5mm})}{KT\eta\etabar}
				\hspace{-0.5mm}+\hspace{-0.5mm} 8 L \eta \etabar \frac{\sigma^2}{n}
				\hspace{-0.5mm}+\hspace{-0.5mm} 8 L^2 K \etabar^2 \sigma^2
				\nonumber
				\\
				&= \sqrt{\frac{128 L(F(\bw^0)-F^*)}{n K T}} + \frac{8L}{K T},
				\nonumber
			\end{align}
			where the second equality holds since $\eta = \sqrt{(F(\bw^0)-F^*) n K / 2}$ and $\etabar = 1/(\sqrt{\sigma^2 LT}K)$. 
			This completes the proof of Theorem \ref{thm:iid}.
		\end{proof}

		\section{Values of Global and Local Learning Rates}
		We provide the best-tuned values of the global and local learning rates used in our experiments, as shown in Table \ref{tab:learning_rate}.
		
		\begin{table*}[t]
			\centering
			\resizebox{\textwidth}{!}{
				\renewcommand\arraystretch{1.5}
				\begin{tabular}{cccccccccccc}
					\hline \hline
					\multirow{2}{*}{} & \multirow{2}{*}{Alogorithms}        & \multicolumn{5}{c}{FashionMNIST}                                                          & \multicolumn{5}{c}{CIFAR-10}                                                              \\ \cmidrule(lr){3-7} \cmidrule(lr){8-12}
					&                                    & $n=5$ & $n=10$                 & $n=20$                & $n=40$                 & $n=80$                 & $n=5$ & $n=10$                 & $n=20$                 & $n=40$                 & $n=80$                 \\ \hline \specialrule{0em}{1.0pt}{1.0pt}
					\multirow{8}{*}{IID}      & \multirow{2}{*}{FedAvg}         &\multirow{2}{*}{N/A} &      ${\eta}=1.00$                &       ${\eta}=1.00$                &         ${\eta}=1.00$             &     ${\eta}=1.00$        &\multirow{2}{*}{N/A}         &          ${\eta}=1.00$                &    ${\eta}=1.00$                   &       ${\eta}=1.00$                 &          ${\eta}=1.00$              \\ 
					&   &                                     & \multicolumn{1}{c}{$\bar{\eta}=0.10$} & \multicolumn{1}{c}{$\bar{\eta}=0.01$} & \multicolumn{1}{c}{$\bar{\eta}=0.01$} & \multicolumn{1}{c}{$\bar{\eta}=0.01$} & & \multicolumn{1}{c}{$\bar{\eta}=0.01$} &  \multicolumn{1}{c}{$\bar{\eta}=0.01$} & \multicolumn{1}{c}{$\bar{\eta}=0.01$} & \multicolumn{1}{c}{$\bar{\eta}=0.01$} \\ 
					& \multirow{2}{*}{FAVANO}  &          \multirow{2}{*}{N/A}  &       \multirow{2}{*}{$\bar{\eta}=0.10$}               &         \multirow{2}{*}{$\bar{\eta}=0.10$}               &     \multirow{2}{*}{$\bar{\eta}=0.10$}                   &          \multirow{2}{*}{$\bar{\eta}=0.10$}      & \multirow{2}{*}{N/A}       &     \multirow{2}{*}{$\bar{\eta}=0.05$}                &        \multirow{2}{*}{$\bar{\eta}=0.05$}              &    \multirow{2}{*}{$\bar{\eta}=0.05$}                  &      \multirow{2}{*}{$\bar{\eta}=0.05$}                \\
					&                                     &  &  & &  &  &  &  &  \\
					& \multirow{2}{*}{AsySG}  &          \multirow{2}{*}{N/A}  &       \multirow{2}{*}{$\eta=0.10$}               &         \multirow{2}{*}{$\eta=0.10$}               &     \multirow{2}{*}{$\eta=0.10$}                   &          \multirow{2}{*}{$\eta=0.10$}      & \multirow{2}{*}{N/A}       &     \multirow{2}{*}{$\eta=0.10$}                &        \multirow{2}{*}{$\eta=0.10$}              &    \multirow{2}{*}{$\eta=0.10$}                  &      \multirow{2}{*}{$\eta=0.10$}                \\
					&                                     &  &  & &  &  &  &  &  \\
					& \multirow{2}{*}{DeFedAvg-IID}         &      $\eta=0.10$ &      $\eta=0.10$                &         $\eta=0.10$             &    $\eta=0.10$                  &   $\eta=1.00$                   & $\eta=0.10$  &       $\eta=1.00$                    &         $\eta=0.10$              &    $\eta=0.10$                   &          $\eta=1.00$             \\
					&                                     & \multicolumn{1}{c}{$\bar{\eta}=0.05$}& \multicolumn{1}{c}{$\bar{\eta}=0.05$} & \multicolumn{1}{c}{$\bar{\eta}=0.05$} & \multicolumn{1}{l}{$\bar{\eta}=0.05$} & \multicolumn{1}{c}{$\bar{\eta}=0.10$}  & \multicolumn{1}{l}{$\bar{\eta}=0.05$} & \multicolumn{1}{l}{$\bar{\eta}=0.05$} & \multicolumn{1}{c}{$\bar{\eta}=0.01$} & \multicolumn{1}{c}{$\bar{\eta}=0.05$} & \multicolumn{1}{c}{$\bar{\eta}=0.01$}  \\ \hline
					\multirow{8}{*}{non-IID}  & \multirow{2}{*}{FedAvg} &  \multirow{2}{*}{N/A} &        $\eta=0.10$            &        $\eta=0.10$              &           $\eta=1.00$           &       $\eta=0.10$               &  \multirow{2}{*}{N/A}  &    $\eta=1.00$              &       $\eta=1.00$               &            $\eta=0.10$          &       $\eta=0.10$               \\ &
					&                                     & \multicolumn{1}{c}{$\bar{\eta}=0.05$} & \multicolumn{1}{c}{$\bar{\eta}=0.05$} & \multicolumn{1}{c}{$\bar{\eta}=0.01$} & \multicolumn{1}{c}{$\bar{\eta}=0.10$} &  &\multicolumn{1}{c}{$\bar{\eta}=0.01$} & \multicolumn{1}{c}{$\bar{\eta}=0.01$} & \multicolumn{1}{c}{$\bar{\eta}=0.05$} & \multicolumn{1}{c}{$\bar{\eta}=0.05$} \\
					& \multirow{2}{*}{FAVANO}  &          \multirow{2}{*}{N/A}  &       \multirow{2}{*}{$\bar{\eta}=0.05$}               &         \multirow{2}{*}{$\bar{\eta}=0.05$}               &     \multirow{2}{*}{$\bar{\eta}=0.05$}                   &          \multirow{2}{*}{$\bar{\eta}=0.05$}      & \multirow{2}{*}{N/A}       &     \multirow{2}{*}{$\bar{\eta}=0.05$}                &        \multirow{2}{*}{$\bar{\eta}=0.05$}              &    \multirow{2}{*}{$\bar{\eta}=0.05$}                  &      \multirow{2}{*}{$\bar{\eta}=0.05$}                \\
					&                                     &  &  & &  &  &  &  &  \\
					& \multirow{2}{*}{FedBuff}           &\multirow{2}{*}{N/A} &     $\eta=0.10$                 &        $\eta=0.10$              &  $\eta=0.10$                    &   $\eta=0.10$                   &  \multirow{2}{*}{N/A} &             $\eta=0.10$        &    $\eta=0.10$                  &            $\eta=0.10$          &     $\eta=0.10$                 \\
					&       &                              & \multicolumn{1}{c}{$\bar{\eta}=0.005$} & \multicolumn{1}{c}{$\bar{\eta}=0.01$} & \multicolumn{1}{c}{$\bar{\eta}=0.01$} & \multicolumn{1}{c}{$\bar{\eta}=0.05$} & &\multicolumn{1}{c}{$\bar{\eta}=0.01$} & \multicolumn{1}{c}{$\bar{\eta}=0.01$} & \multicolumn{1}{c}{$\bar{\eta}=0.01$} & \multicolumn{1}{c}{$\bar{\eta}=0.05$} \\
					& \multirow{2}{*}{DeFedAvg-nIID}      & $\eta=0.10$  &        $\eta=0.10$              &         $\eta=0.10$             &  $\eta=0.10$                    &           $\eta=0.10$           &  $\eta=0.10$ & ${\eta}=0.10$                    &           ${\eta}=0.10$           &     ${\eta}=0.10$                 &      ${\eta}=0.10$                \\
					&                                   &\multicolumn{1}{c}{$\bar{\eta}=0.05$} & \multicolumn{1}{c}{$\bar{\eta}=0.05$} & \multicolumn{1}{c}{$\bar{\eta}=0.05$} & \multicolumn{1}{l}{$\bar{\eta}=0.05$} & \multicolumn{1}{l}{$\bar{\eta}=0.05$ } & \multicolumn{1}{l}{$\bar{\eta}=0.05$} & \multicolumn{1}{l}{$\bar{\eta}=0.05$} & \multicolumn{1}{l}{$\bar{\eta}=0.10$} & \multicolumn{1}{l}{$\bar{\eta}=0.05$} & \multicolumn{1}{l}{$\bar{\eta}=0.05$} \\ \hline \hline
			\end{tabular}}
			\caption{The best-tuned global learning rate $\eta$ and local learning rate $\bar{\eta}$ for generating the results in Table \ref{tab:main_experiments}.}
			\label{tab:learning_rate}
		\end{table*}
		
	\end{document}